\documentclass{article}

\usepackage[preprint]{neurips_2026}

\usepackage{microtype}

\usepackage{algorithm, algorithmicx, algpseudocode}
\usepackage{amssymb}
\usepackage{amsmath}
\usepackage{graphicx}
\usepackage{mathtools}
\usepackage{needspace} 
\usepackage{multicol}
\usepackage{listings}
\usepackage{amsthm}
\usepackage[shortlabels]{enumitem}
\usepackage[italicdiff]{physics}
\usepackage{cancel}
\usepackage[dvipsnames,table]{xcolor}
\usepackage{verbatim}
\usepackage{comment}
\usepackage{array}
\usepackage{multirow}
\usepackage{tikz-cd}
\usepackage{import}
\usepackage{booktabs}
\usepackage{tabularx}
\usepackage[colorlinks, linktoc=page, linkcolor=MidnightBlue, citecolor=blue]{hyperref}
\usepackage{forest}
\usepackage{float}
\usepackage{xspace}
\usepackage{aliascnt}
\usepackage{wrapfig}
\usepackage{thm-restate}
\usepackage{adjustbox}
\usepackage{bm}
\usepackage{bbm}
\usepackage{complexity}
\usepackage{makecell}
\usepackage{nicefrac}
\usepackage{mathrsfs}
\usepackage[most]{tcolorbox}
\usepackage[capitalize]{cleveref}
\usetikzlibrary{calc, positioning, external, arrows.meta}

\usepackage{booktabs}      
\usepackage{tabularray}    
\usepackage{pifont}        

\usepackage{pgfplots}
\usepgfplotslibrary{groupplots}
\pgfplotsset{compat=1.18}

\allowdisplaybreaks

\let\E\relax

\DeclareMathOperator*{\argmax}{argmax}
\DeclareMathOperator*{\E}{\mathbb E}

\let\cite\citep

\newfloat{protocol}{tbp}{lop}
\floatname{protocol}{Protocol}

\newtheorem{theorem}{Theorem}
\numberwithin{theorem}{section}
\newtheorem*{theorem*}{Theorem}

\newtheorem{lemma}[theorem]{Lemma}

\newtheorem*{proposition*}{Proposition}
\newtheorem{corollary}[theorem]{Corollary}

\theoremstyle{definition}
\newtheorem{definition}[theorem]{Definition}

\newtheorem{ass}[theorem]{Assumption}

\newcommand{\ind}[1]{\mathbbm{I}\qty{#1}}

\newcommand{\Prob}{\mathbb P}
\newcommand{\diff}{\text{d}}

\newcommand{\rX}{x}
\newcommand{\rY}{y}
\newcommand{\fwithin}{f_\star}
\newcommand{\inpdist}{\Ds}
\newcommand{\halt}{\mathbbm{T}}

\newcommand{\As}{\mathcal A}
\newcommand{\Bs}{\mathcal B}

\newcommand{\Ds}{\mathcal D}

\newcommand{\Fs}{\mathcal F}

\newcommand{\Hs}{\mathcal H}

\newcommand{\Os}{\mathcal O}

\newcommand{\Xs}{\mathcal X}

\definecolor{C0}{RGB}{0, 114, 178}
\definecolor{C1}{RGB}{213, 94, 0}
\definecolor{C2}{RGB}{0, 158, 115}
\definecolor{C3}{RGB}{128, 128, 128}

\newcommand{\defpoly}{\bm{\textcolor{C0}{\mathsf{D2}}}}
\newcommand{\defeff}{\bm{\textcolor{C1}{\mathsf{D1}}}}
\newcommand{\defpolysub}{\bm{\textcolor{C0}{\mathsf{D2+}}}}
\newcommand{\defbounded}{\bm{\textcolor{C3}{\mathsf{D3}}}}

\usepackage{todonotes}

\title{Sharper Guarantees for Misspecified Kernelized Bandit Optimization}

\author{%
  Davide Maran \\
  Politecnico di Milano\\
  \texttt{davide.maran@polimi.it} \\
  \And
  Csaba Szepesv\'ari\\
University of Alberta\\
  \texttt{szepesva@ualberta.ca} \\
}

\begin{document}

\maketitle

\begin{abstract}
Existing guarantees for misspecified kernelized bandit optimization pay for misspecification through kernel complexity: 
in generic offline bounds, the misspecification level $\varepsilon$ is multiplied by $\sqrt{d_\mathrm{eff}}$, where $d_\mathrm{eff}$ is the kernel effective dimension, while in online regret bounds, the corresponding penalty is $\sqrt{\gamma_n}\,n\varepsilon$, where $\gamma_n$ is the maximum information gain after $n$ rounds of interaction.
 In this work, we show that, for a large class of kernels, the misspecification amplification can be reduced to logarithmic or polylogarithmic growth. 
In the offline setting, we first prove high-probability simple-regret bounds whose misspecification term is governed by a spectral Lebesgue constant. 
This yields logarithmic amplification for one-dimensional monotone spectra and polylogarithmic amplification for multivariate Fourier-diagonal product kernels. 
In the online setting, we modify a domain-splitting algorithm and prove a cumulative regret bound of $\widetilde{\mathcal O}(\sqrt{\gamma_n n}+n\varepsilon)$ under mild localized eigendecay assumptions, removing the extra $\sqrt{\gamma_n}$ factor from the misspecification term. 
The common principle is localization: spectral localization controls the Lebesgue constant of the offline approximation operator, while domain splitting implements the spatial analogue of this mechanism in the online setting, preventing local misspecification errors from being amplified globally.
\end{abstract}

\section{Introduction}

In stochastic bandits and reinforcement learning, the learner must control errors pointwise to identify high-value actions; in contrast, supervised learning often succeeds with average-error guarantees. This difference makes misspecification substantially more harmful in these settings \cite{lattimore2020bandit}. The issue is not merely statistical noise but geometric amplification: even in linear approximation with adaptive data collection, pointwise error can scale as $\Theta(\sqrt d\,\varepsilon)$ \cite{du2020good,lattimore2020learning}. This may lead to a paradoxical situation where increasing model complexity actually hurts performance (even for infinite data, regardless of overfitting).

This phenomenon is not merely conceptual: it is formalized by lower- and upper-bound analyses in misspecified linear settings \cite{du2020good,lattimore2020learning}. More recently, \cite{maran2025beyond} identified the Lebesgue constant of the random-design projection operator as the exact geometric quantity governing the amplification of misspecification in least-squares regression, and showed that this dependence is intrinsic to squared-loss minimization. The natural next question is whether a comparable operator-theoretic picture can be developed for kernels, whose model class is infinite-dimensional and whose spectral structure matters. In the worst case, kernelized guarantees are still pessimistic \cite{bogunovic2021misspecified}. Our main message is that this worst-case behavior is often avoidable under additional spectral structure: in the offline setting, square-root amplification can be improved to logarithmic or polylogarithmic growth, depending on the kernel class.

Specifically, we investigate both offline optimization, where the agent operates on a fixed dataset of input queries, and the online setting, where the agent must adaptively interact with the environment to minimize regret. In the offline case, we prove logarithmic misspecification amplification in the one-dimensional monotone-spectrum setting and extend the analysis to multivariate product kernels, where the amplification remains polylogarithmic, for example of order $\log^{2m-1}(e+\kappa^m/\tau)$. In the online case, we demonstrate that under a localized eigendecay assumption—restricted to specific subdomains—the misspecification term in the regret can be reduced to $n\varepsilon$ up to logarithmic factors, removing the extra $\sqrt{\gamma_n}$ factor present in prior work. Both results instantiate the same high-level principle: misspecification becomes less harmful when the approximation problem can be localized. The two settings realize this principle differently:
\begin{itemize}
    \item In the offline analysis, localization takes a spectral form. This is the kernel counterpart of the viewpoint of \cite{maran2025beyond}: the misspecification term is governed by the Lebesgue constant of the approximation operator. The new question in the kernel setting is when spectral structure forces this constant to be much smaller than the generic $\sqrt{d_\text{eff}}$ bound.
    \item In the bandit setting, localization takes a spatial form. We show that the domain-splitting algorithm $\pi$-\textsc{GP-UCB} proposed by \cite{janz2020bandit} is sufficient for this purpose, albeit requiring a different bonus construction, refined hyperparameter selection, and a partially novel analysis. Conceptually, this is the online analogue of spatial-localization mechanisms that reduce global misspecification amplification: the algorithm fits local KRR estimators and controls information gain region by region.
\end{itemize}

For readability, we summarize the main quantitative comparison with the most relevant baseline guarantees in \cref{tab:results_comparison}. The table is meant as a comparison of leading misspecification terms; stochastic and complexity terms are kept implicit. The detailed derivations are developed in \cref{sec:offline,sec:online}.

\begin{table}[t]
\centering
\footnotesize
\caption{High-level comparison of leading misspecification-dependent terms under the assumptions listed in each row.}
\label{tab:results_comparison}
\setlength{\tabcolsep}{3pt}
\begin{tabularx}{\linewidth}{>{\raggedright\arraybackslash}p{0.10\linewidth}>{\raggedright\arraybackslash}p{0.21\linewidth}>{\raggedright\arraybackslash}X>{\raggedright\arraybackslash}X}
\toprule
Setting & Assumptions & Baseline / prior guarantee & This work \\
\midrule
Offline & i.i.d. spectral-basis setting; misspecification in $\|f-\fwithin\|_\infty$ & Generic bound $\Lambda(\mathsf{P}_{\tau})\le \sqrt{d_\mathrm{eff}(k|\tau)}$ (Thm.~\ref{thm:boundlebe}, proved here), yielding $\sqrt{d_\mathrm{eff}}\,\varepsilon$ & 1D monotone spectrum: $\log(e+\kappa/\tau)\,\varepsilon$; Fourier-diagonal product kernels on $[-1,1]^m$: $\log^{2m-1}(e+\kappa^m/\tau)\,\varepsilon$ \\
\addlinespace[2pt]
Online & Adaptive kernelized bandits with subdomain eigendecay and bounded eigenfunctions & \cite{bogunovic2021misspecified}: $\widetilde{\mathcal O}(\gamma_n\sqrt n + \sqrt{\gamma_n}\,n\varepsilon)$ & $\widetilde{\mathcal O}(\sqrt{\gamma_n n} + n\varepsilon)$; removes the extra $\sqrt{\gamma_n}$ factor from the misspecification term \\
\bottomrule
\end{tabularx}
\end{table}

\section{Problem Definition}

Let $\mathbb R$ denote the set of reals, $\Xs\subseteq [-1,1]^m$ a measurable input space, and $f: \Xs \to \mathbb R$ an unknown measurable \emph{target} (or regression) function that we wish to optimize by means of a kernel function $k:\Xs^2\to \mathbb R$. In this paper, $k$ will always satisfy the following.
\begin{ass}\label{ass:valboundisoker}
    The kernel $k:\Xs^2\to \mathbb R$ is positive semidefinite, stationary (i.e. there exists $\mathscr K:\Xs-\Xs\to \mathbb R$ such that $k(x,y)=\mathscr K(x-y)$ for all $x,y\in \Xs$), and satisfies $\sup_{u\in \Xs-\Xs}|\mathscr K(u)|\le \kappa$ for some $\kappa\ge 1$.
\end{ass}
The agent interacts with $f$ by querying points $\bm x=(x_t)_{t=1}^n\subset \Xs$ and receiving labels $\bm y=(y_t)_{t=1}^n\subset \mathbb R$ such that
\[f(\rX_t) \coloneqq \E[\rY_t|\rX_t]\]
almost surely for each $t\in [n]:=\{1,\dots,n\}$.
Regarding the choice of the query points, we study two different scenarios.
\begin{enumerate}
    \item (Offline optimization) data $(\bm x,\bm y) = ((\rX_1, \rY_1),\dots,(\rX_n,\rY_n))$ are collected as independent pairs, with the inputs sampled from a common probability distribution $\inpdist$ supported on $\Xs$ and the labels being $R$-subgaussian conditionally on the input.
    \item (Online optimization) the agent sequentially queries points $\rX_t$, receiving a label $y_t$ that is conditionally independent of the past and $R$-subgaussian given the input.
\end{enumerate}
Most results in this paper build on approximating $f$ via kernel ridge regression (KRR).
For $\tau>0$, the KRR estimator on $(\bm x, \bm y)$ is given by
\begin{equation}
    \mathsf{P}_{\bm x,\tau}\bm y(x):=\underbrace{[k(x_1,x),\dots,k(x_n,x)]}_{\Psi_{\bm x}(x)^\top}(K+n\tau I)^{-1}[y_1,\dots,y_n]^\top
    \label{eq:krr},
\end{equation}
where $K$ is the kernel matrix such that $K_{ij}=k(x_i,x_j)$.
In the offline part of the paper, we adopt the normalized regularization convention $K+n\tau I$ rather than $K+\lambda I$.
This estimator is closely linked to the posterior variance of the process, which is given by
$$\sigma_{\bm x,\tau}(x) \coloneqq \sqrt{k(x,x)-\Psi_{\bm x}(x)^\top (K+n\tau I)^{-1}\Psi_{\bm x}(x)}.$$ 
The two parameterizations are equivalent: the regularization $\lambda$ in the $K+\lambda I$ convention corresponds to $\tau=\lambda/n$ here, so in particular $\lambda=\Theta(1)$ there becomes $\tau=\Theta(1/n)$ here.
By the Moore-Aronszajn Theorem~\cite{aronszajn1950theory}, there is a unique Hilbert space $\Hs$ of functions on $\Xs$ with the reproducing property: $\langle g,k(x,\cdot)\rangle_{\Hs} = g(x)$ for all $g\in \Hs$ and $x\in \Xs$.
In this paper, we do not assume $f\in \Hs$. Instead, throughout the paper, we let $\fwithin$ denote any function in $\Hs$ (e.g., a near-best sup-norm approximation of $f$), and aim to find an optimization procedure for $f$ whose performance depends on $\Hs$, $\|\fwithin\|_{\Hs}$ and, crucially,
$$\boxed{\varepsilon\coloneqq\sup_{x\in \Xs} |f(x)-\fwithin(x)|}.$$
In the rest of this work, regret is measured relative to a finite competitor set $\Xs^\circ \subset \Xs$, and we let $x_*\in \argmax_{x\in \Xs^\circ} f(x)$ (note that the algorithms may still query arbitrary points in $\Xs$).
We use a finite benchmark to avoid imposing compactness or continuity assumptions only to guarantee an optimizer, and to keep the focus on misspecification amplification rather than routine discretization arguments.
For compact continuous domains, one can take $\Xs^\circ$ to be a sufficiently fine net and add the usual approximation term between the true maximizer and the best point in the net.
\begin{definition}[Simple regret]
    For an algorithm $\mathscr A$, the simple regret under a budget
    of $n$ tries is defined as 
    $$r_n^{\mathscr A}=f(x_*)-f(x_n).$$
\end{definition}
This notion of regret applies to the first scenario (offline optimization). For the second interaction model, we use the following notion against the same competitor set.
\begin{definition}[Online regret]\label{def:online}
    For an algorithm $\mathscr A$, the online regret over $n$ interaction steps is defined as $$R_n^{\mathscr A}=\sum_{t=1}^nf(x_*)-f(x_t).$$
\end{definition}
When $\Xs^\circ$ is used as a fine discretization, it may be much larger than $n$; accordingly, our offline guarantees only scale logarithmically with $|\Xs^\circ|$ when a union bound over competitors is needed. The online confidence bounds used later are uniform over the queried regions, so the online regret bound does not pay such a factor explicitly.
In the next section, we focus on the scenario of offline optimization.

\section{Offline Optimization}\label{sec:offline}

In this section, we study offline optimization under the assumption that the query points come i.i.d. from the distribution $\inpdist$. Our goal is to isolate, for KRR, the same kind of operator-theoretic mechanism that \cite{maran2025beyond} identified for random-design least squares: misspecification amplification is controlled by the $L^\infty\to L^\infty$ norm of the relevant approximation operator. In the kernel setting, the role of the approximation operator is played by the following regularized \textit{population operator}.
\begin{equation}                
    \mathsf{P}_{\tau}f(x)\coloneqq \sum_{i=1}^\infty \frac{\mu_i f_i\phi_i(x)}{\tau+\mu_i}\qquad f_i\coloneqq \int_\Xs \phi_i(z)f(z)\diff \inpdist(z).\label{eq:popul}
\end{equation}
The link between this population operator and the uniform error is captured by the following quantity, which is known in approximation theory as the \textit{Lebesgue constant}.
\begin{equation}
\Lambda(\mathsf{P}_{\tau}):=\|\mathsf{P}_{\tau}\|_{\infty\to \infty}=\sup_{\|g\|_\infty>0}\frac{\|\mathsf{P}_{\tau} g\|_\infty}{\|g\|_\infty}.\label{eq:leb}\end{equation}
This is the kernel analogue of the projection-operator norm that appears in \cite{maran2025beyond}, and it allows us to characterize the pointwise error of KRR.
\begin{restatable}{theorem}{fundleb}\label{thm:fundleb} 
    Under \cref{ass:valboundisoker}, let $(x_t,y_t)_{t=1}^n$ be offline data with $x_t$ sampled i.i.d. from $\inpdist$ and conditionally $R$-subgaussian labels satisfying $\E[y_t|x_t]=f(x_t)$. Fix $\tau>0$, $\delta\in(0,1)$, and $\fwithin\in\Hs$, and set $\varepsilon=\|f-\fwithin\|_\infty$. Then, for any fixed $x\in \Xs$, with probability at least $1-\delta$,
    \begin{align*}
        |\mathsf{P}_{\bm x,\tau}\bm y(x)-f(x)|&\le \frac{2R\sigma_{\bm x,\tau}(x)\sqrt{\log(4/\delta)}}{\sqrt {n\tau}}+ \sigma_{\bm x,\tau}(x)\|\fwithin\|_{\Hs}+\left(\Lambda(\mathsf{P}_{\tau})+1+o_n\right)\varepsilon,
    \end{align*}
    where $o_n=\frac{4\kappa^2}{3n}\log(2/\delta)+\frac{\kappa^2 (1+\sqrt{\log(2/\delta)})}{\sqrt n}$.
\end{restatable}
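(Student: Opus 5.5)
We decompose the error at the fixed point $x$ using linearity of the KRR map $\bm y\mapsto \mathsf{P}_{\bm x,\tau}\bm y$. Write $\bm y=\bm\eta+\bm f_\star+\bm e$, where $\bm\eta=(y_t-f(x_t))_t$ is the noise vector, $\bm f_\star=(\fwithin(x_t))_t$, and $\bm e=(f(x_t)-\fwithin(x_t))_t$ with $\|\bm e\|_\infty\le\varepsilon$. Then
\begin{align*}
\mathsf{P}_{\bm x,\tau}\bm y(x)-f(x)&=\mathsf{P}_{\bm x,\tau}\bm\eta(x)+\bigl(\mathsf{P}_{\bm x,\tau}\bm f_\star(x)-\fwithin(x)\bigr)\\
&\quad+\mathsf{P}_{\bm x,\tau}\bm e(x)+\bigl(\fwithin(x)-f(x)\bigr).
\end{align*}
The last term is at most $\varepsilon$, which accounts for the "$+1$". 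We bound the remaining three terms separately and split $\delta$ between the noise event and the misspecification event, consistent with the constants $\log(4/\delta)$ and $\log(2/\delta)$ in the statement.

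\emph{Noise term.} Let $a(x)=(K+n\tau I)^{-1}\Psi_{\bm x}(x)$. Conditionally on $\bm x$, the quantity $a(x)^\top\bm\eta$ is subgaussian with variance proxy $R^2\|a(x)\|_2^2$. The standard identity $n\tau\|a(x)\|_2^2\le \Psi_{\bm x}^\top(K+n\tau I)^{-1}\Psi_{\bm x}-\Psi_{\bm x}^\top(K+n\tau I)^{-2}K\Psi_{\bm x}\cdots$, more cleanly $n\tau\, a^\top a\le k(x,x)-\Psi^\top(K+n\tau I)^{-1}\Psi=\sigma^2_{\bm x,\tau}(x)$, follows from $\sigma^2=\langle$feature-space residual$\rangle$ written in the RKHS. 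It gives $\|a(x)\|_2\le\sigma_{\bm x,\tau}(x)/\sqrt{n\tau}$, and Hoeffding then yields the first term.

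\emph{Bias term.} In the RKHS, $\mathsf{P}_{\bm x,\tau}\bm f_\star(x)-\fwithin(x)=\langle \fwithin, (\text{residual of }k(x,\cdot))\rangle_{\Hs}$. The standard GP argument (Cauchy--Schwarz, plus the fact that the residual norm is bounded by $\sigma_{\bm x,\tau}(x)$ once the regularization part is included) gives the deterministic bound $\sigma_{\bm x,\tau}(x)\|\fwithin\|_\Hs$.

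\emph{Misspecification term.} This is the main step. The function $\mathsf{P}_{\bm x,\tau}\bm e(x)=\Psi_{\bm x}(x)^\top(K+n\tau I)^{-1}\bm e$ is the empirical counterpart of $\mathsf{P}_\tau g(x)$ for any bounded $g$ with $g(x_t)=e_t$, such as $g=f-\fwithin$, which satisfies $\|g\|_\infty\le\varepsilon$. Therefore
$$|\mathsf{P}_{\bm x,\tau}\bm e(x)|\le|\mathsf{P}_\tau g(x)|+|(\mathsf{P}_{\bm x,\tau}-\mathsf{P}_\tau)g(x)|\le\Lambda(\mathsf{P}_\tau)\varepsilon+|(\mathsf{P}_{\bm x,\tau}-\mathsf{P}_\tau)g(x)|.$$
We control the difference by concentration, working with the covariance operators $\Sigma=\E[k(\cdot,z)\otimes k(\cdot,z)]$ and $\hat\Sigma$. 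Using $\mathsf{P}_\tau g(x)=\langle k(x,\cdot),(\Sigma+\tau)^{-1}\E[g(z)k(z,\cdot)]\rangle$ and the analogous empirical formula, the resolvent identity gives
\[
(\hat\Sigma+\tau)^{-1}\hat b-(\Sigma+\tau)^{-1}b=(\hat\Sigma+\tau)^{-1}\bigl[(\hat b-b)-(\hat\Sigma-\Sigma)(\Sigma+\tau)^{-1}b\bigr].
\]
The bracket is an average of i.i.d. zero-mean $\Hs$-valued vectors $\xi_t=g(x_t)k(x_t,\cdot)-(\mathsf{P}_\tau g)(x_t)k(x_t,\cdot)$ minus its mean. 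These vectors have norm bounded by $O(\kappa\varepsilon(1+\Lambda))$, or more carefully by a $\kappa$-dependent bound.

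Vector Bernstein (Pinelis) in $\Hs$ then gives the deviation $\frac{4\kappa^2}{3n}\log(2/\delta)+\frac{\kappa^2(1+\sqrt{\log(2/\delta)})}{\sqrt n}$ times $\varepsilon$. Finally, pairing the result with $(\hat\Sigma+\tau)^{-1}k(x,\cdot)$ requires controlling that factor's norm without paying $1/\tau$. This is the delicate point, since the stated $o_n$ has no $\tau$ dependence. Our first attempt would be to bound the pairing through pointwise evaluation, using $\langle k(x,\cdot),h\rangle=h(x)$ and $|h(x)|\le\sqrt\kappa\|h\|_\Hs$, and to absorb the resolvent by applying the identity in the order that places $(\Sigma+\tau)^{-1}$ on the population side. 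If that fails, we would fall back to stating the bound with an extra $\tau$-dependent factor.

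Combining the three bounds with a union bound gives the claim.
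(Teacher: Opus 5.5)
\paragraph{Where your proposal matches the paper.} Your decomposition and noise bound (Hoeffding together with $n\tau\|a(x)\|_2^2\le\sigma_{\bm x,\tau}^2(x)$, which follows from $\sigma_{\bm x,\tau}^2(x)=\|k(x,\cdot)-\sum_t a_t(x)k(x_t,\cdot)\|_{\Hs}^2+n\tau\|a(x)\|_2^2$) coincide with the paper's proof, as do your RKHS-bias bound and your split of the misspecification term into $\mathsf{P}_\tau g$ plus $(\mathsf{P}_{\bm x,\tau}-\mathsf{P}_\tau)g$. The paper obtains the first two terms at once from Theorem~1 of \cite{vakili2021optimal} at level $\delta/2$.

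\paragraph{The gap: the concentration step.} You leave this step open, and neither suggested fix closes it.

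\emph{The bracket is mis-estimated.} Your own norm bound for $\xi_t$ carries $(\mathsf{P}_\tau g)(x_t)$, so $\|\xi_t\|_{\Hs}$ is only bounded by $\sqrt{\kappa}\,(1+\Lambda(\mathsf{P}_\tau))\varepsilon$. Bernstein therefore gives $o_n(1+\Lambda(\mathsf{P}_\tau))\varepsilon$, not $o_n\varepsilon$ as you claim.

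\emph{The resolvent cannot be absorbed.} The reversed identity $(\Sigma+\tau I)^{-1}[(\hat b-b)-(\hat\Sigma-\Sigma)\mathsf{P}_{\bm x,\tau}\bm g]$ still has an outer factor of norm $1/\tau$ on $\Hs$. Its bracket also contains the data-dependent function $\mathsf{P}_{\bm x,\tau}\bm g$, which is the very object being bounded, so it is no longer an i.i.d.\ average. Pointwise evaluation in your original ordering, with $\bar\xi=\frac1n\sum_t\xi_t$, gives only $|(\mathsf{P}_{\bm x,\tau}-\mathsf{P}_\tau)g(x)|\le \tau^{-1}\sigma_{\bm x,\tau}(x)\,\|\bar\xi-\E\xi\|_{\Hs}$. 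This uses $\langle k(x,\cdot),(\hat\Sigma+\tau I)^{-1}k(x,\cdot)\rangle_{\Hs}=\sigma_{\bm x,\tau}^2(x)/\tau$. No operator-norm argument removes the $1/\tau$: $\hat\Sigma$ has rank at most $n$, so $(\hat\Sigma+\tau I)^{-1}$ acts as $\tau^{-1}$ on the orthogonal complement of $\mathrm{span}\{k(x_t,\cdot)\}$.

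Your fallback would therefore prove a different, weaker statement.

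\paragraph{How the paper handles this step.} The paper does not derive it. It writes $\|\mathsf{P}_{\bm x,\tau}\bm f_\varepsilon-\mathsf{P}_\tau f_\varepsilon\|_\infty\le\kappa\|\mathsf{P}_{\bm x,\tau}\bm f_\varepsilon-\mathsf{P}_\tau f_\varepsilon\|_{\Hs}$. It then bounds the right side by quoting Lemma~15 of \cite{liu2023estimation} at level $\delta/2$, and finally uses $\|f_\varepsilon\|_{L^2}\le\|f_\varepsilon\|_\infty=\varepsilon$. Relative to the paper, what your argument lacks is that external lemma taken as a black box.

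Your computation is, however, a good reason to check what that lemma actually controls. A deviation bound depending only on $\|f_\varepsilon\|_\infty$ and $\|f_\varepsilon\|_{L^2}$, with no $\tau$ and no $\Lambda(\mathsf{P}_\tau)$, has exactly the shape of a Hilbert-space Bernstein bound for $\hat b-b=\frac1n\sum_t f_\varepsilon(x_t)k(x_t,\cdot)-\E[f_\varepsilon(z)k(z,\cdot)]$. It does not have the shape of a bound for the regularized difference $(\hat\Sigma+\tau I)^{-1}\hat b-(\Sigma+\tau I)^{-1}b$. Unless the cited lemma genuinely controls the latter, the paper's step has the same hole you found. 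In that case, what the argument actually supports is a remainder of order $\kappa(1+\Lambda(\mathsf{P}_\tau))\tau^{-1}\sqrt{\log(2/\delta)/n}\,\varepsilon$ (plus a $1/n$ term) in place of $o_n\varepsilon$.
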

The three terms in the previous theorem have standard interpretations in nonparametric statistics. First, $2(n\tau)^{-\nicefrac{1}{2}}R\sigma_{\bm x,\tau}(x)\sqrt{\log(4/\delta)}$ captures stochastic uncertainty due to the label noise and scales with the subgaussian parameter and the failure probability. Second, $\sigma_{\bm x,\tau}(x)\|\fwithin\|_{\Hs}$ represents target complexity through the RKHS norm of $\fwithin$. Lastly, the misspecification term is amplified by the Lebesgue constant. In this sense, \cref{thm:fundleb} is the KRR counterpart of the random-design least-squares picture of \cite{maran2025beyond}; the rest of the section asks when kernel structure makes this amplification substantially smaller than the generic $\sqrt{d_\text{eff}}$ scale.
From this pointwise error bound, a result for the simple regret immediately follows.
\begin{corollary}\label{cor:unif_to_regret}
    The following holds for the plug-in maximization algorithm $\mathscr A$ that estimates $f$ on $\Xs^\circ$ with KRR.
    With probability at least $1-\delta$,
    $$r_n^{\mathscr A}\lesssim \left(R\sqrt{\frac{\log(2|\Xs^\circ|/\delta)}{n\tau}}+\|\fwithin\|_{\Hs}\right)\max_{x\in \Xs^\circ}\sigma_{\bm x,\tau}(x) +\Lambda(\mathsf{P}_{\tau})\varepsilon.$$
\end{corollary}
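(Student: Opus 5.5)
The corollary is a union-bound consequence of \cref{thm:fundleb}. The plug-in algorithm $\mathscr A$ computes $\hat f(x)\coloneqq\mathsf{P}_{\bm x,\tau}\bm y(x)$ for every $x\in\Xs^\circ$ and outputs $x_n\in\argmax_{x\in\Xs^\circ}\hat f(x)$.

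\textbf{Step 1: uniform error bound on $\Xs^\circ$.} Apply \cref{thm:fundleb} at each fixed $x\in\Xs^\circ$ with failure probability $\delta/|\Xs^\circ|$, and take a union bound over the finite set. Then with probability at least $1-\delta$, simultaneously for all $x\in\Xs^\circ$,
\[
|\hat f(x)-f(x)|\le \frac{2R\sigma_{\bm x,\tau}(x)\sqrt{\log(4|\Xs^\circ|/\delta)}}{\sqrt{n\tau}}+\sigma_{\bm x,\tau}(x)\|\fwithin\|_{\Hs}+\bigl(\Lambda(\mathsf{P}_\tau)+1+o_n\bigr)\varepsilon,
\]
where $o_n$ is evaluated at $\delta/|\Xs^\circ|$.

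This application is legitimate because the points of $\Xs^\circ$ are deterministic and do not depend on the data. The output $x_n$ itself is data-dependent, which is exactly why we need a bound holding at every competitor rather than only at a single fixed point.

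\textbf{Step 2: regret decomposition.} Since $x_n$ maximizes $\hat f$ over $\Xs^\circ$, we have $\hat f(x_*)\le\hat f(x_n)$. Hence
\[
r_n^{\mathscr A}=f(x_*)-f(x_n)\le \bigl(f(x_*)-\hat f(x_*)\bigr)+\bigl(\hat f(x_n)-f(x_n)\bigr)\le 2\max_{x\in\Xs^\circ}|\hat f(x)-f(x)|.
\]
Plugging in Step 1 and bounding each $\sigma_{\bm x,\tau}(x)$ by $\max_{x\in\Xs^\circ}\sigma_{\bm x,\tau}(x)$ gives the first two terms. The logarithm $\log(4|\Xs^\circ|/\delta)$ differs from $\log(2|\Xs^\circ|/\delta)$ only by a constant factor, which $\lesssim$ absorbs.

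\textbf{Step 3: absorbing the lower-order misspecification terms.} The remaining term is $(\Lambda(\mathsf{P}_\tau)+1+o_n)\varepsilon$, and this is the only step needing care. The $+1$ is harmless, because $\Lambda(\mathsf{P}_\tau)\gtrsim 1$ in the regimes of interest: $\mathsf{P}_\tau$ nearly reproduces a normalized low-frequency eigenfunction when $\tau$ is small. More robustly, $1\le\Lambda+1\le 2\max(\Lambda,1)$, and the $\lesssim$ hides constants.

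The term $o_n$ now carries $\log(2|\Xs^\circ|/\delta)$, which grows only logarithmically. It is $O(1)$ whenever $n\gtrsim\kappa^4\log(|\Xs^\circ|/\delta)$, and the $\lesssim$ implicitly hides this kernel-dependent lower-order term. I would state this explicitly: for $n$ large enough, $o_n\le 1$, so $(\Lambda+1+o_n)\varepsilon\le 3\max(\Lambda,1)\varepsilon$. I expect deciding how $\lesssim$ treats $o_n$ to be the main (if mild) obstacle, and I would resolve it with exactly this sample-size condition.
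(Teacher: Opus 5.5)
Your proposal is correct and matches the paper's proof: a union bound of \cref{thm:fundleb} over the finite set $\Xs^\circ$, the plug-in inequality $r_n^{\mathscr A}\le 2\max_{x\in\Xs^\circ}|\hat f(x)-f(x)|$, and absorption of the $1+o_n$ factor into $\lesssim$. Your handling of that last step makes explicit what the paper's one-line absorption leaves implicit. You impose $n\gtrsim\kappa^4\log(|\Xs^\circ|/\delta)$ so that $o_n\le 1$, and you note that $\Lambda(\mathsf{P}_\tau)\ge \mu_i/(\tau+\mu_i)\ge 1/2$ once $\tau\le\mu_i$ for some bounded eigenfunction $\phi_i$ (since $\mathsf{P}_\tau\phi_i=\frac{\mu_i}{\tau+\mu_i}\phi_i$); without that, one only gets $\max(\Lambda(\mathsf{P}_\tau),1)\,\varepsilon$.
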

\begin{proof}
    Apply \cref{thm:fundleb} to uniformly estimate $f$ on $\Xs^\circ$ and take the estimated maximum. The $1+o_n$ term in \cref{thm:fundleb} is $o(1)$ and is absorbed into the implicit constant of the $\lesssim$ notation.
\end{proof}

In the previous result, as in the rest of the paper, we use the notation $a_n\lesssim b_n$ to indicate that there is a universal constant $c$ such that $a_n\le c b_n$ for all $n$. We write $a_n\lesssim_m b_n$ when the implicit constant is allowed to depend on the input dimension $m$.
As the previous result shows, apart from the stochastic and epistemic uncertainty terms, the error is governed by the misspecification $\varepsilon=\|f-\fwithin\|_\infty$ multiplied by the Lebesgue constant. As the posterior variance vanishes for $n\to \infty$, this term determines the large-sample regret guarantee. The remainder of this section studies this quantity and relates it to standard complexity measures.
By Mercer's theorem, there is an infinite feature map $\bm \phi$ of components $\phi_i:\Xs\to \mathbb R$ such that the functions $\phi_i$ are orthogonal on $L^2(\Xs;\inpdist)$ and
\begin{equation}
    k(x,y)=\sum_{i=1}^\infty \mu_i \phi_i(x)\phi_i(y)\label{eq:mercer}.
\end{equation}
The sequence $(\mu_i)$ gives the eigenvalues of the integral operator induced by $k$; we call them the kernel eigenvalues. They allow us to define the effective dimension, which, for any $\tau\in (0,1]$, takes the following form
\begin{equation}
    d_\text{eff}(k|\tau)\coloneqq \sum_{i=1}^\infty\frac{\mu_i}{\tau + \mu_i}\label{eq:deff}.
\end{equation}
We use the following standard spectral size measure.
\begin{definition}
    [$\defeff$: Polynomial effective dimension] A kernel is said to have polynomial effective dimension of degree $s>0$ if, for some constant $\text{C}_2>0$ and every $\tau>0$,
    $$d_\text{eff}(k|\tau)\le \text{C}_2 \tau^{-1/s}.$$
\end{definition}
Another important assumption that will often be invoked in our analysis is the following.
\begin{definition}
    [$\defpoly$: Polynomial eigendecay] A kernel satisfies polynomial eigendecay of degree $s>0$ if, for some constant $\text{C}_1>0,$
    $$\mu_i\le \text{C}_1 i^{-s}.$$
\end{definition}
For the same $s$, one has $\defpoly \implies \defeff$. A key feature of $\defpoly$ that distinguishes it from $\defeff$ is that it is \textit{not} invariant under reordering of the eigenfunctions. While the effective dimension treats all eigenvalues in the same way, polynomial eigendecay imposes a bound that explicitly depends on the current index $i$. When the eigenfunctions are interchangeable, one can sort them by eigenvalue, but when they possess an intrinsic ordering, as for Fourier features, this distinction becomes crucial.

\subsection{Bounding the Lebesgue constant}\label{sec:lebe}

Previous analyses of misspecified linear/kernelized optimization rely on reducing the error amplification to the maximal leverage/posterior variance (see Proposition~4.5 in \cite{lattimore2020learning} and Lemma~2 in \cite{bogunovic2021misspecified}). Their approach gives an upper bound on the Lebesgue constant, which in the most favorable case reduces to the following.
\begin{restatable}{theorem}{boundlebe}\label{thm:boundlebe}
    If the eigenfunctions are bounded by one, then $$\Lambda(\mathsf{P}_{\tau})\le \sqrt{ d_\text{eff}(k|\tau)}.$$
\end{restatable}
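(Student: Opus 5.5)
Writing $\mathsf{P}_{\tau}$ as an integral operator reduces $\Lambda(\mathsf{P}_{\tau})$ to the sup over $x$ of the $L^1(\inpdist)$ norm of its kernel. Cauchy--Schwarz then bounds that $L^1$ norm by an $L^2$ norm, which orthonormality computes explicitly. Concretely, from \eqref{eq:popul}, exchanging sum and integral gives
\[
\mathsf{P}_{\tau}g(x)=\int_\Xs G_x(z)\,g(z)\,\diff\inpdist(z),\qquad G_x(z)\coloneqq\sum_{i=1}^\infty \frac{\mu_i}{\tau+\mu_i}\phi_i(x)\phi_i(z).
\]
It follows that $|\mathsf{P}_{\tau}g(x)|\le \|g\|_\infty\int_\Xs |G_x(z)|\,\diff\inpdist(z)$. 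Hence $\Lambda(\mathsf{P}_{\tau})\le \sup_{x\in\Xs}\|G_x\|_{L^1(\inpdist)}$, and in fact equality holds by choosing $g=\mathrm{sign}(G_x)$, although only the upper bound is needed.

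Since $\inpdist$ is a probability measure, Cauchy--Schwarz gives $\|G_x\|_{L^1(\inpdist)}\le \|G_x\|_{L^2(\inpdist)}$. I take the Mercer eigenfunctions to be orthonormal in $L^2(\Xs;\inpdist)$; this normalization is implicit in the definition of $f_i$ and of $\mathsf{P}_{\tau}$ as a spectral multiplier. Parseval then yields
\[
\|G_x\|_{L^2(\inpdist)}^2=\sum_{i=1}^\infty \Big(\frac{\mu_i}{\tau+\mu_i}\Big)^2\phi_i(x)^2 .
\]
Next, use $\big(\frac{\mu_i}{\tau+\mu_i}\big)^2\le \frac{\mu_i}{\tau+\mu_i}$, since the ratio lies in $[0,1]$, together with the hypothesis $|\phi_i(x)|\le 1$. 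This gives $\|G_x\|_{L^2}^2\le d_\text{eff}(k|\tau)$, and taking square roots and the supremum over $x$ concludes the proof.

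The only delicate point is justifying the integral representation. This means checking that $G_x\in L^2(\inpdist)$, so that the series converges in $L^2$, and that the interchange of sum and integral is valid for bounded $g$. Both follow from the computation above. First, $\sum_i \frac{\mu_i}{\tau+\mu_i}\le \tau^{-1}\sum_i\mu_i$, and $\sum_i\mu_i=\int k(z,z)\,\diff\inpdist(z)\le\kappa<\infty$, so $d_\text{eff}(k|\tau)$ is finite. Second, $\mathsf{P}_{\tau}g(x)$ is the $L^2$ inner product of $g$ with the $L^2$-convergent series $G_x$, so $\mathsf{P}_{\tau}g(x)=\langle G_x,g\rangle_{L^2(\inpdist)}$ by continuity of the inner product. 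I expect no real obstacle beyond stating this convergence cleanly.
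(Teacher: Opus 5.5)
Your proof is correct and is essentially the paper's argument. The paper applies Cauchy--Schwarz directly to the coefficient series $\sum_i \frac{\mu_i}{\tau+\mu_i}\phi_i(x)f_i$ and uses Bessel's inequality $\sum_i f_i^2\le\|f\|_{L^2}^2\le\|f\|_\infty^2$; this is the coefficient-space form of your bound $\|G_x\|_{L^1(\inpdist)}\le\|G_x\|_{L^2(\inpdist)}$ combined with Parseval, and the paper then uses the same two steps, $\bigl(\frac{\mu_i}{\tau+\mu_i}\bigr)^2\le\frac{\mu_i}{\tau+\mu_i}$ and $|\phi_i(x)|\le 1$. Your explicit check that $G_x\in L^2(\inpdist)$, which justifies the sum--integral interchange, is a detail the paper leaves implicit.
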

The proof is given in \cref{sec:proofs1}.
Interestingly,
when the eigenfunctions are bounded by one, the whole \cref{thm:fundleb} admits a tighter form: one can show (see \cref{thm:fundlebtwo}) that for $n=\Omega(d_\text{eff}(k|\tau))$, with high probability and ignoring logarithmic terms,
\begin{equation}|\mathsf{P}_{\bm x,\tau}\bm y(x)-f(x)|\lesssim (R+\|\fwithin\|_\Hs)\sqrt{ \frac{d_\text{eff}(k|\tau)}{n}}+\Lambda(\mathsf{P}_{\tau})\varepsilon.\label{eq:refinederr}
\end{equation}
Repeating the strategy of \cref{cor:unif_to_regret}, \cref{thm:boundlebe} yields a regret bound of the form $r_n^{\mathscr A}\lesssim \sqrt{ d_\text{eff}(k|\tau)/n}+\sqrt{ d_\text{eff}(k|\tau)}\varepsilon$.
While this bound has the advantage of being very general, it is not satisfactory for many applications. In the case $\defeff$ of a kernel with polynomial effective dimension, the previous result only shows that $\Lambda(\mathsf{P}_{\tau})\lesssim \tau^{-\nicefrac{1}{2s}}$. Thus, if $\tau$ is driven to zero polynomially fast with $n$, the misspecification amplification still grows polynomially. In the rest of this section, we bound the Lebesgue constant directly rather than through the posterior variance, which gives tighter rates under mild assumptions.

\paragraph{Lebesgue constant under generalized Bochner-type assumptions}
The results in this subsection are inspired by Bochner's theorem, which ensures that the eigenfunctions of periodic stationary kernels correspond to the Fourier basis. The argument relies on spectral control of the Mercer basis rather than on periodicity itself. We isolate this condition first.
\begin{ass}[Spectral Lebesgue growth]\label{ass:spectrallebesgue}
    In the one-dimensional case, define
    $$B_i\coloneqq \sup_{x\in \Xs}\bigg\|\sum_{j=1}^i\phi_j(x)\phi_j(\cdot)\bigg\|_{L^1(\inpdist)}.$$
    We say that the kernel has logarithmic spectral Lebesgue growth if $B_i\lesssim \log(e+i)$ for all $i\ge 1$.
\end{ass}
This assumption is the compact-domain analogue of the Dirichlet-kernel estimate used below. It is weaker than requiring periodicity or an exact Fourier basis, but it is also not automatic for an arbitrary restriction of a stationary Euclidean kernel to a bounded domain.
One canonical setting where \cref{ass:spectrallebesgue} holds is the periodic case.
\begin{theorem}[Fourier diagonalization on the torus~\cite{seeger2004gaussian}]\label{thm:boch}
    Let $k(x,y)=\mathscr K(x-y)$ be a kernel satisfying \cref{ass:valboundisoker}. If $\Xs=[-1,1]^m$, $\inpdist$ is the uniform measure over the domain and the profile $\mathscr K$ is periodic,
    then the eigenfunctions $\phi_i$ in equation \eqref{eq:mercer} correspond to the Fourier basis.
\end{theorem}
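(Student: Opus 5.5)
The plan is to check directly that the complex exponentials (equivalently, their real cosine/sine versions) diagonalize the integral operator
\[
(T_k g)(x)=\int_{[-1,1]^m}\mathscr K(x-y)\,g(y)\,\frac{\diff y}{2^m}
\]
on $L^2([-1,1]^m;\inpdist)$. We then appeal to completeness of the Fourier system to conclude that these are all the Mercer eigenfunctions. Throughout, ``periodic'' is read as $\mathscr K$ being $2$-periodic in each coordinate, so that $\mathscr K$ extends to a continuous, or at least bounded measurable, function on the torus $\mathbb T^m=\mathbb R^m/2\mathbb Z^m$. This is what makes $x-y$ well defined modulo the period, even though $\Xs-\Xs=[-2,2]^m$.

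\textbf{Step 1: every character is an eigenfunction.} Fix $\omega\in\mathbb Z^m$ and let $e_\omega(y)=e^{i\pi\omega\cdot y}$. Then
\[
(T_k e_\omega)(x)=\int \mathscr K(x-y)e^{i\pi\omega\cdot y}\,\frac{\diff y}{2^m}.
\]
Substitute $u=x-y$. The integration region becomes $x-[-1,1]^m$. Since both $\mathscr K$ and $e_\omega$ are $2$-periodic, the integral over any translate of a fundamental cell equals the integral over $[-1,1]^m$. This translation-invariance argument is the one point that genuinely uses periodicity. It gives
\[
(T_k e_\omega)(x)=e_\omega(x)\,\hat{\mathscr K}(\omega),\qquad \hat{\mathscr K}(\omega)=\int \mathscr K(u)e^{-i\pi\omega\cdot u}\,\frac{\diff u}{2^m}.
\]
So $e_\omega$ is an eigenfunction with eigenvalue $\hat{\mathscr K}(\omega)$.

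\textbf{Step 2: the eigenvalues are real and nonnegative.} Positive semidefiniteness of $k$ makes $T_k$ a positive self-adjoint operator, so $\hat{\mathscr K}(\omega)=\langle T_k e_\omega,e_\omega\rangle\ge 0$. Symmetry $k(x,y)=k(y,x)$ gives $\mathscr K(u)=\mathscr K(-u)$, hence $\hat{\mathscr K}(\omega)=\hat{\mathscr K}(-\omega)$. Therefore the pairs $e_{\pm\omega}$ can be recombined into the real functions $\sqrt2\cos(\pi\omega\cdot x)$ and $\sqrt2\sin(\pi\omega\cdot x)$, which share the same eigenvalue. This yields a real orthonormal Fourier basis.

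\textbf{Step 3: there are no other eigenfunctions.} The trigonometric system is a complete orthonormal basis of $L^2(\inpdist)$ by standard Fourier theory. $T_k$ is compact and self-adjoint, and it is already diagonal in this basis. Hence the Mercer expansion \eqref{eq:mercer} is
\[
k(x,y)=\sum_\omega \hat{\mathscr K}(\omega)\,e_\omega(x)\overline{e_\omega(y)},
\]
written in real form, with $\mu_i$ the values $\hat{\mathscr K}(\omega)$ in some enumeration. If an eigenvalue has multiplicity, the Mercer eigenfunctions are determined only up to rotation within each eigenspace. In that case the statement should be read as saying that one may choose them to be the Fourier basis, which is what later results use.

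\textbf{Main obstacle.} The substance is light; the care goes into two places. The first is the change of variables in Step 1, where periodicity must be invoked explicitly because $\mathscr K$ is defined on $[-2,2]^m$. The second is the multiplicity and ordering caveat in Step 3.
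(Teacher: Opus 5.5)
Your proof is correct. The paper gives no proof of its own; it imports the statement from \cite{seeger2004gaussian}. Your argument is the standard proof behind that citation: the characters $e_\omega$ diagonalize the convolution operator on the torus, the trigonometric system is complete, and Mercer eigenfunctions are determined only up to the choice of basis within each eigenspace.

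Your reading of ``periodic'' as $2$-periodic in each coordinate is in fact necessary, not just convenient. For instance, $\mathscr K(u)=\cos(2\pi u/3)$ on $[-1,1]$ is periodic, bounded and positive semidefinite. Yet its only eigenfunctions with nonzero eigenvalue are $\cos(2\pi x/3)$ and $\sin(2\pi x/3)$, with distinct eigenvalues, and these are not Fourier modes of $[-1,1]$.
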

In this periodic setting, the partial sums in \cref{ass:spectrallebesgue} are Dirichlet kernels, so $B_i\lesssim \log(e+i)$. Standard stationary kernels on $\mathbb R^m$ also admit Fourier representations by the Euclidean form of Bochner's theorem, but after restriction to a compact domain their Mercer eigenfunctions need not be Fourier modes. Thus ordinary squared-exponential, Mat\'ern, Wendland, or spectral-mixture kernels do not satisfy \cref{ass:spectrallebesgue} merely by stationarity.
Periodized versions of such kernels provide a direct ML-facing surrogate. Given a stationary kernel $k(x,y)=\mathscr K(x-y)$ on $\mathbb R^m$ and a period $2L$, one can define
$\mathscr K_L^{\mathrm{per}}(t)\coloneqq \sum_{\ell\in (2L\mathbb Z)^m}\mathscr K(t+\ell),$
whenever the sum is absolutely convergent. The resulting kernel is positive semidefinite on the torus, has Fourier eigenfunctions, and therefore satisfies the spectral-basis condition above. On a fixed domain such as $[-1,1]^m$, it differs from the original kernel only through the wrap-around tails
$$\sup_{x,y\in[-1,1]^m}|k_L^{\mathrm{per}}(x,y)-k(x,y)|\le \sum_{\ell\in(2L\mathbb Z)^m\setminus\{0\}}\sup_{t\in[-2,2]^m}|\mathscr K(t+\ell)|.$$
For kernels whose correlations decay with distance, this term becomes small when $L$ is large relative to the domain and the kernel length scale. In this sense, periodization buys exact Fourier diagonalization without substantially changing the kernel on the region where data are observed.


To improve readability, we first state the result of this section for $m=1$, postponing the product-kernel multivariate case to the end of the section.
Let $\{a_i\}_{i=1}^\infty$ be a bounded sequence of real numbers. We define the first finite difference operator as
$\Delta^1 a_i := a_{i+1} - a_i,$
which serves as a discrete analogue of the first derivative. The Lebesgue constant of the KRR operator is closely linked to the $\ell^1$ norm of the discrete derivative of its sequence of eigenvalues, as the following theorem shows.
\begin{restatable}{theorem}{lebker}\label{thm:lebker}
    Define, for every integer $i\ge 1$, $h_i=\frac{\mu_i}{\tau+\mu_i}$. Then, $\Lambda(\mathsf{P}_{\tau})\le h_1B_1+\sum_{i=1}^\infty \left|\Delta^1h_i\right|B_i$.
    In particular, under logarithmic spectral Lebesgue growth,
    $$\Lambda(\mathsf{P}_{\tau})\lesssim \sum_{i=1}^\infty \left|\Delta^1h_i\right|\log(e+i).$$
\end{restatable}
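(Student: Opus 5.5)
The plan is to write $\mathsf{P}_\tau$ as an integral operator and apply summation by parts (Abel summation) to its kernel. From \eqref{eq:popul},
\[
\mathsf{P}_\tau g(x)=\int_\Xs \Big(\sum_{i=1}^\infty h_i\phi_i(x)\phi_i(z)\Big) g(z)\,\diff\inpdist(z).
\]
Hence $\Lambda(\mathsf{P}_\tau)\le \sup_{x}\big\|\sum_i h_i\phi_i(x)\phi_i(\cdot)\big\|_{L^1(\inpdist)}$, since $|\mathsf{P}_\tau g(x)|\le \|g\|_\infty\int|\sum_i h_i\phi_i(x)\phi_i(z)|\,\diff\inpdist(z)$.

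Write $D_i(x,z)\coloneqq\sum_{j=1}^i\phi_j(x)\phi_j(z)$ with $D_0=0$, so that $\phi_i(x)\phi_i(z)=D_i-D_{i-1}$. Abel summation up to a finite level $N$ gives
\[
\sum_{i=1}^N h_i(D_i-D_{i-1})=h_N D_N-\sum_{i=1}^{N-1}(\Delta^1 h_i)\,D_i .
\]
This expression can equivalently be rearranged as $h_1D_1+\sum_{i=1}^{N-1}(h_{i+1})(D_{i+1}-D_i)$, but the form above is the useful one. Taking $L^1(\inpdist)$ norms in $z$ and then the supremum over $x$ yields
\[
\Big\|\sum_{i\le N}h_i\phi_i(x)\phi_i\Big\|_{L^1}\le h_N B_N+\sum_{i=1}^{N-1}|\Delta^1h_i|B_i .
\]
To obtain the stated form with $h_1B_1$, we also bound $h_N\le h_1+\sum_{i<N}|\Delta^1 h_i|$. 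This is unnecessary when the $h_i$ are nonincreasing, but in any case we can use the telescoping identity $h_N D_N=h_1D_N+\sum_{i<N}(h_{i+1}-h_i)D_N$. The cleaner route is to pass $N\to\infty$ and verify that $h_NB_N\to0$.

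The main obstacle is the passage to the limit $N\to\infty$, which requires controlling the boundary term $h_NB_N$ and justifying that the truncated kernel converges in $L^1(\inpdist)$ uniformly enough in $x$. Since $\sum_i h_i\le$ trace-type quantities are finite ($d_\text{eff}<\infty$ follows from $\sum\mu_i\le\kappa$), we have $h_N\to0$. If $B_N$ grows at most logarithmically, a subsequence along which $h_NB_N\to0$ exists, because $\sum h_N<\infty$ and $B_N\lesssim\log(e+N)$ give $\liminf h_N\log N=0$. If $B_N$ is general, the bound is trivially true whenever the right-hand side is infinite. When the right-hand side is finite, $\sum_i|\Delta^1h_i|B_i<\infty$ together with monotonicity of $B_i$ (up to passing to $\sup_{j\le i}B_j$) gives $h_NB_N\le B_N\sum_{i\ge N}|\Delta^1h_i|\le\sum_{i\ge N}|\Delta^1h_i|B_i\to0$. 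This tail argument is the key point. Moreover, $\sum_i h_i\phi_i(x)\phi_i$ converges in $L^2(\inpdist)$, hence in $L^1$, for each $x$, because $\sum_i h_i^2\phi_i(x)^2\le\tau^{-1}\sum_i\mu_i\phi_i(x)^2=k(x,x)/\tau$. Therefore Fatou/limit arguments apply pointwise in $x$, and the supremum over $x$ preserves the bound. This gives
\[
\Lambda(\mathsf{P}_\tau)\le \sum_i|\Delta^1h_i|B_i \;(+\,\text{boundary}),
\]
which is bounded by $h_1B_1+\sum_i|\Delta^1h_i|B_i$.

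Finally, the logarithmic case follows by substituting $B_i\lesssim\log(e+i)$ from \cref{ass:spectrallebesgue}. The term $h_1B_1\le B_1\lesssim1$, since $h_1\le1$, is dominated by $|\Delta^1 h_1|\log(e+1)$ plus tail terms: by telescoping, $h_1=\sum_i(h_i-h_{i+1})\le\sum_i|\Delta^1h_i|$, and $\log(e+i)\ge1$. It is therefore absorbed into the implicit constant.
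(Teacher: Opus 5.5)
Your argument follows the paper's proof. You bound $\Lambda(\mathsf{P}_{\tau})$ by $\sup_x\|\sum_i h_i\phi_i(x)\phi_i\|_{L^1(\inpdist)}$ (\cref{lem:lebegen}), apply Abel summation against the partial-sum kernels, and use the triangle inequality in $L^1$.

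You are more careful than the paper about the boundary term. The paper dismisses it ``because $h_i\to0$'', which is not enough: one needs $h_N\|D_N(x,\cdot)\|_{L^1}\to0$, at least along a subsequence, while $B_N$ may grow. Your treatment of the logarithmic case is correct. Since $\sum_N h_N\le\kappa/\tau$, you get $\liminf_N Nh_N=0$, hence $\liminf_N h_NB_N=0$. The $L^2$ convergence of the kernel series then lets you pass to the limit along that subsequence.

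The step that fails as written is the general case, because $B_i$ need not be non-decreasing. For example, on a two-point space with uniform measure and a kernel with a repeated eigenvalue, the Mercer basis $\phi_1=(1.2,\sqrt{0.56})$, $\phi_2=(\sqrt{0.56},-1.2)$ gives $B_1\approx1.17>1=B_2$. So the inequality $h_NB_N\le\sum_{i\ge N}|\Delta^1h_i|B_i$ is unjustified. Passing to $\widetilde B_i=\max_{j\le i}B_j$ only gives $h_NB_N\le\sum_{i\ge N}|\Delta^1h_i|\widetilde B_i$, and finiteness of that series does not follow from finiteness of $\sum_i|\Delta^1h_i|B_i$.

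The repair needs no monotonicity. Suppose $\liminf_N h_NB_N>0$. Then there are $c>0$ and $N_0$ with $h_N>0$ and $B_N\ge c/h_N$ for all $N\ge N_0$, so
\[
\sum_{N\ge N_0}|\Delta^1h_N|B_N\;\ge\;c\sum_{N\ge N_0}\Big|1-\frac{h_{N+1}}{h_N}\Big|=\infty .
\]
The last series must diverge: if it converged, then $h_{N+1}/h_N\to1$ and $\sum_N|\log(h_{N+1}/h_N)|<\infty$, so $h_N$ would tend to a positive limit, contradicting $h_N\to0$. Hence, whenever the right-hand side is finite, there is a subsequence with $h_{N_k}B_{N_k}\to0$. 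This is exactly what your limiting argument needs, and the subsequence is uniform in $x$ because $B_N$ is already a supremum over $x$.

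Your aside bounding $h_N\le h_1+\sum_{i<N}|\Delta^1h_i|$ is a dead end, since it pairs the increments with $B_N$ rather than $B_i$, but you correctly abandon it.
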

The proof is given in \cref{sec:proofs1}.
\Cref{thm:lebker} shows that if the sequence $h_i$ is sufficiently regular, then the Lebesgue constant is small. This only depends on the sequence of eigenvalues, since the regularization parameter $\tau$ does not depend on $i$. In particular, kernels with monotone spectra, meaning that higher-order Fourier features have smaller eigenvalues, satisfy a simple logarithmic bound.
\begin{restatable}{corollary}{loglam}\label{cor:loglam}
    Suppose that the kernel has logarithmic spectral Lebesgue growth and non-increasing eigenvalues. Then,
    $$\Lambda(\mathsf{P}_{\tau})\lesssim \log(e+\kappa /\tau).$$
\end{restatable}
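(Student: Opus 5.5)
We start from \cref{thm:lebker}, which under logarithmic spectral Lebesgue growth gives
\[
\Lambda(\mathsf{P}_{\tau})\lesssim h_1B_1+\sum_{i\ge1}|\Delta^1 h_i|\log(e+i).
\]
The first term is harmless, since $h_1\le 1$ and $B_1\lesssim 1$. The key observation is that $t\mapsto t/(\tau+t)$ is increasing. Hence, if $(\mu_i)$ is non-increasing, then $(h_i)$ is non-increasing as well. So $|\Delta^1 h_i|=h_i-h_{i+1}\ge 0$, and the sum is a sum of nonnegative telescoping increments with total mass at most $h_1\le 1$.

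We split the sum at a threshold index $N$. For $i\le N$ we bound $\log(e+i)\le\log(e+N)$, and the partial sum telescopes to $h_1-h_{N+1}\le 1$. This part contributes at most $\log(e+N)$. For the tail $i>N$ we use Abel summation (or a dyadic decomposition) to rewrite $\sum_{i>N}(h_i-h_{i+1})\log(e+i)$ as
\[
h_{N+1}\log(e+N+1)+\sum_{i>N}h_{i+1}\bigl(\log(e+i+1)-\log(e+i)\bigr),
\]
which is at most $h_{N+1}\log(e+N+1)+\sum_{i>N} h_{i+1}/(e+i)$. Both terms are controlled by $h_i\le \mu_i/\tau$.

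To relate $\mu_i$ to $\kappa$ we use the trace bound. Since $\mu_i$ is non-increasing and the eigenfunctions are orthonormal in $L^2(\inpdist)$,
\[
i\,\mu_i\le\sum_{j\le i}\mu_j\le\sum_j\mu_j=\int k(x,x)\,\diff\inpdist(x)\le\kappa .
\]
This gives $\mu_i\le\kappa/i$ and hence $h_i\le \kappa/(\tau i)$. Choosing $N=\lceil\kappa/\tau\rceil$, the tail becomes at most
\[
\frac{\kappa}{\tau(N+1)}\log(e+N+1)+\sum_{i>N}\frac{\kappa}{\tau\, i^2}\lesssim \frac{\log(e+N)}{1}\cdot\frac{\kappa}{\tau N}+\frac{\kappa}{\tau N}\lesssim\log(e+\kappa/\tau).
\]
Combining this with the head term $\log(e+N)\lesssim\log(e+\kappa/\tau)$ gives the claim.

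The main obstacle is this tail step. Without a decay rate for $h_i$, the weights $\log(e+i)$ could in principle make the sum diverge. The trace bound $\mu_i\le\kappa/i$ is exactly what saves it, and it relies on normalizing the eigenfunctions in $L^2(\inpdist)$ so that $\sum_j\mu_j=\E k(x,x)$. If that normalization were unavailable, a cruder fallback would be the telescoping bound alone. Even that would require $h_i\to0$ at a rate, and the trace bound is the natural source of it.
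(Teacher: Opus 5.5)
Your proof is correct and follows essentially the same route as the paper: monotonicity of $h_i$, the trace bound $\mu_i\le\kappa/i$ (the paper's \cref{lem:trace_eigen}), Abel summation against the $\log(e+i)$ weights, and a split at an index of order $\kappa/\tau$. The only difference is the order of steps: the paper sums by parts over the whole series, reducing it to $1+\sum_i h_i/i$, and then splits, whereas you split first and sum by parts only on the tail.
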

The proof is given in \cref{sec:proofs1}.
For periodic Mat\'ern kernels, whose Fourier eigenvalues are proportional to $(1+i^2)^{-\nu-1/2}$ (equation 4.14 in \cite{seeger2004gaussian}), the previous corollary is enough. Nonetheless, for most classes, including Wendland and mixture kernels, the eigenvalue decay is not monotonic (see \cref{sec:related}).

While \cref{cor:loglam} does not apply to kernels with non-monotone eigenvalue sequences, it is natural to ask whether one can reduce to \cref{thm:lebker} by slightly changing the spectrum of $k$ without significantly altering the corresponding RKHS $\Hs$.
This motivates the following.

\begin{tcolorbox}[
    colback=gray!5,
    colframe=gray!80,
    boxrule=0pt,
    enhanced,
    unbreakable,
    sharp corners,
    boxrule=0pt,
    frame hidden,
    borderline west={2pt}{0pt}{gray!80}, 
    colback=gray!5
]
\textbf{Question.} Equation \eqref{eq:refinederr} and \cref{thm:lebker} imply that the KRR error depends on
$$\|f\|_{\Hs},\qquad d_\text{eff}(k|\tau),\qquad \sum_{i=1}^\infty \left|\Delta^1h_i\right|B_i.$$
Is it possible, under $\defpoly$ or $\defeff$, to modify the spectrum of $k$ (obtaining another kernel $\overline k$) so as to reduce the third quantity without increasing the other two?
\end{tcolorbox}

First, we show a positive answer under $\defpoly$.
\begin{restatable}{theorem}{expzero}\label{theorem:expzero}
    Assume that $k$ satisfies polynomial eigendecay $\defpoly$ for some $s>0$. Then, there is a kernel $\overline k$ with the same Mercer eigenfunctions and non-increasing eigenvalues that also satisfies $\defpoly$ with the same exponent $s$ and such that, for any $f\in \Hs$,
    $$\|f\|_{\overline \Hs}\le \|f\|_{\Hs}.$$
\end{restatable}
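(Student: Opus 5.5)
Define $\overline\mu_i \coloneqq \sup_{j\ge i}\mu_j$, the running maximum from the right (the non-increasing upper envelope of the spectrum), and set
$$\overline k(x,y)\coloneqq \sum_{i=1}^\infty \overline\mu_i\,\phi_i(x)\phi_i(y),$$
keeping the Mercer eigenfunctions $\phi_i$ of $k$. Under $\defpoly$, $\mu_j\le \text{C}_1 j^{-s}\to 0$, so the supremum is finite and is attained at some index. By construction $(\overline\mu_i)$ is non-increasing.

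The first step is to check that $\overline\mu_i$ still satisfies $\defpoly$. This holds because $\mu_j\le \text{C}_1 j^{-s}\le \text{C}_1 i^{-s}$ for every $j\ge i$, which gives $\overline\mu_i\le \text{C}_1 i^{-s}$ with the same constant and the same exponent $s$. The second step is to compare RKHS norms. In the Mercer representation, $\Hs=\{f=\sum_i f_i\phi_i:\ \sum_i f_i^2/\mu_i<\infty\}$ with $\|f\|_\Hs^2=\sum_i f_i^2/\mu_i$, where indices with $\mu_i=0$ force $f_i=0$; the analogous description holds for $\overline\Hs$. Since $\overline\mu_i\ge\mu_i$ for every $i$, each $f\in\Hs$ has coefficients with $f_i^2/\overline\mu_i\le f_i^2/\mu_i$. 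Hence $f\in\overline\Hs$ and $\|f\|_{\overline\Hs}\le\|f\|_\Hs$.

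The step I expect to need the most care is verifying that $\overline k$ is a legitimate kernel: the series must converge and define a positive semidefinite function whose Mercer decomposition is the stated one. Positive semidefiniteness is immediate, since $\overline k$ is a nonnegative combination of rank-one kernels $\phi_i(x)\phi_i(y)$. For convergence, I would first try the setting of interest, where the eigenfunctions are bounded, e.g.\ the Fourier basis of \cref{thm:boch}. There $\sum_i\overline\mu_i|\phi_i(x)\phi_i(y)|\lesssim \sum_i i^{-s}$, which converges when $s>1$. If $s\le 1$, I would instead argue in $L^2(\inpdist)$, which only requires $\sum_i\overline\mu_i<\infty$ under the trace-class reading. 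Failing that, I would note that the RKHS-norm comparison uses only the sequence $(\overline\mu_i)$ and the orthonormality of the $\phi_i$, so it goes through formally once the RKHS is defined spectrally. Stationarity is not needed for the statement, but it is preserved in the Fourier case, because $\overline k$ is then still a function of $x-y$.
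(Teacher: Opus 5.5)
Your proposal is correct and is the paper's proof: the paper takes the same envelope $\overline\mu_i=\max_{j\ge i}\mu_j$ with the same eigenfunctions and checks exactly your two points, $\overline\mu_i\le \text{C}_1 i^{-s}$ and $f_i^2/\overline\mu_i\le f_i^2/\mu_i$. The paper does not address the well-definedness of $\overline k$ that you raise, and your caution is justified: for $s\le 1$ the envelope can fail to be summable (e.g.\ a Fourier spectrum with one cosine/sine pair per dyadic level and eigenvalue $2^{-\ell s}$ at level $\ell$ gives $\sum_i\overline\mu_i\asymp\sum_\ell 2^{\ell(1-s)}=\infty$), and the norm inequality plus monotonicity force any admissible $\overline\mu$ to dominate the envelope, so in that regime the conclusion holds only in the formal spectral sense, whereas for $s>1$ with bounded eigenfunctions your absolute-convergence argument settles it.
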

The proof, given in \cref{sec:proofs1}, relies on a simple envelope construction: set $\overline \mu_i \coloneqq \max_{j\ge i}\mu_j$ while keeping the same eigenfunctions, which makes the sequence non-increasing. Assuming $\defpoly$ is crucial in this context, as it extends easily to the new sequence of eigenvalues $\overline \mu_i$. As noted above, since $\defpoly \implies \defeff$, the previous result also entails that the new kernel $\overline k$ has polynomial effective dimension with the same exponent $s$ as the original kernel. If the original Mercer basis has logarithmic spectral Lebesgue growth, then the same is true for $\overline k$, because the basis has not changed. Therefore, \cref{theorem:expzero} answers one part of the previous question.

Is it possible to control the effective dimension of $\overline k$ by assuming only a polynomial bound on the effective dimension of $k$? The next result gives a partial negative answer: $\defeff$ alone is not enough to guarantee a strictly better asymptotic rate for the smoothness term. This obstruction applies not only to our simple smoothing technique, but to any candidate kernel $\overline k$ that preserves the Mercer basis.
\begin{restatable}{theorem}{lowerbound}\label{thm:lowerbound}
For every $s>1$, there is a Fourier-diagonal kernel $k$ satisfying $\defeff$ of degree $s$ such that
    $$\limsup_{\tau\to 0}\frac{\Lambda(\mathsf{P}_{\tau})}{\sqrt{d_\text{eff}(k|\tau)}}\gtrsim 1$$
    and there is no kernel $\overline k$ with the same Mercer eigenfunctions which satisfies all the following three conditions $(\mathsf i)$ Norm equivalence: for some constant $C>0$, for every $f\in \Hs$, $\|f\|_{\overline \Hs}\le C\|f\|_{\Hs}$ $(\mathsf{ii})$ $\sum_{i=1}^\infty \left|\Delta^1\overline h_i\right|\lesssim (1/\tau)^{1/s'}$ for some $s'>s$ $(\mathsf{iii})$ $d_\text{eff}(\overline k|\tau)$ is polynomial in $1/\tau$ (i.e. $\defeff$).
\end{restatable}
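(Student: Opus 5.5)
The plan is to build $k$ as a \emph{lacunary spike} spectrum in the Fourier basis on $[-1,1]$ with uniform $\inpdist$. Since \cref{thm:boch} lets us prescribe any summable nonnegative Fourier eigenvalues, we can choose them freely. Fix a very sparse sequence of frequencies $n_1<n_2<\dots$ with $n_k=\lceil e^{k}\rceil$, or faster; in particular the sequence is Hadamard lacunary, $n_{k+1}/n_k\ge q>1$. Put eigenvalue $\mu_{n_k}=k^{-s}$ on the Fourier mode(s) of frequency $n_k$, and take all other eigenvalues to be $0$, or positive but negligibly small if strict positivity is needed. Then $d_\text{eff}(k|\tau)=\sum_k k^{-s}/(\tau+k^{-s})$. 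The terms with $k\le K:=\tau^{-1/s}$ contribute at most $K$. The tail contributes at most $\tau^{-1}\sum_{k>K}k^{-s}\lesssim \tau^{-1}K^{1-s}=K$, which is where $s>1$ is used. Hence $d_\text{eff}(k|\tau)\asymp\tau^{-1/s}$ and $\defeff$ holds with degree $s$. The kernel is bounded because $\sum_k k^{-s}<\infty$.

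Next comes the lower bound on the Lebesgue constant. $\mathsf P_\tau$ is a Fourier multiplier, i.e.\ convolution with $D_\tau(x)=\sum_k h_{n_k}\cos(\pi n_k x)$ (plus sine parts), where $h_{n_k}=k^{-s}/(\tau+k^{-s})\ge 1/2$ for $k\le K$. For a convolution operator, $\|\mathsf P_\tau\|_{\infty\to\infty}=\|D_\tau\|_{L^1}$. For Hadamard lacunary series, the classical Zygmund/Sidon inequality gives $\|D_\tau\|_{L^1}\gtrsim_q\|D_\tau\|_{L^2}\ge \tfrac12\sqrt{K}$. Therefore $\Lambda(\mathsf P_\tau)\gtrsim \sqrt{d_\text{eff}(k|\tau)}$, which proves the first claim.

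The core of the proof is the nonexistence of $\overline k$. With the same eigenbasis, $\|f\|_\Hs^2=\sum_i f_i^2/\mu_i$, so norm equivalence $(\mathsf i)$ forces $\overline\mu_{n_k}\ge \mu_{n_k}/C^2$. Hence $\overline h_{n_k}\ge c:=1/(1+C^2)$ for all $k\le K$. Consider the $K-1$ gaps $(n_k,n_{k+1})$ with $k<K$. If $\overline h$ drops below $c/2$ somewhere inside a gap, that gap contributes at least $c$ to $\sum_i|\Delta^1\overline h_i|$. By $(\mathsf{ii})$ the total variation is $\lesssim\tau^{-1/s'}=K^{s/s'}=o(K)$, so for small $\tau$ all but $o(K)$ gaps are \emph{bridged}, meaning $\overline h_j\ge c/2$ on the whole gap. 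In particular, some gap with index $k\ge K/2$ is bridged. It contains about $n_{k+1}-n_k\gtrsim e^{K/2}$ indices $j$, each with $\overline\mu_j/(\tau+\overline\mu_j)\ge c/2$. This gives $d_\text{eff}(\overline k|\tau)\gtrsim e^{\tau^{-1/s}/2}$, which is superpolynomial in $1/\tau$ and contradicts $(\mathsf{iii})$.

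I expect the main obstacle to be the lacunary $L^1$ lower bound, together with the bookkeeping between the sine/cosine modes and the index ordering $i$ of the Mercer basis. The variation must be measured along the paper's ordering of eigenfunctions, so I will index the Fourier modes by frequency and place the spikes at the positions of frequencies $n_k$. For the $L^1$ bound I will invoke the Sidon/Zygmund inequality $\|\sum a_k\cos(n_k x)\|_{L^1}\gtrsim_q(\sum a_k^2)^{1/2}$ as a black box. The alternative is a direct Riesz-product argument: test $D_\tau$ against $g=\prod_{k\le K}(1+ i K^{-1/2}\cos(\pi n_k x))$ normalized to sup-norm $O(1)$. A secondary point is that $(\mathsf{ii})$ is only required along some sequence $\tau\to0$, so the contradiction is derived for all sufficiently small $\tau$.
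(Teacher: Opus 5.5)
Your proposal is correct. Its second half is essentially the paper's Part 2. Norm equivalence gives $\overline\mu_i\ge C^{-2}\mu_i$ on the support. Each gap between consecutive spikes then either carries a constant amount of variation of $\overline h$ (the paper's Scenario A), or is bridged and contributes exponentially many indices to $d_\text{eff}(\overline k|\tau)$ (Scenario B).

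The real difference is how you lower-bound $\Lambda(\mathsf{P}_\tau)$.
\begin{itemize}
\item The paper uses a \emph{random} spectrum on the dyadic positions $i\approx 2^j$, with i.i.d.\ $\mathrm{Bern}(1/2)$ switches. It bounds $\mathbb E\,\Lambda(\mathsf{P}_\tau)$ from below through a Khintchine-type variance estimate for the random sum $G_\tau(x,y)$. It then extracts a single kernel by a reverse-Markov and limsup-of-events argument, intersected with a law of large numbers ensuring that a positive fraction of dyadic blocks have both endpoints switched on.
\item You use a deterministic Hadamard-lacunary spectrum and Zygmund's $L^1$--$L^2$ equivalence for lacunary series.
\end{itemize}

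Your route buys several things. It gives an explicit kernel. It gives $\Lambda(\mathsf{P}_\tau)\gtrsim\sqrt{d_\text{eff}(k|\tau)}$ for every small $\tau$, which is a liminf and so stronger than the stated limsup. It also makes the LLN unnecessary, since every block has active endpoints. Your absolute threshold $c/2$ on $\overline h$ makes each unbridged gap cost at least $c$, uniformly in $k$. By contrast, the paper's relative threshold $\overline\mu_{i^*}\le C^{-2}j^{-s}/2$ only yields a drop in $\overline h$ of order $\tau C^2 j^s$. That drop is small for $j\ll J$, so its Scenario A count implicitly needs blocks with $j$ comparable to $J$.

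The paper's route, in exchange, needs only elementary probability rather than lacunary-series theory. It also works for an arbitrary ordering of the Fourier basis. You must order by frequency so that lacunarity in frequency coincides with exponentially long index gaps; this is harmless for an existence statement.

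One small caveat: if you use the Riesz-product fallback, take ratio at least $3$ (e.g.\ $n_k=3^k$) so that cross frequencies cannot land on spikes. The exponential gaps needed in Part 2 are unaffected by this change.
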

The proof is given in \cref{sec:proofs1}.
The previous result shows that there exist kernels that are intrinsically ill-conditioned with respect to the Lebesgue constant. The kernel $k$ constructed above satisfies $\Lambda(\mathsf{P}_{\tau})\asymp \sqrt{d_\text{eff}(k|\tau)}$, i.e., the worst-case behavior permitted by \cref{thm:boundlebe}, and one cannot in general find a same-basis kernel $\overline k$ that both preserves a comparable RKHS norm and polynomial effective dimension and also yields a strictly better asymptotic rate for $\sum_{i=1}^\infty \left|\Delta^1\overline h_i\right|$. In particular, $\defeff$ alone does not force logarithmic or polylogarithmic control of this quantity.
\paragraph{Multivariate product kernels.}
The preceding one-dimensional result has a multivariate analogue for Fourier-diagonal product kernels. As discussed above, Fourier diagonalization can be enforced by periodizing stationary kernels, with only a wrap-around perturbation on the observed domain when correlations decay. 
To state the result, let $k_0$ be a one-dimensional kernel on $[-1,1]$, and define the product kernel $k_\Pi$ on $[-1,1]^m$ by
\begin{equation}
k_\Pi(x,y)\coloneqq \prod_{j=1}^m k_0(x_j,y_j).\label{eq:product_kernel}
\end{equation}
\begin{restatable}{theorem}{lebesgueproduct}\label{thm:lebesgueproduct}
    Let $k_0(x,y)=\mathscr K_0(x-y)$ be a one-dimensional kernel on $[-1,1]$ satisfying  \cref{ass:valboundisoker}. Assume that the Mercer eigenfunctions of $k_0$ are the Fourier basis and that its eigenvalues are non-increasing. Then the population KRR operator associated with the product kernel $k_\Pi$ in \eqref{eq:product_kernel} satisfies
    $$\Lambda(\mathsf{P}_{\tau})\lesssim_m \log^{2m-1}(e+\kappa^m/\tau).$$
\end{restatable}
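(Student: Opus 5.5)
The plan is to write $\mathsf P_\tau$ as a Fourier multiplier on the torus $[-1,1]^m$ and bound its $L^\infty\to L^\infty$ norm by the $L^1$ norm of its convolution kernel. Index the one-dimensional Fourier basis by frequency $j\ge 0$, with eigenvalues $\lambda_j$ non-increasing in $j$ (cosine and sine of the same frequency share an eigenvalue). Then the product kernel $k_\Pi$ has eigenfunctions $e_{\bm j}(x)=\prod_l \phi_{j_l}(x_l)$ with eigenvalues $\mu_{\bm j}=\prod_{l=1}^m \lambda_{j_l}$, where $\bm j\in\mathbb N^m$. Write $g(\bm j)\coloneqq h(\mu_{\bm j})$ with $h(u)=u/(\tau+u)$. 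Since $u\mapsto h(u)$ is increasing and each $\lambda_{j_l}$ is non-increasing, $g$ is coordinatewise non-increasing and takes values in $[0,1]$. Moreover $\mathsf P_\tau f=G*f$ with $G=\sum_{\bm j} g(\bm j)\,\prod_l D^{(1)}_{j_l}$ (in complex form, $G(x)=\sum_{\bm j\in\mathbb Z^m} g(|\bm j|)e^{i\pi \bm j\cdot x}$), so $\Lambda(\mathsf P_\tau)\le \|G\|_{L^1}$.

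The first step is a multivariate Abel summation, the analogue of \cref{thm:lebker}. Let $\Delta_l$ be the forward difference in coordinate $l$. Summing by parts in each coordinate gives
\[
G=\sum_{\bm j\in\mathbb N^m} \Big(\prod_{l=1}^m(-\Delta_l)\Big)g(\bm j)\;\prod_{l=1}^m \mathcal D_{j_l}(x_l),
\]
where $\mathcal D_J$ is the one-dimensional Dirichlet kernel of order $J$, with $\|\mathcal D_J\|_{L^1}\lesssim \log(e+J)$. Boundary terms vanish because $g\to0$ along every coordinate direction: if some $\lambda_j\to 0$ then $\mu_{\bm j}\to0$. If $\lambda$ is eventually constant and positive, the spectrum is finite-dimensional and a direct bound suffices. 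The tensor kernels factorize, so
\[
\Lambda(\mathsf P_\tau)\lesssim_m \sum_{\bm j}\big|\Delta_1\cdots\Delta_m g(\bm j)\big|\prod_{l}\log(e+j_l).
\]

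The main obstacle is controlling the mixed difference. For $m=1$, monotonicity made $\sum|\Delta g|$ telescope to at most $1$. For $m\ge2$, coordinatewise monotonicity does not fix the sign of the mixed difference, so there is no telescoping. I will therefore not bound the mixed differences directly. Instead I will truncate: let $S=\{\bm j:\mu_{\bm j}\ge \tau/N\}$ with $N$ a large polynomial in $\kappa^m/\tau$, and split $g=g\mathbb 1_S+g\mathbb 1_{S^c}$.

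The tail piece is handled by noting that on $S^c$ we have $g\le \mu_{\bm j}/\tau$. Its multiplier norm is at most $\sum_{\bm j\notin S} g$, and I expect to make this $O(1)$ by choosing $N$ appropriately. If summability of $\lambda_j$ fails, I fall back to a hyperbolic-cross bound instead.

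The main piece is handled by writing $g$ as a layer-cake superposition, $g(\bm j)=\int_0^1 \mathbb 1\{g(\bm j)>t\}\,dt$. Each superlevel set $\{\mu_{\bm j}>c\}=\{\sum_l \log(1/\lambda_{j_l})<\log(1/c)\}$ is a coordinatewise down-closed staircase set in $\mathbb N^m$. For the Lebesgue constant of such staircase (hyperbolic-cross-type) partial sums I will use the known estimate: the $L^1$ norm of the Dirichlet kernel of a down-closed set fitting in $[0,M]^m$ is $\lesssim_m \log^{2m-2}(e+M)$ times its number of corners. For monotone sets this is obtained by decomposing into dyadic blocks.

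To integrate over $t$, I reparametrize in $\log$ scale. On $S$, the values of $\log(1/\mu)$ range over an interval of length $\log(\kappa^m N/\tau)$. I will discretize $t$ at geometric scales, and at each scale use the bound for one level set; summing across the $O(\log(e+\kappa^m/\tau))$ geometric scales gives the extra logarithmic factor. The coordinates satisfy $j_l\le M$ with $\lambda_M\ge \tau/(N\kappa^{m-1})$, so the logarithms are of $M$, which I expect to be polynomial in $\kappa^m/\tau$ under monotone decay; this is the point where I am least sure and may need a separate argument. Combining gives $\log^{2m-2}\cdot\log=\log^{2m-1}(e+\kappa^m/\tau)$. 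This counting is where the exponent $2m-1$ should appear, and it matches the target rate. Verifying the staircase Dirichlet-kernel bound is the step I expect to be hardest.
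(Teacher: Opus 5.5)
\textbf{There is a genuine gap after the Abel-summation step.} Your reduction via multivariate Abel summation to $\sum_{\bm j}|\Delta_1\cdots\Delta_m g(\bm j)|\prod_l\log(e+j_l)$ is exactly the paper's first step (\cref{thm:lebkermulti}). The route you take afterwards does not work.

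\emph{The layer-cake step fails.} The bound $\bigl\|\sum_{\bm j\in S}g(\bm j)e_{\bm j}\bigr\|_{L^1}\le\int_0^1\|D_{E_t}\|_{L^1}\,dt$ gives polylogarithmic growth only if each superlevel set $E_t=\{g>t\}\cap S$ has a polylogarithmic Lebesgue constant, and these sets do not. Take $m=2$ and one-dimensional eigenvalues $\lambda(k)=c(1+|k|)^{-2}$, which satisfy every hypothesis of the theorem. For each $t\in[1/4,3/4]$ one gets $E_t=\{k\in\mathbb Z^2:(1+|k_1|)(1+|k_2|)<N_t\}$ with $N_t\asymp\tau^{-1/2}$. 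This is a smooth hyperbolic cross with $\asymp\sqrt{N_t}$ corners and a genuinely curved boundary; near the diagonal it looks like a disk of radius $\sqrt{N_t}$. Its Dirichlet kernel therefore has $L^1$ norm growing polynomially in $N_t$.

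Consequences of this:
\begin{itemize}
\item Your ``$\log^{2m-2}(e+M)\times$ number of corners'' estimate gives a bound polynomial in $1/\tau$. That estimate is also wrong at $m=1$, where it gives $O(1)$ instead of $\log M$.
\item No sharper estimate for single level sets can rescue the argument. The triangle inequality over $t$ discards exactly the averaging across levels that makes $g$ a good multiplier, namely that $g$ is a smooth function of $\log\mu_{\bm j}$ with an $O(1)$-wide transition.
\item The tail step is also unsound. With only $\lambda_j\le\kappa/j$ (e.g.\ $\lambda_j\asymp 1/(j\log^2 j)$), the crude bound $\sum_{\bm j\notin S}\mu_{\bm j}/\tau$ decays only like $1/(\tau\log(N/\tau))$. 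Making it $O(1)$ forces $N$ exponential in $1/\tau$, which destroys the logarithms.
\end{itemize}

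\emph{The missing idea.} The lack of a sign is harmless, because you need no telescoping, only the total mixed variation of a smooth function. The paper writes $g(\bm i)=H_\tau(\lambda_{i_1},\dots,\lambda_{i_m})$ with $H_\tau(y)=\prod_jy_j/(\tau+\prod_jy_j)$. It then applies the fundamental theorem of calculus in each coordinate:
\[
|\Delta^{\bm 1}g(\bm i)|\le\int_{\prod_j[\lambda_{i_j+1},\lambda_{i_j}]}|\partial_1\cdots\partial_mH_\tau|\,dy .
\]
The argument then proceeds as follows.
\begin{itemize}
\item Since $\lambda$ is non-increasing, these boxes tile $(0,\lambda_1]^m$, so the weighted sum becomes one integral.
\item On each box, $\log(e+i_j)\lesssim\log(e+\kappa/y_j)$ by the trace bound $\lambda_i\le\kappa/i$ (\cref{lem:trace_eigen}). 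This also settles your doubt about $M$.
\item With $z=\prod_jy_j$ and $D=z\frac{d}{dz}$, one has $\partial_1\cdots\partial_mH_\tau=z^{-1}D^m\bigl(\frac{z}{\tau+z}\bigr)$ and $\bigl|D^m\bigl(\frac{z}{\tau+z}\bigr)\bigr|\lesssim_m\min\{z/\tau,\tau/z\}$.
\item In coordinates $y_j=\kappa e^{-r_j}$, with $t=\sum_jr_j$ and $A=\kappa^m/\tau$, the whole sum is bounded by $\int_{\mathbb R_+^m}\min\{Ae^{-t},A^{-1}e^{t}\}\prod_j(1+r_j)\,dr$.
\item Slicing over the simplices $\{\sum_jr_j=t\}$ produces the weight $(1+t)^{2m-1}$. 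Exponential decay away from $t=\log A$ then gives $\log^{2m-1}(e+A)$.
\end{itemize}
No truncation is needed, since the factor $Ae^{-t}$ handles the tail automatically. The exponent $2m-1$ comes from $m$ logarithmic weights times the $(m-1)$-dimensional log-volume of the surface $\prod_jy_j\simeq\tau$, not from counting level sets.
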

The proof, given in \cref{sec:proofs_multim}, follows from a multivariate version of \cref{thm:lebker}: the one-dimensional finite difference is replaced by a mixed finite difference over the coordinates of the Mercer multi-index.
This is still only polylogarithmic in the regularization scale. The dependence of the logarithmic exponent on $m$ is the expected fixed-dimensional cost of working with product spectra. It comes from bounding the total absolute mixed variation of the product multiplier near the hyperbolic surface $\prod_j\mu_{i_j}\simeq\tau$.

\section{Online Optimization}\label{sec:online}

In this section, we turn to the online problem. Here, the sample sequence $(x_t)_{t=1}^n$ is not fixed in advance, but is chosen sequentially in order to optimize $f$. This setting focuses on the online regret objective (see \cref{def:online}). The analysis of the previous section does not apply directly, as the exploration-exploitation dilemma requires visiting a highly skewed distribution of points, whereas the offline Lebesgue-constant analysis is tied to a fixed input distribution and its associated Mercer basis.
To remain consistent with the kernel-bandit literature cited below, in this section we switch back to the standard unnormalized regularization convention, so the regional KRRs use $K_t^A+\lambda I$ (equivalently $V_t^A=\lambda I+\sum_{s\le t}\phi(x_s)\phi(x_s)^\top$).

The problem of misspecification in this setting was extensively studied by
\cite{bogunovic2021misspecified}, who prove an upper bound for the regret of their algorithm \textsc{EC-GP-UCB} of the form
$$R_n^{\textsc{EC-GP-UCB}}\le \widetilde \Os(\gamma_n\sqrt{n}+\varepsilon \sqrt{\gamma_n}n).$$
Amplifying the misspecification with $\sqrt{\gamma_n}$ is particularly harmful for many kernels, such as the Mat\'ern family with parameters $m\gg \nu$, where $\gamma_n$ can be nearly linear in $n$. In such a case, only $\varepsilon \le n^{-1/2}$ would guarantee a non-vacuous bound. On the other hand, the result cannot be improved in full generality. In fact, kernelized bandits generalize linear bandits of dimension $d\approx \gamma_n$, where the regret may grow as $\varepsilon\sqrt d n$, as \cite{lattimore2020learning} shows.
\Cref{sec:floor} discusses a related obstruction for nested finite-dimensional linear approximations, where vanishing approximation error does not by itself rule out floors for feature-based bandit algorithms.
We therefore impose additional structural conditions on the kernel, natural ones that have appeared before in this literature. Under sup-norm misspecification, an $n\varepsilon$ contribution is in general inevitable; the main goal is to avoid additional multiplicative factors in front of this term.

\begin{definition}
    [$\defpolysub$: Polynomial eigendecay on subdomains] (Definition 1 in \cite{vakili2023kernelized}) A kernel satisfies polynomial eigendecay on subdomains of degree $(\alpha,s)>0$ for some constant $\text{C}_3>0$ if, when restricted to a hypercube of side $\rho\le 1$, its eigenvalues satisfy
    $\mu_i\le \text{C}_3 \rho^{\alpha}i^{-s}.$
\end{definition}
This definition strengthens the polynomial eigendecay condition $\defpoly$ by ensuring that the eigenvalues decrease whenever one restricts the domain of the kernel.
\begin{definition}
    [$\defbounded$: Bounded eigenfunctions on subdomains] (cf.\ Lemma~2 of \cite{vakili2023kernelized}) A kernel satisfies bounded eigenfunctions on subdomains for some constant $\text{C}_4>0$ if, when restricted to a hypercube of side $\rho\le 1$, its eigenfunctions satisfy $\sup_i \|\phi_i\|_\infty\le \text{C}_4$.
\end{definition}
Assuming bounded eigenfunctions is standard in the literature; see, for instance, \cite{salgia2023random,chatterji2019online,vakili2021information,riutort2023practical,liu2023estimation}. This assumption is satisfied for all kernels that admit the Fourier basis as eigenvectors, including most of those examined in the previous section.
Both assumptions hold for Mat\'ern kernels \cite{yang2020provably,vakili2023kernelized}. In particular, they satisfy $\defpolysub$ with $s=\frac{2\nu+m}{m}$ and $\alpha=2\nu$.

\subsection{Domain splitting approach}

In the random-design linear setting, \cite{maran2025beyond} makes the localization principle explicit through the Lebesgue constant of the projection operator. The same principle must be implemented differently in the online setting, because the offline Lebesgue-constant analysis does not extend directly to adaptive sampling. Instead, we exploit domain splitting to keep the number of samples per region under control. This makes the misspecification contribution in the regional confidence bounds depend on the local sample size, while subdomain eigendecay keeps the corresponding regional information gain small. A similar idea was already developed, for purposes unrelated to misspecification, by \cite{janz2020bandit}. Their algorithm $\pi$-\textsc{GP-UCB} relies on a dyadic splitting of each region whenever the number of data points it contains exceeds a given threshold. This creates $2^m$ new regions, on each of which KRR is fit anew. While they work only with Mat\'ern kernels, this idea remains valid under our assumptions.

\begin{algorithm}[t]
\footnotesize
\begin{algorithmic}[1]
\Procedure{$\pi$-\textsc{Misspec-GPUCB}}{$n,b,\lambda,B,\varepsilon,\delta$}
    \State Start with $\As_1\gets \{[-1,1]^m\}$
    \For{$t=1,\dots,n$}
        \State $A_t,x_t\gets \arg\max_{A\in \As_t,\,x\in A} \mathrm{UCB}_t(A,x)$ in \eqref{eq:ucb}
        \State Pull $x_t$ and receive $y_t$
        \If{$|N_t^{A_t}|\ge \rho_{A_t}^{-b}$}\label{algline:split}
            \State Split $A_t$ into $2^m$ sub-regions $\As_t^\circ$
            \State $\As_{t+1}\gets \As_{t}\cup \As_t^\circ \setminus \{A_t\}$
            \State Fit KRR estimator $\widehat \mu_t^A$ on all $A\in \As_{t+1}\setminus \As_{t}$
        \Else
            \State $\As_{t+1}\gets \As_{t}$
            \State Update KRR estimator $\widehat \mu_t^{A_t}$
        \EndIf
    \EndFor
\EndProcedure
\end{algorithmic}
\caption{$\pi$-\textsc{Misspec-GPUCB}}\label{alg:pigpucb}
\end{algorithm}

At each time step $t$, \cref{alg:pigpucb} maintains a collection of regions $\As_t$ partitioning the space $\Xs=[-1,1]^m$. It then fits a separate KRR on each region $A\in \As_t$, with mean estimate $\widehat \mu_{t-1}^A$ and posterior variance $\sigma_{t-1}^A(x)$. The algorithm then maximizes the following upper-confidence bound jointly over the region and the point in the region:
\begin{equation}
    \text{UCB}_t(A,x)=\widehat \mu_{t-1}^A(x)+\beta_t^A\sigma_{t-1}^A(x).\label{eq:ucb}
\end{equation}
The previous equation contains an exploration bonus that, for an upper bound $B$ on the function norm $\|\fwithin\|_\Hs$ and the maximal information gain $\gamma_{N_{t-1}^A}$ associated with the points available in the region before round $t$, takes the form
\begin{equation}
    \beta_t^A:=\frac{R}{\lambda^{1/2}}\sqrt{bm\log(4t/\delta)+1+\gamma_{N_{t-1}^A}}+B+\varepsilon\sqrt{\frac{N_{t-1}^A}{\lambda}}\label{eq:betachoice}.
\end{equation}
Here, $b$ is a parameter that determines how many points are needed to split a region; together with $\lambda$ it will be specified in the regret bound. The main feature of this upper bound relative to other works on kernelized bandits is the presence of the term $\varepsilon\sqrt{N_{t-1}^A/\lambda}$, which inflates the confidence width proportionally to the accumulated misspecification in region $A$. With this exploration bonus, we obtain the following regret bound.
\begin{restatable}{theorem}{regretbound}\label{thm:regretbound}
    Consider the online model of \cref{def:online} on $[-1,1]^m$. Under \cref{ass:valboundisoker}, let $\fwithin\in\Hs$ satisfy $\|\fwithin\|_\Hs\le B$ and $\|f-\fwithin\|_\infty\le\varepsilon$, and suppose the kernel satisfies $\defpolysub$ and $\defbounded$. Then running \cref{alg:pigpucb} with parameters $\lambda = 1$ and $b=\alpha$ yields, with probability at least $1-\delta$, a regret bound of order
    $$R_n^\mathscr{A}\le \widetilde \Os\left(n^\frac{2m+\alpha}{2m+2\alpha}+n\varepsilon\right).$$
\end{restatable}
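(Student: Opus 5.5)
The proof has three steps: a regional confidence bound, a per-region regret bound, and a sum over the dyadic tree. Fix a region $A$ that is active at round $t$, let $N=N_{t-1}^A$ be the number of its samples, and let $V_{t-1}^A=\lambda I+\sum_{s}\phi(x_s)\phi(x_s)^\top$ be its Gram operator.

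\textbf{Step 1: regional confidence bound.} I first show that, with probability $1-\delta$, simultaneously for all $t$, all active $A$ and all $x\in A$,
\[
|\widehat\mu_{t-1}^A(x)-f(x)|\le \beta_t^A\sigma_{t-1}^A(x)+\varepsilon.
\]
Write $y_s=\fwithin(x_s)+\eta_s+e_s$, where $e_s=f(x_s)-\fwithin(x_s)$ satisfies $|e_s|\le\varepsilon$. The KRR error then splits into three parts:
\begin{itemize}
\item the usual bias term $\le B\sigma_{t-1}^A(x)$, using $\|\fwithin|_A\|_{\Hs_A}\le\|\fwithin\|_\Hs$;
\item the self-normalized noise term, bounded by $\frac{R}{\sqrt\lambda}\sqrt{2\log(1/\delta')+\gamma_N}\,\sigma_{t-1}^A(x)$ via Abbasi-Yadkori/Chowdhury--Gopalan;
\item the misspecification term $\Psi(x)^\top(K+\lambda I)^{-1}\bm e$.
\end{itemize}
For the last term, Cauchy--Schwarz in the feature space gives $|\phi(x)^\top (V_{t-1}^A)^{-1}\sum_s\phi(x_s)e_s|\le \|\phi(x)\|_{(V^A_{t-1})^{-1}}\,\|\sum_s\phi(x_s)e_s\|_{(V^A_{t-1})^{-1}}$. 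The first factor is $\lambda^{-1/2}\sigma_{t-1}^A(x)$. For the second, $\|\Phi^\top\bm e\|_{(\Phi^\top\Phi+\lambda I)^{-1}}\le\|\bm e\|_2\le\varepsilon\sqrt N$. This yields exactly the extra term $\varepsilon\sqrt{N/\lambda}$ in \eqref{eq:betachoice}. The union bound runs over rounds and over the at most $O(t)$ regions ever created, each of depth $\le\log t$; this accounts for the $bm\log(4t/\delta)$ term. Finally, the $+\varepsilon$ outside the width comes from comparing $\fwithin$ with $f$ at $x$ itself.

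\textbf{Step 2: per-round regret.} By optimism, $f(x_*)-\varepsilon\le \mathrm{UCB}_t(A_t,x_t)$. Hence
\[
r_t\le 2\beta_t^{A_t}\sigma_{t-1}^{A_t}(x_t)+2\varepsilon .
\]
Summing the $2\varepsilon$ over rounds gives the $n\varepsilon$ term. Inside a region $A$ of side $\rho$, the split rule caps the sample count at $N\le\rho^{-b}+1$. By the elliptical potential lemma,
\[
\sum_{s\in A}\sigma_{s-1}^A(x_s)\lesssim\sqrt{N\gamma_N^A}.
\]
The misspecification part then contributes
\[
\varepsilon\sqrt{N}\cdot\sqrt{N\gamma_N^A}=\varepsilon N\sqrt{\gamma_N^A}.
\]
The key point is that the regional information gain $\gamma^A_N$ is polylogarithmic. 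Under $\defpolysub$ and $\defbounded$, the standard truncation argument gives
\[
\gamma_N^A\lesssim \min_D\Big(D\log N+N\rho^\alpha D^{1-s}\Big).
\]
With $b=\alpha$ we have $N\rho^\alpha\lesssim1$, so $\gamma^A_N=\widetilde O(1)$. Consequently the misspecification contribution is $\widetilde O(\varepsilon N_A)$ in each region, and summing over the partition gives $\widetilde O(n\varepsilon)$. The gain comes precisely from localization: every region has only $O(1)$ effective dimension.

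\textbf{Step 3: summing the stochastic terms over the tree.} With $\gamma^A=\widetilde O(1)$ and $\beta\lesssim R\sqrt{\log}+B$, each region contributes $\widetilde O(\sqrt{N_A})$. The difficulty is that there are many regions. The count is controlled as in \cite{janz2020bandit}. A region of side $\rho$ is only sampled if its UCB is near-maximal, so its points have gap $\lesssim\rho^{\alpha/2}$ plus $\varepsilon$, because the UCB width within a region scales like $\sqrt{\mu_1}\sim\rho^{\alpha/2}$ once $\sigma$ is bounded by the local kernel scale.

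I then split by depth. Regions at depth $h$ (side $2^{-h}$) number at most $2^{hm}$, and each holds $\le 2^{h\alpha}$ points. Up to a cutoff depth $H$, the total regret is bounded by $\sum_{h\le H}2^{hm}\sqrt{2^{h\alpha}}$. Beyond $H$, it is bounded by $n\,2^{-H\alpha/2}$ (plus $n\varepsilon$). Balancing $2^{H(m+\alpha/2)}=n2^{-H\alpha/2}$ gives
\[
2^H=n^{1/(m+\alpha)},
\]
and therefore regret $n^{(2m+\alpha)/(2m+2\alpha)}$, which is the claimed rate.

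\textbf{Main obstacle.} I expect the hardest step to be making Step 3 rigorous. One has to show that the per-region width, including the $\varepsilon\sqrt{N/\lambda}$ inflation, does not break the depth-wise bound $r_t\lesssim\rho^{\alpha/2}+\varepsilon$, so that $\varepsilon$ enters only additively. I would try to bound $\sigma^A\le\sqrt{k_A(x,x)}$ and exploit the fact that the misspecification width is $\varepsilon\sqrt N\sigma\lesssim\varepsilon\rho^{-\alpha/2}\rho^{\alpha/2}=\varepsilon$.
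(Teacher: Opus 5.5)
Your Steps 1 and 2 match the paper. The width with the extra $\varepsilon\sqrt{N_{t-1}^A/\lambda}$ term is Lemma~2 of Bogunovic--Krause combined with the well-specified regional bound. Optimism then gives $r_t\le 2\beta_t^{A_t}\sigma_{t-1}^{A_t}(x_t)+2\varepsilon$, and $b=\alpha$ makes every regional information gain polylogarithmic.

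The gap is in Step 3, in the bound $n2^{-H\alpha/2}$ for regions deeper than $H$. It rests on the width in a region of side $\rho$ being $\lesssim\rho^{\alpha/2}$, via $\sigma^A\le\sqrt{k_A(x,x)}\sim\sqrt{\mu_1}\sim\rho^{\alpha/2}$. For a stationary kernel this is false. Restricting to $A$ changes the Mercer eigenvalues but not the diagonal: $k_A(x,x)=\mathscr K(0)$ does not shrink with $\rho$. Indeed, under the normalized measure on $A$ the restricted eigenvalues still sum to $\mathscr K(0)$, so they cannot all be $O(\rho^\alpha)$. Concretely, $\sigma_{t-1}^A(x)$ is of order $\sqrt{\mathscr K(0)}$ wherever $A$ has no nearby data. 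A freshly created child may inherit no points from its parent, so for its first queries the optimism bound gives nothing better than $O(1)$. Your closing estimate fails for the same reason: pointwise, $\varepsilon\sqrt N\sigma$ is only $O(\varepsilon\rho^{-\alpha/2})$. It is $\widetilde{\mathcal O}(\varepsilon)$ only on average over the region, which your Step 2 already shows. The subdomain eigendecay can be used only through $\gamma^A$, never as a bound on individual widths.

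What is missing is a counting argument in place of the width estimate, and with it your depth split works. A depth-$(h-1)$ region splits only after collecting $\gtrsim 2^{(h-1)\alpha}$ points, and regions of equal depth are disjoint. Hence there are $O(n2^{-h\alpha})$ regions at depth $h$ (constants depending on $m,\alpha$), jointly holding at most $n$ points. Cauchy--Schwarz then gives $\sum_{A\text{ at depth }h}\sqrt{\halt(A)}\lesssim n2^{-h\alpha/2}$. Summing over $h>H$ recovers your $n2^{-H\alpha/2}$, and your balancing yields the claimed rate.

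The paper does this in one stroke. It writes $R_n\le\sqrt{n\sum_t(2\beta_t^{A_t}\sigma_{t-1}^{A_t}(x_t))^2}+2n\varepsilon$ and bounds $\sum_t(\cdot)^2\le 16\lambda\sum_{A\in\widetilde\As_n}\overline\beta_A^2\gamma^A_{\halt(A)}$. It then uses $|\widetilde\As_n|\le 9^m n^{m/(m+\alpha)}$ for the stochastic part and $\sum_A\halt(A)\le n(L+1)$, with $L=O(\log n)$ the maximal depth, for the $\varepsilon^2\halt(A)$ part of $\overline\beta_A^2$. That last fact is also what your Step 2 needs when you ``sum over the partition'': the regions ever used form a dyadic tree, not a partition, but each point lies in at most one region per depth, so this costs only a logarithm. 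Apart from that, your per-region treatment of the misspecification term is a valid alternative to the paper's global Cauchy--Schwarz.
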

The proof is given in \cref{sec:proofs2}.
Under the assumptions of the theorem, we have in general $\gamma_n=\widetilde \Theta(n^\frac{m}{m+\alpha})$. Therefore, the regret bound can equivalently be written as
$\widetilde{\mathcal O}(\sqrt{\gamma_n n}+n\varepsilon)$, which improves over the bound $\gamma_n\sqrt{n}+\sqrt{\gamma_n}n\varepsilon$ from \cite{bogunovic2021misspecified}. This is particularly relevant for the Mat\'ern kernel, where tight bounds for maximal information gain are known \cite{vakili2021information}, namely
$$\gamma_n=\widetilde \Theta(n^\frac{m}{m+2\nu})=\widetilde \Theta(n^\frac{m}{m+\alpha}).$$
In this case, our theorem recovers the optimal well-specified regret rate $\widetilde \Theta(n^\frac{m+\nu}{m+2\nu})$, up to the misspecification term.
For this kernel, the choice $b=\alpha=2\nu$ does not correspond to the one by \cite{janz2020bandit}, which takes $b=\frac{m+2\nu}{m+1}$. This reflects a different way to bound the information gain.

\section{Conclusions}

We studied misspecified kernelized optimization through a common localization perspective. In the offline setting, localization acts through the approximation operator and yields logarithmic misspecification amplification under monotone spectra and polylogarithmic amplification for multivariate product kernels, improving on the generic square-root control. The key technical ingredient is an explicit control of $\Lambda(\mathsf{P}_{\tau})$ via discrete spectral smoothness. This yields sharp bounds under monotone spectra, and a constructive monotone-envelope reduction covers the polynomial-eigendecay case.

In the online setting, localization acts through the query space. We analyzed a domain-splitting UCB strategy and derived regret of order $\widetilde{\mathcal O}(\sqrt{\gamma_n n}+n\varepsilon)$ under subdomain eigendecay assumptions. Relative to prior misspecified kernel-bandit bounds, this removes an additional $\sqrt{\gamma_n}$ penalty from the misspecification-dependent term.

Finally, the lower-bound result shows that polynomial effective dimension alone does not imply logarithmic misspecification amplification. Identifying the minimal structural assumptions sufficient for sharper online guarantees remains open.


\bibliographystyle{plainnat}
\bibliography{references}

\begin{thebibliography}{33}
\providecommand{\natexlab}[1]{#1}
\providecommand{\url}[1]{\texttt{#1}}
\expandafter\ifx\csname urlstyle\endcsname\relax
  \providecommand{\doi}[1]{doi: #1}\else
  \providecommand{\doi}{doi: \begingroup \urlstyle{rm}\Url}\fi

\bibitem[Aronszajn(1950)]{aronszajn1950theory}
Nachman Aronszajn.
\newblock Theory of reproducing kernels.
\newblock \emph{Transactions of the American mathematical society}, 68\penalty0 (3):\penalty0 337--404, 1950.

\bibitem[Bogunovic and Krause(2021)]{bogunovic2021misspecified}
Ilija Bogunovic and Andreas Krause.
\newblock Misspecified {Gaussian} process bandit optimization.
\newblock \emph{Advances in neural information processing systems}, 34:\penalty0 3004--3015, 2021.

\bibitem[Camilleri et~al.(2021)Camilleri, Jamieson, and Katz-Samuels]{camilleri2021high}
Romain Camilleri, Kevin Jamieson, and Julian Katz-Samuels.
\newblock High-dimensional experimental design and kernel bandits.
\newblock In \emph{International Conference on Machine Learning}, pages 1227--1237. PMLR, 2021.

\bibitem[Chatterji et~al.(2019)Chatterji, Pacchiano, and Bartlett]{chatterji2019online}
Niladri Chatterji, Aldo Pacchiano, and Peter Bartlett.
\newblock Online learning with kernel losses.
\newblock In \emph{International Conference on Machine Learning}, pages 971--980. PMLR, 2019.

\bibitem[Chowdhury and Gopalan(2017)]{chowdhury2017kernelized}
Sayak~Ray Chowdhury and Aditya Gopalan.
\newblock On kernelized multi-armed bandits.
\newblock In \emph{International Conference on Machine Learning}, pages 844--853. PMLR, 2017.

\bibitem[Dong and Yang(2023)]{dong2023does}
Jialin Dong and Lin Yang.
\newblock Does sparsity help in learning misspecified linear bandits?
\newblock In \emph{International Conference on Machine Learning}, pages 8317--8333. PMLR, 2023.

\bibitem[Du et~al.(2020)Du, Kakade, Wang, and Yang]{du2020good}
Simon~S Du, Sham~M Kakade, Ruosong Wang, and Lin~F Yang.
\newblock Is a good representation sufficient for sample efficient reinforcement learning?
\newblock In \emph{International Conference on Learning Representations}, 2020.

\bibitem[Ezzerg et~al.(2025)Ezzerg, Bogunovic, and Knoblauch]{ezzerg2025robust}
Abdelhamid Ezzerg, Ilija Bogunovic, and Jeremias Knoblauch.
\newblock Robust bayesian optimisation with unbounded corruptions.
\newblock \emph{arXiv preprint arXiv:2511.15315}, 2025.

\bibitem[Folland(2009)]{folland2009fourier}
Gerald~B Folland.
\newblock \emph{Fourier analysis and its applications}, volume~4.
\newblock American Mathematical Soc., 2009.

\bibitem[Guest(1958)]{guest1958spacing}
P.~G. Guest.
\newblock The spacing of observations in polynomial regression.
\newblock \emph{The Annals of Mathematical Statistics}, 29\penalty0 (1):\penalty0 294--299, 1958.

\bibitem[Hsu et~al.(2012)Hsu, Kakade, and Zhang]{hsu2012random}
Daniel Hsu, Sham~M Kakade, and Tong Zhang.
\newblock Random design analysis of ridge regression.
\newblock In \emph{Conference on learning theory}, pages 9.1--9.24. JMLR Workshop and Conference Proceedings, 2012.

\bibitem[Janz et~al.(2020)Janz, Burt, and Gonz{\'a}lez]{janz2020bandit}
David Janz, David Burt, and Javier Gonz{\'a}lez.
\newblock Bandit optimisation of functions in the {Mat{\'e}rn} kernel {RKHS}.
\newblock In \emph{International Conference on Artificial Intelligence and Statistics}, pages 2486--2495. PMLR, 2020.

\bibitem[Kiefer and Wolfowitz(1960)]{kiefer1960equivalence}
Jack Kiefer and Jacob Wolfowitz.
\newblock The equivalence of two extremum problems.
\newblock \emph{Canadian Journal of Mathematics}, 12:\penalty0 363--366, 1960.

\bibitem[Lattimore and Szepesv{\'a}ri(2020)]{lattimore2020bandit}
Tor Lattimore and Csaba Szepesv{\'a}ri.
\newblock \emph{Bandit algorithms}.
\newblock Cambridge University Press, 2020.

\bibitem[Lattimore et~al.(2020)Lattimore, Szepesv\'ari, and Weisz]{lattimore2020learning}
Tor Lattimore, Csaba Szepesv\'ari, and Gellert Weisz.
\newblock Learning with good feature representations in bandits and in rl with a generative model.
\newblock In \emph{International conference on machine learning}, pages 5662--5670. PMLR, 2020.

\bibitem[Li and Scarlett(2022)]{li2022gaussian}
Zihan Li and Jonathan Scarlett.
\newblock {Gaussian} process bandit optimization with few batches.
\newblock In \emph{International Conference on Artificial Intelligence and Statistics}, pages 92--107. PMLR, 2022.

\bibitem[Liu and Li(2023)]{liu2023estimation}
Zejian Liu and Meng Li.
\newblock On the estimation of derivatives using plug-in kernel ridge regression estimators.
\newblock \emph{Journal of Machine Learning Research}, 24\penalty0 (266):\penalty0 1--37, 2023.

\bibitem[Maran and Szepesv\'ari(2025)]{maran2025beyond}
Davide Maran and Csaba Szepesv\'ari.
\newblock Beyond least squares: Uniform approximation and the hidden cost of misspecification.
\newblock In \emph{The Thirty-ninth Annual Conference on Neural Information Processing Systems}, 2025.

\bibitem[Maran et~al.(2024)Maran, Metelli, Papini, and Restelli]{maran2024local}
Davide Maran, Alberto~Maria Metelli, Matteo Papini, and Marcello Restelli.
\newblock Local linearity: the key for no-regret reinforcement learning in continuous {MDP}s.
\newblock \emph{arXiv preprint arXiv:2410.24071}, 2024.

\bibitem[Parra and Tobar(2017)]{parra2017spectral}
Gabriel Parra and Felipe Tobar.
\newblock Spectral mixture kernels for multi-output gaussian processes.
\newblock \emph{Advances in Neural Information Processing Systems}, 30, 2017.

\bibitem[Riutort-Mayol et~al.(2023)Riutort-Mayol, B{\"u}rkner, Andersen, Solin, and Vehtari]{riutort2023practical}
Gabriel Riutort-Mayol, Paul-Christian B{\"u}rkner, Michael~R Andersen, Arno Solin, and Aki Vehtari.
\newblock Practical {Hilbert} space approximate {Bayesian} {Gaussian} processes for probabilistic programming.
\newblock \emph{Statistics and Computing}, 33\penalty0 (1):\penalty0 17, 2023.

\bibitem[Salgia et~al.(2021)Salgia, Vakili, and Zhao]{salgia2021domain}
Sudeep Salgia, Sattar Vakili, and Qing Zhao.
\newblock A domain-shrinking based bayesian optimization algorithm with order-optimal regret performance.
\newblock \emph{Advances in Neural Information Processing Systems}, 34:\penalty0 28836--28847, 2021.

\bibitem[Salgia et~al.(2023)Salgia, Vakili, and Zhao]{salgia2023random}
Sudeep Salgia, Sattar Vakili, and Qing Zhao.
\newblock Random exploration in bayesian optimization: Order-optimal regret and computational efficiency.
\newblock \emph{arXiv preprint arXiv:2310.15351}, 2023.

\bibitem[Seeger(2004)]{seeger2004gaussian}
Matthias Seeger.
\newblock {Gaussian} processes for machine learning.
\newblock \emph{International journal of neural systems}, 14\penalty0 (02):\penalty0 69--106, 2004.

\bibitem[Srinivas et~al.(2010)Srinivas, Krause, Kakade, and Seeger]{srinivas2010gaussian}
Niranjan Srinivas, Andreas Krause, Sham~M. Kakade, and Matthias Seeger.
\newblock {Gaussian} process optimization in the bandit setting: No regret and experimental design.
\newblock In \emph{International Conference on Machine Learning}, pages 1015--1022, 2010.

\bibitem[Tsybakov(2009)]{tsybakov2009introduction}
Alexandre~B. Tsybakov.
\newblock \emph{Introduction to Nonparametric Estimation}.
\newblock Springer, New York, 2009.

\bibitem[Vakili and Olkhovskaya(2023)]{vakili2023kernelized}
Sattar Vakili and Julia Olkhovskaya.
\newblock Kernelized reinforcement learning with order optimal regret bounds.
\newblock \emph{Advances in Neural Information Processing Systems}, 36:\penalty0 4225--4247, 2023.

\bibitem[Vakili et~al.(2021{\natexlab{a}})Vakili, Bouziani, Jalali, Bernacchia, and Shiu]{vakili2021optimal}
Sattar Vakili, Nacime Bouziani, Sepehr Jalali, Alberto Bernacchia, and Da-shan Shiu.
\newblock Optimal order simple regret for {Gaussian} process bandits.
\newblock \emph{Advances in Neural Information Processing Systems}, 34:\penalty0 21202--21215, 2021{\natexlab{a}}.

\bibitem[Vakili et~al.(2021{\natexlab{b}})Vakili, Khezeli, and Picheny]{vakili2021information}
Sattar Vakili, Kia Khezeli, and Victor Picheny.
\newblock On information gain and regret bounds in {Gaussian} process bandits.
\newblock In \emph{International Conference on Artificial Intelligence and Statistics}, pages 82--90. PMLR, 2021{\natexlab{b}}.

\bibitem[Valko et~al.(2013)Valko, Korda, Munos, Flaounas, and Cristianini]{valko2013finite}
Michal Valko, Nathaniel Korda, R{\'e}mi Munos, Ilias Flaounas, and Nelo Cristianini.
\newblock Finite-time analysis of kernelised contextual bandits.
\newblock \emph{arXiv preprint arXiv:1309.6869}, 2013.

\bibitem[Wendland(1995)]{wendland1995piecewise}
Holger Wendland.
\newblock Piecewise polynomial, positive definite and compactly supported radial functions of minimal degree.
\newblock \emph{Advances in computational Mathematics}, 4\penalty0 (1):\penalty0 389--396, 1995.

\bibitem[Whitehouse et~al.(2023)Whitehouse, Ramdas, and Wu]{whitehouse2023sublinear}
Justin Whitehouse, Aaditya Ramdas, and Steven~Z Wu.
\newblock On the sublinear regret of {GP-UCB}.
\newblock \emph{Advances in Neural Information Processing Systems}, 36:\penalty0 35266--35276, 2023.

\bibitem[Yang et~al.(2020)Yang, Jin, Wang, Wang, and Jordan]{yang2020provably}
Zhuoran Yang, Chi Jin, Zhaoran Wang, Mengdi Wang, and Michael Jordan.
\newblock Provably efficient reinforcement learning with kernel and neural function approximations.
\newblock \emph{Advances in Neural Information Processing Systems}, 33:\penalty0 13903--13916, 2020.

\end{thebibliography}


\appendix

\section{Related Work}\label{sec:related}

\paragraph{Misspecification amplification}
As noted in the introduction, controlling misspecification amplification is a well-studied problem in sequential decision-making under uncertainty, including bandits and reinforcement learning, for algorithms based on linear and richer function approximation models.
For linear function approximation, \cite{du2020good} established the first
$\sqrt d$ amplification lower bound, which was improved by \cite{lattimore2020learning}, who also established an upper bound of $\sqrt d$
based on optimal design. These lower bounds hold for a worst-case feature map, while \textit{allowing the learner to choose the data distribution}. Several approaches to circumvent this bound have been explored in different contexts. \cite{maran2024local} shows how to remove it in the case of a locally linear feature map; \cite{dong2023does} improves the $\sqrt d$ amplification under sparsity. Very recently, \cite{maran2025beyond} showed that, in misspecified linear regression under random design, the Lebesgue constant of the projection operator is the exact quantity governing amplification of the uniform approximation error, and that this dependence is intrinsic for empirical squared-loss minimizers. Our \cref{thm:fundleb} is the kernel KRR analogue of that perspective, while the rest of \cref{sec:offline} studies when kernel spectral structure forces the corresponding Lebesgue constant to be only logarithmic or polylogarithmic. 

In the broader context of kernel methods, the work most closely related to ours is \cite{bogunovic2021misspecified}, which proves a regret upper bound for misspecified kernelized bandits. Their result provides the worst-case bound that serves as the starting point of our analysis. A related paper that studies kernelized bandits beyond the realizability assumption is
\cite{ezzerg2025robust}, which focuses on corrupted observations — a distinct robustness setting in which only weaker guarantees can be achieved.

\paragraph{Kernel methods in sequential decision-making}

Kernelized bandits build on the GP-UCB and kernelized UCB lines of work \cite{srinivas2010gaussian,chowdhury2017kernelized,valko2013finite}, where regret is typically controlled through posterior variance or information gain. From the viewpoint of \cite{maran2025beyond}, the main challenge here is to recover a comparable localization mechanism under adaptive sampling. While several algorithms achieving order-optimal regret bounds are now known \cite{salgia2021domain,camilleri2021high,li2022gaussian}, only the domain-splitting approach of \cite{janz2020bandit} appears well-suited to addressing misspecification amplification as well. The reason is structural: our analysis needs a mechanism that localizes the data and keeps the number of samples per region under control, so that both the misspecification contribution in the confidence widths and the regional information gain can be controlled locally. The strategy of \cite{janz2020bandit} provides exactly this ingredient. By contrast, other order-optimal approaches treat the dependence induced by GP-UCB-style sampling more globally \cite{whitehouse2023sublinear}, and do not offer an equally direct route to the same localized misspecification control.

\paragraph{Stationary kernels with non-monotone spectrum} While standard analyses of kernelized methods typically assume eigenvalue decay, \cref{sec:lebe} shows that the Lebesgue constant also depends on eigenvalue \textit{smoothness}. While the common class of Mat\'ern kernels has a monotonically decreasing spectrum, this property does not hold in general.
Wendland kernels \cite{wendland1995piecewise} provide compactly supported, positive definite radial functions with strong locality and sparsity properties, and are widely used in approximation theory and large-scale settings. Their compact support induces an oscillatory spectral density, leading to non-monotone eigenvalue behavior\footnote{Page 5 of \cite{wendland1995piecewise} provides an explicit expression for the Fourier transform of the kernel in terms of an integral involving Bessel functions, whose oscillatory nature implies that the spectrum is not monotone.}. Another class with this feature is given by spectral mixture kernels \cite{parra2017spectral}, which explicitly model the spectrum as a mixture, enabling flexible, multimodal frequency representations.

\section{Proofs of Section~\ref{sec:offline}}\label{sec:proofs1}

\fundleb*
\begin{proof}
    Let $f_\varepsilon\coloneqq f-\fwithin$ and $\bm f_\varepsilon\coloneqq (f_\varepsilon(x_t))_{t=1}^n$.
    First, by linearity of the KRR operator, we write
    \begin{align*}
        |\mathsf{P}_{\bm x,\tau}\bm y(x)-f(x)| &= |\mathsf{P}_{\bm x,\tau}(\bm y-\bm f_\varepsilon)(x)+\mathsf{P}_{\bm x,\tau}\bm f_\varepsilon(x)-f(x)|\\
        &= |\mathsf{P}_{\bm x,\tau}(\bm y-\bm f_\varepsilon)(x)+\mathsf{P}_{\bm x,\tau}\bm f_\varepsilon(x)-\fwithin(x)-f_\varepsilon(x)|\\
        &\le |\mathsf{P}_{\bm x,\tau}(\bm y-\bm f_\varepsilon)(x)-\fwithin(x)|+|\mathsf{P}_{\bm x,\tau}\bm f_\varepsilon(x)-f_\varepsilon(x)|.
    \end{align*}
    In the first term, note that
    $$\mathbb E[y_t-f_\varepsilon(x_t)|x_t]=\fwithin(x_t),$$
    with the noise being $R$-subgaussian, by assumption.
    Therefore, by Theorem~1 of \cite{vakili2021optimal} (noting that they use $\lambda^2$ for the regularization, which corresponds to $n\tau$ here), applied with failure probability $\delta/2$, this term is bounded with probability at least $1-\delta/2$ as follows
    $$|\mathsf{P}_{\bm x,\tau}(\bm y-\bm f_\varepsilon)(x)-\fwithin(x)|\le \frac{2R\sigma_{\bm x,\tau}(x)\sqrt{\log(4/\delta)}}{\sqrt {n\tau}}+ \sigma_{\bm x,\tau}(x)\|\fwithin\|_{\Hs}.$$
    We now turn to the second term:
    $$|\mathsf{P}_{\bm x,\tau}\bm f_\varepsilon(x)-f_\varepsilon(x)|\le \|\mathsf{P}_{\bm x,\tau}\bm f_\varepsilon-\mathsf{P}_{\tau}f_\varepsilon\|_\infty+\|\mathsf{P}_{\tau}f_\varepsilon-f_\varepsilon\|_\infty.$$
    For the first part, one has
    \begin{align*}\|\mathsf{P}_{\bm x,\tau}\bm f_\varepsilon-\mathsf{P}_{\tau}f_\varepsilon\|_\infty
        &\le \kappa \|\mathsf{P}_{\bm x,\tau}\bm f_\varepsilon-\mathsf{P}_{\tau}f_\varepsilon\|_\Hs\\
        &\le \frac{4\kappa^2 \|f_\varepsilon\|_\infty}{3n}\log(2/\delta)+\frac{\kappa^2 \|f_\varepsilon\|_2(1+\sqrt{\log(2/\delta)})}{\sqrt n}\\
        &\le \frac{4\kappa^2 \varepsilon}{3n}\log(2/\delta)+\frac{\kappa^2 \varepsilon(1+\sqrt{\log(2/\delta)})}{\sqrt n}.
    \end{align*}
    The first inequality follows from the fact that, for any $g\in \Hs$, one has
    $$\|g\|_\infty=\sup_{x\in \Xs}|g(x)|\le \sup_{x\in \Xs}\|g\|_\Hs \|k(x,\cdot)\|_\Hs \le \sqrt{\kappa}\|g\|_\Hs\le \kappa \|g\|_\Hs,$$
    where the last inequality uses $\kappa\ge 1$.
    The second inequality follows from Lemma~15 of \cite{liu2023estimation}, applied with failure probability $\delta/2$, and the last one from the definition of $f_\varepsilon$.

    For the second term, instead,
    $$\|\mathsf{P}_{\tau}f_\varepsilon-f_\varepsilon\|_\infty\le \|\mathsf{P}_{\tau}f_\varepsilon\|_\infty+\|f_\varepsilon\|_\infty\le (1+\|\mathsf{P}_{\tau}\|_{\infty\to \infty})\varepsilon.$$
    A union bound over the two probabilistic estimates completes the proof.
\end{proof}

\subsection{Proofs of Section~\ref{sec:lebe}}

\begin{restatable}{theorem}{fundlebtwo}\label{thm:fundlebtwo}
    Under \cref{ass:valboundisoker}, let $(x_t,y_t)_{t=1}^n$ be offline data with $x_t$ sampled i.i.d. from $\inpdist$ and conditionally $R$-subgaussian labels satisfying $\E[y_t|x_t]=f(x_t)$. Fix $\tau>0$, $\delta\in(0,1)$, and $\fwithin\in\Hs$, and set $\varepsilon=\|f-\fwithin\|_\infty$. If $n\ge 64d_\text{eff}(k|\tau)\log(576d_\text{eff}(k|\tau)/\delta)$ and the eigenfunctions are bounded by one, then for any fixed $x\in \Xs$, with probability at least $1-\delta$,
    \begin{align*}
        |\mathsf{P}_{\bm x,\tau}\bm y(x)-f(x)|&\le  \left(2R\sqrt{\log(6/\delta)}+\sqrt{n\tau}\|\fwithin\|_\Hs\right)\sqrt{\frac{ 2d_\text{eff}(k|\tau)}{n}}+\left(\Lambda(\mathsf{P}_{\tau})+1+o_n\right)\varepsilon,
    \end{align*}
    where $o_n=\frac{4\kappa^2}{3n}\log(3/\delta)+\frac{\kappa^2 (1+\sqrt{\log(3/\delta)})}{\sqrt n}$.
\end{restatable}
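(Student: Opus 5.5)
We reuse the proof of \cref{thm:fundleb}. The only change is in how the well-specified part is bounded: we replace the posterior-variance bound with one based on the effective dimension. Write $f_\varepsilon=f-\fwithin$ and split, exactly as before,
\[
|\mathsf{P}_{\bm x,\tau}\bm y(x)-f(x)|\le |\mathsf{P}_{\bm x,\tau}(\bm y-\bm f_\varepsilon)(x)-\fwithin(x)|+|\mathsf{P}_{\bm x,\tau}\bm f_\varepsilon(x)-f_\varepsilon(x)|.
\]
The misspecification term $|\mathsf{P}_{\bm x,\tau}\bm f_\varepsilon(x)-f_\varepsilon(x)|$ is handled verbatim as in \cref{thm:fundleb}. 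Using Lemma~15 of \cite{liu2023estimation} at confidence $\delta/3$ together with the triangle inequality through $\mathsf{P}_\tau f_\varepsilon$ gives $(\Lambda(\mathsf{P}_\tau)+1+o_n)\varepsilon$ with the stated $o_n$, which has $\log(3/\delta)$. The first term is controlled by \cite{vakili2021optimal} at confidence $\delta/3$, since $\E[y_t-f_\varepsilon(x_t)\mid x_t]=\fwithin(x_t)$. This yields
\[
\frac{2R\sigma_{\bm x,\tau}(x)\sqrt{\log(6/\delta)}}{\sqrt{n\tau}}+\sigma_{\bm x,\tau}(x)\|\fwithin\|_\Hs .
\]
So the whole theorem reduces to showing that, with probability at least $1-\delta/3$ over the i.i.d.\ design,
\[
\sigma^2_{\bm x,\tau}(x)\le \frac{2\tau\, d_\text{eff}(k|\tau)}{1}\qquad\text{i.e.}\qquad \frac{\sigma_{\bm x,\tau}(x)}{\sqrt{n\tau}}\le\sqrt{\frac{2d_\text{eff}}{n}},\quad \sigma_{\bm x,\tau}(x)\le \sqrt{n\tau}\sqrt{\frac{2d_\text{eff}}{n}}.
\]
Substituting this gives exactly the displayed bound.

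To bound the posterior variance we work in feature space. Let $\Sigma=\sum_i\mu_i\phi_i\otimes\phi_i=\E[\psi(z)\psi(z)^\top]$, where $\psi(z)=(\sqrt{\mu_i}\phi_i(z))_i$, and let $\widehat\Sigma=\frac1n\sum_t\psi(x_t)\psi(x_t)^\top$. The push-through identity gives
\[
\sigma^2_{\bm x,\tau}(x)=\tau\,\psi(x)^\top(\widehat\Sigma+\tau I)^{-1}\psi(x).
\]
It therefore suffices to show $(\widehat\Sigma+\tau I)\succeq \tfrac12(\Sigma+\tau I)$. Given this, $\sigma^2\le 2\tau\,\psi(x)^\top(\Sigma+\tau I)^{-1}\psi(x)=2\tau\sum_i\frac{\mu_i\phi_i(x)^2}{\mu_i+\tau}\le 2\tau d_\text{eff}(k|\tau)$, where the last step uses $|\phi_i|\le1$.

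The operator inequality is the main step. We rewrite it as $\|(\Sigma+\tau)^{-1/2}(\Sigma-\widehat\Sigma)(\Sigma+\tau)^{-1/2}\|_{op}\le 1/2$ and apply a matrix Bernstein inequality with intrinsic dimension (Tropp/Minsker) to the i.i.d.\ summands $Z_t=(\Sigma+\tau)^{-1/2}\psi(x_t)\psi(x_t)^\top(\Sigma+\tau)^{-1/2}$. The inputs to Bernstein are as follows. Boundedness: $\|Z_t\|\le \sum_i\mu_i\phi_i(x_t)^2/(\mu_i+\tau)\le d_\text{eff}$, again by $|\phi_i|\le 1$. Variance proxy: $\E Z_t^2\preceq d_\text{eff}\,\E Z_t$, and $\E Z_t=\Sigma(\Sigma+\tau)^{-1}$ has operator norm at most $1$ and trace $d_\text{eff}$, so its intrinsic dimension is at most $d_\text{eff}/\|\Sigma(\Sigma+\tau)^{-1}\|$. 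With the intrinsic-dimension prefactor of order $d_\text{eff}$, Bernstein gives deviation $1/2$ with failure probability $\delta/3$ once $n\gtrsim d_\text{eff}\log(d_\text{eff}/\delta)$. The constants $64$ and $576$ in the hypothesis should come out of the standard form $4\,\mathrm{intdim}\,\exp(-nt^2/(2(v+Lt/3)))$ with $t=1/2$ and $v,L\le d_\text{eff}$.

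The delicate point is the normalization $\|\E Z\|\le1$, which may be well below $1$ and so inflates the intrinsic dimension. If this causes trouble, I would first try to pass to the bound $\mathrm{intdim}\le 2d_\text{eff}$, valid when $\tau\le\mu_1$, and otherwise treat the case $\tau>\mu_1$ separately, where $d_\text{eff}<1$ and $\sigma^2\le\kappa$ trivially. Finally, a union bound over the three events (noise, design concentration, misspecification) gives total failure probability $\delta$.
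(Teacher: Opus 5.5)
Your proposal is correct and follows the paper's proof. It uses the same three-term split with \cite{vakili2021optimal} and \cite{liu2023estimation} at level $\delta/3$, and the same Woodbury reduction to $\widehat\Sigma+\tau I\succeq\tfrac12(\Sigma+\tau I)$. The only difference is that the paper gets this operator bound by citing Part~1 of Theorem~16 of \cite{hsu2012random}, while you rederive it from intrinsic-dimension matrix Bernstein; with $\mathrm{intdim}\le 2d_\text{eff}$ the constants $64$ and $576$ do suffice.

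One correction to your fallback for $\tau\ge\max_i\mu_i$. The claim $d_\text{eff}<1$ is false in general, and $\sigma^2\le\kappa$ is not the bound you need. That case is instead closed deterministically by $\sigma_{\bm x,\tau}^2(x)\le k(x,x)\le\sum_i\mu_i\le 2\sum_i\frac{\tau\mu_i}{\tau+\mu_i}=2\tau\, d_\text{eff}(k|\tau)$.
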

\begin{proof}
    Let $f_\varepsilon\coloneqq f-\fwithin$ and $\bm f_\varepsilon\coloneqq (f_\varepsilon(x_t))_{t=1}^n$.
    First, by linearity of the KRR operator, we write
    \begin{align*}
        |\mathsf{P}_{\bm x,\tau}\bm y(x)-f(x)| &= |\mathsf{P}_{\bm x,\tau}(\bm y-\bm f_\varepsilon)(x)+\mathsf{P}_{\bm x,\tau}\bm f_\varepsilon(x)-f(x)|\\
        &= |\mathsf{P}_{\bm x,\tau}(\bm y-\bm f_\varepsilon)(x)+\mathsf{P}_{\bm x,\tau}\bm f_\varepsilon(x)-\fwithin(x)-f_\varepsilon(x)|\\
        &\le \underbrace{|\mathsf{P}_{\bm x,\tau}(\bm y-\bm f_\varepsilon)(x)-\fwithin(x)|}_{\text{T1}}+\underbrace{\|\mathsf{P}_{\bm x,\tau}\bm f_\varepsilon-\mathsf{P}_{\tau}f_\varepsilon\|_\infty}_{\text{T2}}\\
        &\qquad +\underbrace{\|\mathsf{P}_{\tau}f_\varepsilon-f_\varepsilon\|_\infty}_{\text{T3}}.
    \end{align*}
    In the rest of the proof, we bound these three terms, one by one.
    \begin{itemize}
    \item [(T1)] By Theorem~1 of \cite{vakili2021optimal}, applied with failure probability $\delta/3$,
    $$T1\le \frac{2R\sigma_{\bm x,\tau}(x)\sqrt{\log(6/\delta)}}{\sqrt {n\tau}}+\sigma_{\bm x,\tau}(x)\|\fwithin\|_{\Hs}.$$
    We now upper bound the posterior standard deviation $\sigma_{\bm x,\tau}(x)$. Using the feature map of eigenfunctions $\bm \phi$, we define $\overline{\bm \phi}=(\sqrt \mu_i\phi_i)$, which satisfies
    $$\mathbb E_{x\sim \inpdist}[\overline{\bm \phi}(x)\overline{\bm \phi}(x)]=\text{diag}(\mu_i)=:\Sigma.$$
    By Woodbury identity, one has
    \begin{align*}
        \sigma_{\bm x,\tau}(x)^2&=\overline{\bm \phi}(x)^\top\left(\frac{1}{\tau}\widehat \Sigma+I\right)^{-1}\overline{\bm \phi}(x)\qquad \widehat\Sigma=\frac{1}{n}\sum_{t=1}^n \overline{\bm \phi}(x_t)\overline{\bm \phi}(x_t)^\top.
    \end{align*}
    Since the eigenfunctions are bounded by one\footnote{which entails a leverage bound with the same quantity, in their notation} and the samples are i.i.d. from $\inpdist$, Part 1 of Theorem~16 of \cite{hsu2012random}, applied with parameter $\delta/12$, ensures that, with probability at least $1-\delta/3$,
    \begin{align*}
        &\left\|(\Sigma+\tau I)^{1/2}(\widehat \Sigma+\tau I)^{-1}(\Sigma+\tau I)^{1/2}\right\|_{\text{op}}\\
        &\qquad\le 1+\sqrt{\frac{4d_\text{eff}(k|\tau)\log(576d_\text{eff}(k|\tau)/\delta)}{n}}+\frac{d_\text{eff}(k|\tau)\log(576d_\text{eff}(k|\tau)/\delta)}{n}.
    \end{align*}
    Since, by assumption, $n\ge 64d_\text{eff}(k|\tau)\log(576d_\text{eff}(k|\tau)/\delta)$, the latter implies that the norm of the matrix $M_n\coloneqq(\Sigma+\tau I)^{1/2}(\widehat \Sigma+\tau I)^{-1}(\Sigma+\tau I)^{1/2}$ does not exceed $2$.

    This means that
    \begin{align*}
        \sigma_{\bm x,\tau}(x)^2 &=\overline{\bm \phi}(x)^\top\left(\frac{1}{\tau}\widehat \Sigma+I\right)^{-1}\overline{\bm \phi}(x)\\
        &=\tau\overline{\bm \phi}(x)^\top\left(\widehat \Sigma+\tau I\right)^{-1}\overline{\bm \phi}(x)\\
        &=\tau \overline{\bm \phi}(x)^\top(\Sigma+\tau I)^{-1/2}M_n(\Sigma+\tau I)^{-1/2}\overline{\bm \phi}(x)\\
        &\le 2\tau \overline{\bm \phi}(x)^\top\left(\Sigma+\tau I\right)^{-1}\overline{\bm \phi}(x)\\
        &=2\tau \sum_{i=1}^\infty \frac{\mu_i\phi_i(x)^2}{\tau+\mu_i}\\
        &=2\tau\sum_{i=1}^\infty \frac{\mu_i}{\tau+\mu_i}=2\tau d_\text{eff}(k|\tau).
    \end{align*}
    Therefore,
    $$T1\le \left(2R\sqrt{\log(6/\delta)}+\sqrt{n\tau}\|\fwithin\|_\Hs\right)\sqrt{\frac{2d_\text{eff}(k|\tau)}{n}}.$$
    
    \item [(T2)] Applying \cite{liu2023estimation}, as in the proof of \cref{thm:fundleb}, with failure probability $\delta/3$, gives
    \begin{align*}\|\mathsf{P}_{\bm x,\tau}\bm f_\varepsilon-\mathsf{P}_{\tau}f_\varepsilon\|_\infty
        &\le \kappa \|\mathsf{P}_{\bm x,\tau}\bm f_\varepsilon-\mathsf{P}_{\tau}f_\varepsilon\|_\Hs\\
        &\le \frac{4\kappa^2 \|f_\varepsilon\|_\infty}{3n}\log(3/\delta)+\frac{\kappa^2 \|f_\varepsilon\|_2(1+\sqrt{\log(3/\delta)})}{\sqrt n}\\
        &\le \left(\frac{4\kappa^2}{3n}\log(3/\delta)+\frac{\kappa^2 (1+\sqrt{\log(3/\delta)})}{\sqrt n}\right)\varepsilon.
    \end{align*}
        \item [(T3)] This term is bounded by $(1+\Lambda(\mathsf{P}_{\tau}))\varepsilon.$
    \end{itemize}
    A union bound over the three probabilistic estimates completes the proof.
\end{proof}

\boundlebe*
\begin{proof}
    For a function $f$, let
    $$f_i\coloneqq\int_\Xs f(x)\phi_i(x)\,\diff \inpdist(x).$$
    The Lebesgue constant corresponds to
    \begin{align*}
        \sup_{\|f\|_{\infty}=1, x\in \Xs}|\mathsf{P}_{\tau}f(x)|&=\sup_{\|f\|_{\infty}=1, x\in \Xs}\left|\sum_{i=1}^\infty\frac{\mu_if_i\phi_i(x)}{\tau+\mu_i}\right|\\
        &\le \sup_{\|f\|_{L^\infty}=1, x\in \Xs}\sqrt{\sum_{i=1}^\infty\frac{\mu_i^2\phi_i(x)^2}{(\tau+\mu_i)^2}}\sqrt{\sum_{i=1}^\infty f_i^2}\\
        &\le  \sqrt{\sum_{i=1}^\infty\frac{\mu_i^2}{(\tau+\mu_i)^2}}\\
        &\le  \sqrt{\sum_{i=1}^\infty\frac{\mu_i}{(\tau+\mu_i)}}=\sqrt{d_\text{eff}(k|\tau)},
    \end{align*}
    where the first inequality follows from Cauchy-Schwarz together with $|\phi_i(x)|\le 1$, and the second uses $\sum_{i=1}^\infty f_i^2=\|f\|_{L^2}^2\le \|f\|_{L^\infty}^2=1$.
\end{proof}

\begin{lemma}\label{lem:lebegen}
    Let $k$ be a kernel with Mercer decomposition \eqref{eq:mercer}. Then,
    $$\Lambda(\mathsf{P}_{\tau})=\sup_{x\in \Xs}\left\|\sum_{i=1}^\infty \frac{\mu_i\phi_i(x)}{\tau+\mu_i}\phi_i(\cdot)\right \|_{L^1}.$$
\end{lemma}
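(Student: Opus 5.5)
The plan is to write $\mathsf{P}_{\tau}$ as an integral operator and then use the standard fact that the $L^\infty\to L^\infty$ norm of an integral operator equals the supremum over $x$ of the $L^1$ norm of its kernel section. Substituting the definition of $f_i$ into \eqref{eq:popul} and exchanging sum and integral gives
\[
\mathsf{P}_{\tau}f(x)=\int_\Xs G_\tau(x,z)f(z)\,\diff\inpdist(z),\qquad G_\tau(x,z)\coloneqq\sum_{i=1}^\infty \frac{\mu_i}{\tau+\mu_i}\phi_i(x)\phi_i(z).
\]
I will take the series $G_\tau(x,\cdot)$ in the $L^2(\inpdist)$ sense. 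The coefficients satisfy $\sum_i \mu_i^2\phi_i(x)^2/(\tau+\mu_i)^2\le \tau^{-1}\sum_i\mu_i\phi_i(x)^2=k(x,x)/\tau<\infty$ by \eqref{eq:mercer}. Hence $G_\tau(x,\cdot)\in L^2\subseteq L^1$, and the pairing with bounded $f$ is legitimate by Parseval.

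The upper bound is then immediate. For every $x$ and every $f$ with $\|f\|_\infty\le 1$,
\[
|\mathsf{P}_{\tau}f(x)|\le \int_\Xs|G_\tau(x,z)|\,\diff\inpdist(z)=\|G_\tau(x,\cdot)\|_{L^1},
\]
and taking the supremum over $x$ and $f$ gives $\Lambda(\mathsf{P}_{\tau})\le \sup_x\|G_\tau(x,\cdot)\|_{L^1}$.

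For the reverse inequality, fix $x$ and choose $f_x(z)=\mathrm{sign}(G_\tau(x,z))$. This $f_x$ is measurable and satisfies $\|f_x\|_\infty\le 1$. It gives $\mathsf{P}_{\tau}f_x(x)=\|G_\tau(x,\cdot)\|_{L^1}$, so $\Lambda(\mathsf{P}_{\tau})\ge\|G_\tau(x,\cdot)\|_{L^1}$ for every $x$. Taking the supremum over $x$ yields equality. If $G_\tau(x,\cdot)\equiv 0$ then $f_x=0$, but that $x$ contributes $0$ to both sides, so it is harmless.

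The main obstacle is a measure-theoretic subtlety. The sup norm in \eqref{eq:leb} is a pointwise supremum over $x\in\Xs$, whereas $f$ and $\mathsf{P}_\tau f$ are defined only through $\inpdist$-integrals. I need $\mathsf{P}_\tau f(x)$ to be well defined at every $x$, not just almost everywhere. The $L^2$ bound above handles this: for each fixed $x$, the series defining $\mathsf{P}_\tau f(x)$ converges absolutely by Cauchy–Schwarz, since $\sum f_i^2\le\|f\|_{L^2}^2$. I will also interpret $\|g\|_\infty$ for the test functions as the essential supremum with respect to $\inpdist$, or note that the sign function can be chosen with a genuine supremum at most $1$. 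With those conventions the two-line duality argument goes through.
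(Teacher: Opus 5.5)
Your proposal is correct and follows the same route as the paper: exchange sum and integral to write $\mathsf{P}_\tau f(x)=\int_\Xs G_\tau(x,z)f(z)\,\diff\inpdist(z)$, then invoke $L^1$--$L^\infty$ duality for each fixed $x$. Your version just makes explicit two steps the paper leaves implicit. The exchange is justified by the $L^2$ convergence of $G_\tau(x,\cdot)$ via $\sum_i h_i^2\phi_i(x)^2\le k(x,x)/\tau$, and the duality is realized by the explicit extremizer $\mathrm{sign}(G_\tau(x,\cdot))$.
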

\begin{proof}
    By definition,
    \begin{align*}
        \Lambda(\mathsf{P}_{\tau}) &= \sup_{x\in \Xs}\sup_{\|f\|_\infty=1}\left|\sum_{i=1}^\infty \frac{\mu_i\phi_i(x)}{\tau+\mu_i}\int_\Xs \phi_i(y)f(y)\,\diff \inpdist(y)\right|\\
        &= \sup_{x\in \Xs}\sup_{\|f\|_\infty=1}\left|\int_\Xs \sum_{i=1}^\infty \frac{\mu_i\phi_i(x)}{\tau+\mu_i} \phi_i(y)f(y)\,\diff \inpdist(y)\right|\\
        &=\sup_{x\in \Xs}\left\|\sum_{i=1}^\infty \frac{\mu_i\phi_i(x)}{\tau+\mu_i}\phi_i(\cdot)\right \|_{1},
    \end{align*}
    where the last equality uses the duality between $L^1$ and $L^\infty$.
\end{proof}

\lebker*
\begin{proof}
    By \cref{lem:lebegen}, the Lebesgue constant of the population KRR can be written as follows
    $$\Lambda(\mathsf{P}_{\tau})=\sup_{x\in \Xs}\left\|\sum_{i=1}^\infty h_i\phi_i(x)\phi_i(\cdot)\right \|_{L^1}.$$
    For each fixed $x$, let
    $$S_i^x(\cdot)\coloneqq \sum_{j=1}^i\phi_j(x)\phi_j(\cdot).$$
    Abel summation gives
    $$\sum_{i=1}^\infty h_i\phi_i(x)\phi_i(\cdot)=-\sum_{i=1}^\infty \Delta^1h_iS_i^x(\cdot),$$
    where the boundary term vanishes because $h_i\to 0$.
    Therefore, taking $L^1$ norms gives
    $$\Lambda(\mathsf{P}_{\tau})\le \sum_{i=1}^\infty \left|\Delta^1h_i\right|B_i\le h_1B_1+\sum_{i=1}^\infty \left|\Delta^1h_i\right|B_i,$$
    which completes the proof.
\end{proof}

\begin{lemma}\label{lem:trace_eigen}
    Let $k$ be a kernel satisfying \cref{ass:valboundisoker} such that its eigenvalues are non-increasing. Then it satisfies polynomial eigendecay $\defpoly$ with $C_1=\kappa$ and $s=1$.
\end{lemma}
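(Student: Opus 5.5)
The plan is to bound the trace of the kernel's integral operator and then use monotonicity. By Mercer's decomposition \eqref{eq:mercer} and the orthonormality of the $\phi_i$ in $L^2(\Xs;\inpdist)$, integrating the diagonal $k(x,x)=\sum_i \mu_i\phi_i(x)^2$ against $\inpdist$ gives
\[
\sum_{i=1}^\infty \mu_i=\int_\Xs k(x,x)\,\diff\inpdist(x).
\]
The interchange of sum and integral is justified by monotone convergence, since every term is nonnegative. Stationarity gives $k(x,x)=\mathscr K(0)$, and \cref{ass:valboundisoker} gives $|\mathscr K(0)|\le\kappa$. Because $\inpdist$ is a probability measure, it follows that $\sum_i\mu_i\le\kappa$.

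The second step converts this trace bound into pointwise decay. Since the eigenvalues are nonnegative and non-increasing, for every $i\ge1$ we have
\[
i\,\mu_i\le\sum_{j=1}^i\mu_j\le\sum_{j=1}^\infty\mu_j\le\kappa.
\]
Hence $\mu_i\le\kappa\, i^{-1}$, which is exactly $\defpoly$ with $\text{C}_1=\kappa$ and $s=1$.

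The only delicate point is the trace identity. One could apply Mercer's theorem in its pointwise form, but that requires continuity of $k$ and compactness of $\Xs$, which the paper does not assume. The safer route is to note that the identity holds $\inpdist$-a.e. with $\phi_i$ normalized in $L^2(\inpdist)$. Even just the inequality $\sum_i\mu_i\phi_i(x)^2\le k(x,x)$, which comes from positive semidefiniteness of the truncated remainder, suffices here, because we only need an upper bound on $\sum_i\mu_i$.
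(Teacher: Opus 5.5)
Your proof is correct and follows the paper's argument exactly: bound the trace $\sum_j \mu_j$ by $\int k(x,x)\,d\mathcal{D}(x)\le\kappa$, then use monotonicity to get $i\mu_i\le\sum_{j\le i}\mu_j\le\kappa$. You also note that only the inequality $\sum_j\mu_j\le\int k(x,x)\,d\mathcal{D}(x)$ is needed, not the pointwise Mercer identity, which would require continuity and compactness; this is a sensible tightening of the paper's one-line justification.
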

\begin{proof}
    By monotonicity, for any fixed $i\in \mathbb N$,
    $$\mu_i\le i^{-1}\sum_{j=1}^i\mu_j\le i^{-1}\sum_{j=1}^\infty\mu_j=\int_\Xs k(x,x)d\inpdist(x)\le \frac{\kappa}{i},$$
    where the trace equality uses \cref{ass:valboundisoker}.
\end{proof}

\loglam*
\begin{proof}
    Since the kernel has logarithmic spectral Lebesgue growth, there is a constant $C_B$ such that $B_i\le C_B\log(e+i)$ for all $i$. By \cref{lem:trace_eigen}, the non-increasing spectrum satisfies $\mu_i\le \kappa/i$. More generally, if $\mu_i\le C_1 i^{-s}$, \cref{thm:lebker} and monotonicity give
    \begin{align*}
        \Lambda(\mathsf{P}_{\tau})&\le C_B\sum_{i=1}^\infty \left|\Delta^1h_i\right|\log(e+i)+C_B\\
        &\lesssim C_B\left(1+\sum_{i=1}^\infty \frac{h_i}{i}\right).
    \end{align*}
    Now, take
    $i^*:=\lfloor (C_1/\tau)^{1/s}\rfloor+2$. The previous sum splits into two terms. First, as $h_i\le 1$,
    $$\sum_{i=1}^{i^*} \frac{h_i}{i}\le \log(i^*)\lesssim_s \log(e+C_1/\tau).$$
    Let us focus on the second one. For $i\ge i^*$,
    \begin{align*}
        \sum_{i=i^*}^{\infty} \frac{h_i}{i}&\le \frac{1}{\tau}\sum_{i=i^*}^{\infty} \frac{\mu_i}{i}
        \le\frac{C_1}{\tau}\sum_{i=i^*}^{\infty} i^{-s-1}
        \le\frac{C_1}{s\tau(i^*-1)^s}
        \le \frac{1}{s}.
    \end{align*}
    The claim follows with $C_1=\kappa$ and $s=1$.
    This completes the proof.
\end{proof}

\expzero*
\begin{proof}
    Let $\overline k$ be the kernel with the same Mercer eigenfunctions as $k$ and spectrum given as follows, $\forall i\in \mathbb N$, $\overline \mu_i \coloneqq \max_{j\ge i}\mu_j$. Then,
    \begin{itemize}
        \item For any $ f\in \Hs$,
        $$\|f\|_{\Hs}^2=\sum_{i=1}^\infty \frac{f_i^2}{\mu_i}\ge \sum_{i=1}^\infty \frac{f_i^2}{\overline \mu_i}=\|f\|_{\overline \Hs}^2.$$
        \item By $\defpoly$, for any $i$,
        $$\overline \mu_i= \max_{j\ge i}\mu_j\le \max_{j\ge i}\text{C}_1 j^{-s}\le \text{C}_1 i^{-s}.$$
    \end{itemize}
\end{proof}

\lowerbound*
\begin{proof}
    The proof is divided into two well-separated parts. First, we build the kernel $k$; then we prove the desired trade-off.

    \textbf{Part 1: building the kernel}

    Work on $\Xs=[-1,1]$ with the uniform measure, and fix any ordering of the real Fourier basis. The basis is orthonormal in $L^2(\inpdist)$ and uniformly bounded in $L^\infty$.
    Let $\{U_i\}_{i=1}^\infty$ be an infinite sequence of i.i.d. random variables with distribution $\text{Bern}(1/2)$. Let $k$ be a random kernel with eigenvalues
    \begin{equation}\mu_i=\begin{cases}
        \log_2(i)^{-s}U_{i}\qquad & \log_2(i)\in \mathbb N\\
        \log_2(i+1)^{-s}U_{i}\qquad & \log_2(i+1)\in \mathbb N\\
        0& \text{otherwise}
    \end{cases}.\label{eq:uvars}\end{equation}
    Since $s>1$, the eigenvalue sequence is summable almost surely; hence the preceding display defines a bounded positive semidefinite Fourier-diagonal kernel.
    By \cref{lem:lebegen}, if
    $$G_\tau(x,y)\coloneqq \sum_{i=1}^\infty \frac{\mu_i}{\tau+\mu_i}\phi_i(x)\phi_i(y),$$
    then $\Lambda(\mathsf{P}_{\tau})\ge \E_{X}\E_{Y}|G_\tau(X,Y)|$, where $X,Y$ are independent and uniformly distributed on $\Xs$.
    Now, fix $x,y\in\Xs$. The value of $G_\tau(x,y)$ is a sum of independent bounded random variables, so
    \begin{align*}
        V_\tau(x,y)&\coloneqq \text{Var}\left(G_\tau(x,y)\right)\\
        &= \sum_{i=1}^\infty \text{Var}\left(\frac{\mu_i}{\tau+\mu_i}\right)\phi_i^2(x)\phi_i^2(y).
    \end{align*}
    By the fact that $G_\tau(x,y)$ is a sum of independent bounded variables, there is a constant such that $\E[|G_\tau(x,y)-\E G_\tau(x,y)|]\ge c\sqrt{V_\tau(x,y)}$. Therefore, $\E[|G_\tau(x,y)|]\gtrsim \sqrt{V_\tau(x,y)}$.
    The uniform boundedness and orthonormality of the Fourier basis imply
    $$\E_{X,Y}V_\tau(X,Y)\asymp \sum_{\ell=1}^\infty \frac{\ell^{-2s}}{(\ell^{-s}+\tau)^2},
    \qquad
    \sup_{x,y}V_\tau(x,y)\lesssim \sum_{\ell=1}^\infty \frac{\ell^{-2s}}{(\ell^{-s}+\tau)^2}.$$
    Hence $\E_{X,Y}\sqrt{V_\tau(X,Y)}\ge \E_{X,Y}V_\tau(X,Y)/\sqrt{\sup_{x,y}V_\tau(x,y)}\gtrsim \sqrt{\sum_{\ell=1}^\infty \frac{\ell^{-2s}}{(\ell^{-s}+\tau)^2}}$.
    This formula allows us to estimate the expected Lebesgue constant of this random operator:
    \begin{align*}
        \E[\Lambda(\mathsf{P}_{\tau})]&\ge \E\left[\E_{X,Y}|G_\tau(X,Y)|\right]\\
        &\gtrsim \sqrt{\sum_{\ell=1}^\infty \frac{\ell^{-2s}}{(\ell^{-s}+\tau)^2}}\\
        &\ge \sqrt{\int_{1}^\infty \frac{1}{(1+\tau x^s)^2}dx}\\
        &= \tau^{-\frac{1}{2s}}\sqrt{\int_{\tau^{1/s}}^\infty \frac{1}{(1+t^s)^2}\diff t}
        \gtrsim \tau^{-\frac{1}{2s}}.
    \end{align*}
    At the same time, \cref{thm:boundlebe} ensures that
    $$\Lambda(\mathsf{P}_{\tau})\le \sqrt{d_\text{eff}(k|\tau)}.$$

    The effective dimension of this kernel can be upper bounded as follows.
    \begin{align*}
        d_\text{eff}(k|\tau)&=\sum_{i=1}^\infty \frac{\mu_i}{\tau+\mu_i}\\
        &\le \sum_{i=1}^\infty 1(\log_2(i)\in \mathbb N)\frac{\log_2(i)^{-s}}{\log_2(i)^{-s}+\tau}\\
        &\qquad +\sum_{i=1}^\infty 1(\log_2(i+1)\in \mathbb N)\frac{\log_2(i+1)^{-s}}{\log_2(i+1)^{-s}+\tau}\\
        &\le  2\sum_{\ell=1}^\infty \frac{\ell^{-s}}{\ell^{-s}+\tau}\lesssim \tau^{-\frac{1}{s}}.
    \end{align*}
    As there are positive constants, say $c_1$ and $c_2$, such that
    $$\mathbb E[\Lambda(\mathsf{P}_{\tau})]\ge c_1\tau^{-\frac{1}{2s}}\qquad \Lambda(\mathsf{P}_{\tau})\le c_2\tau^{-\frac{1}{2s}}\ a.s.,$$
    this means that, for $\tau_n=1/n$,
    $$\Prob(E_n)\coloneqq\Prob\left(\Lambda(\mathsf{P}_{\tau_n})\ge \frac{c_1}{2}\tau_n^{-\frac{1}{2s}}\right)\ge \frac{c_1}{2c_2-c_1}=:c_3.$$
    By continuity from above,
    \begin{align*}
        \Prob\left(\bigcap_{n\in \mathbb N}\bigcup_{\ell=n}^\infty E_{\ell}\right) &= \lim_n\Prob\left(\bigcup_{\ell=n}^\infty E_{\ell}\right)\\
        &\ge \lim_n c_3 = c_3.
    \end{align*}
    This shows that $E_\text{lim}\coloneqq\bigcap_{n\in \mathbb N}\bigcup_{\ell=n}^\infty E_{\ell}$ has positive probability over the realizations of the sequence \eqref{eq:uvars}. By the strong law for stationary $1$-dependent sequences, we also know that the set of realizations $E_\text{LLN}$ such that
    \begin{equation}
        E_\text{LLN}\coloneqq\left\{\lim_{m\to \infty}\frac{\sum_{j=1}^m U_{2^j}\cdot U_{2^{j+1}}}{m}=1/4\right\},
        \label{eq:lln}
    \end{equation}
    satisfies $\Prob(E_\text{LLN})=1$. This identity will be used in the second part of the proof. The set $E^*\coloneqq E_\text{LLN}\cap E_\text{lim}$ therefore has positive probability. We choose the kernel $k$ as any element of $E^*$. By definition of $E^*$, $\Lambda(\mathsf{P}_{\tau_n})\gtrsim \tau_n^{-1/(2s)}$ for some subsequence of $\{\tau_n\}_{n\in \mathbb N}$. This proves that, for the kernel $k$ constructed above,
    $$\limsup_{\tau \to 0}\frac{\Lambda(\mathsf{P}_{\tau})}{\sqrt{d_\text{eff}(k|\tau)}}\gtrsim1.$$

    \textbf{Part 2: proving the trade-off}

    Now, assume we build another kernel $\overline k$ with associated space $\overline \Hs$,
    $$\overline k(x,y)=\sum_{i=1}^{\infty}\overline \mu_i\phi_i(x)\phi_i(y)\qquad \overline h_i=\frac{\overline \mu_i}{\tau+\overline\mu_i},$$
    such that, for some constant $C$
    
    $$\sup_{f\in \Hs}\frac{\|f\|_{\overline \Hs}}{\|f\|_{\Hs}}\le C< +\infty.$$
    By definition of norm in $\Hs$, this means that for any $i$,
    $$\overline \mu_i=\|\phi_i\|_{\overline \Hs}^{-2}\ge C^{-2} \|\phi_i\|_{\Hs}^{-2}= C^{-2} \mu_i.$$
    Now, consider all values of $j\in \mathbb N$ such that both $U_{2^j}=1$ and $U_{2^{j+1}}=1$ in the definition of $k$. By \cref{eq:lln}, the number of such pairs for $j\le J$ is asymptotic to $J/4$.
    Taking any $j$ satisfying this property, two things may happen for $\overline k$:
    \begin{itemize}
        \item There is $2^j< i^* < 2^{j+1}$ such that $\overline \mu_{i^*}\le C^{-2}\frac{j^{-s}}{2}$. We refer to this as \textbf{scenario A}. In such a case, it follows that
        \begin{align*}
            \sum_{i=2^j}^{2^{j+1}} \left|\Delta^1\overline h_i\right|&\ge \overline h_{2^j}-\overline h_{i^*}\\
            &\ge \frac{C^{-2}j^{-s}}{C^{-2}j^{-s}+\tau}-\frac{C^{-2}j^{-s}/2}{C^{-2}j^{-s}/2+\tau}\\
            &=\frac{\tau C^2 j^s}{(\tau C^2 j^s + 1)(2\tau C^2 j^s + 1)}.\end{align*}
        \item There is no $2^j< i < 2^{j+1}$ such that $\overline \mu_i\le C^{-2}\frac{j^{-s}}{2}$. We call this \textbf{scenario B}. In this case, one has
        \begin{align*}
            \sum_{i=2^j}^{2^{j+1}} \overline h_i &\ge 2^{j-1}\frac{C^{-2}j^{-s}}{C^{-2}j^{-s}+\tau}.
        \end{align*}
    \end{itemize}
    Now, choose $J=\tau^{-1/s}$. By \cref{eq:lln}, for $\tau$ sufficiently small, the number of $j\le J$ such that both $U_{2^j}=1$ and $ U_{2^{j+1}}=1$ in the definition of $k$ is $\asymp J/4$. This means that at least one of \textbf{scenario A} or \textbf{B} happens at least $J/8$ times.
    \begin{itemize}
        \item [(A)] If this scenario happens at least $J/8$ times, we have
        \begin{align*}
            \sum_{i=1}^{\infty} \left|\Delta^1\overline h_i\right|&\ge \frac{J}{8}\frac{\tau C^2 J^s}{(\tau C^2 J^s + 1)(2\tau C^2 J^s + 1)}\\
            &\ge \frac{\tau^{-1/s}}{8}\frac{C^2}{(C^2+ 1)(2 C^2 + 1)}\gtrsim \tau^{-1/s}.
        \end{align*}
        This contradicts item $2)$ in the statement of \cref{thm:lowerbound} for every $s'>s$.
        \item [(B)] Let $j^*$ be the maximum, which cannot be lower than $J/8=\tau^{-1/s}/8$.
        In this case, the effective dimension of $\overline k$ is
        \begin{align*}
            d_\text{eff}(\overline k|\tau) & =\sum_{i=1}^\infty \overline h_i\ge \sum_{i=2^{j^*}}^{2^{j^*+1}} \overline h_i\\ &\ge 2^{j^*-1}\frac{C^{-2}{j^*}^{-s}}{C^{-2}{j^*}^{-s}+\tau}
            \asymp 2^{\frac{(1/\tau)^{1/s}}{8}}.
        \end{align*}
        Since the effective dimension grows exponentially in $\tau^{-1/s}$, conclusion 3) is false.
    \end{itemize}
\end{proof}

\subsection{Proofs for multivariate product kernels}
\label{sec:proofs_multim}
In the multivariate Fourier-diagonal setting, Mercer's decomposition takes the form
\begin{equation}
    k(x,y)=\sum_{\bm i\in \mathbb N^m} \mu_{\bm i}\prod_{j=1}^m \phi_{i_j}(x_j)\phi_{i_j}(y_j),
    \qquad \bm i=(i_1,\dots,i_m),
    \label{eq:mercer2}
\end{equation}
where $\phi_{i_j}(x_j)$ is the $i_j$-th single-variate Fourier feature applied to the $j$-th coordinate. Throughout this subsection, $\mathbb N=\{1,2,\dots\}$, and sums over $\mathbb N^m$ are understood as limits of rectangular partial sums when the summand is signed, and as the usual extended sums when the summand is nonnegative.
Let $\bm e_j$ be the $j$-th coordinate vector, define $\Delta_jh_{\bm i}=h_{\bm i+\bm e_j}-h_{\bm i}$, and set $\Delta^{\bm 1}=\Delta_1\cdots\Delta_m$.
One should think of $\Delta_j$ as a discrete partial derivative in the $j$-th coordinate, and $\Delta^{\bm 1}$ as the mixed first-order difference, analogous to $\partial_1\cdots\partial_m$. The following result gives a bound on the Lebesgue constant of the population KRR operator in this setting.

\begin{theorem}[Multivariate Lebesgue bound]\label{thm:lebkermulti}
    Let $k$ be a kernel on $[-1,1]^m$ whose Mercer decomposition has the Fourier-diagonal form \eqref{eq:mercer2}. Then the Lebesgue constant of the corresponding population KRR operator is bounded as follows:
    $$\Lambda(\mathsf{P}_{\tau})\lesssim_m\sum_{\bm i\in \mathbb N^m} \left|\Delta^{\bm 1}\left(\frac{\mu_{\bm i}}{\tau+\mu_{\bm i}}\right)\right|\prod_{j=1}^m\log(e+i_j).$$
\end{theorem}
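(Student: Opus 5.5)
The plan is to mirror the one-dimensional proof of \cref{thm:lebker}, replacing Abel summation by its $m$-fold (mixed) version. First, by \cref{lem:lebegen} applied to the multivariate Mercer expansion \eqref{eq:mercer2}, we have
$$\Lambda(\mathsf{P}_{\tau})=\sup_{x\in[-1,1]^m}\Big\|\sum_{\bm i\in\mathbb N^m} h_{\bm i}\,\Phi_{\bm i}(x)\Phi_{\bm i}(\cdot)\Big\|_{L^1},\qquad h_{\bm i}=\frac{\mu_{\bm i}}{\tau+\mu_{\bm i}},$$
where $\Phi_{\bm i}(x)=\prod_j\phi_{i_j}(x_j)$. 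For fixed $x$, I introduce the rectangular partial sums
$$S^x_{\bm i}(\cdot)=\sum_{\bm l\le \bm i}\Phi_{\bm l}(x)\Phi_{\bm l}(\cdot)=\prod_{j=1}^m D_{i_j}(x_j,\cdot_j),\qquad D_{i}(x_j,y_j)=\sum_{l=1}^{i}\phi_l(x_j)\phi_l(y_j).$$
Each $D_i$ is a one-dimensional Dirichlet-type kernel in the chosen ordering of the real Fourier basis.

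The key algebraic step is the multivariate summation by parts:
$$\sum_{\bm i\le \bm N} h_{\bm i}\Phi_{\bm i}(x)\Phi_{\bm i}(\cdot)=(-1)^m\sum_{\bm i\le \bm N-\bm 1}\Delta^{\bm 1}h_{\bm i}\,S^x_{\bm i}(\cdot)+(\text{boundary terms}).$$
I would prove this by applying one-dimensional Abel summation successively in each coordinate. The product structure makes the $\Delta_j$ commute and act on separate indices. Each boundary term involves $h$ evaluated on a face $\{i_j=N_j\}$, multiplied by a lower-dimensional partial sum. I need these terms to vanish as $\bm N\to\infty$ in the rectangular sense that the paper fixes for signed sums. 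To arrange this, I would assume, as in the 1D proof, that $h_{\bm i}\to0$ whenever some $i_j\to\infty$; this holds since $\sum_{\bm i}\mu_{\bm i}<\infty$ by bounded trace.

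This step is the main obstacle. On a face, the factor $h$ tends to zero, but the accompanying partial sums have $L^1$ norm growing like $\prod\log(e+N_j)$. So I need a quantitative rate, or an argument in which the boundary terms are themselves telescoped into differences, giving $h$ on a face as $-\sum_{l\ge N_j}\Delta_j h$. My first attempt will be the latter: rewrite every boundary term as a tail of mixed differences, which is then absorbed into the right-hand side. If that is too delicate, I would apply the identity in $L^2$ first, where everything converges, and pass to $L^1$ by Fatou's lemma along a subsequence.

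Once the identity is available, the triangle inequality and the product structure give
$$\|S^x_{\bm i}\|_{L^1([-1,1]^m)}=\prod_{j=1}^m\|D_{i_j}(x_j,\cdot)\|_{L^1([-1,1])}\le\prod_{j=1}^m B_{i_j}.$$
This uses that the uniform measure on the cube is a product measure, so the $L^1$ norm of a tensor product factorizes. The classical Dirichlet-kernel estimate (logarithmic spectral Lebesgue growth, which holds for the Fourier basis as remarked after \cref{thm:boch}) then gives $B_{i}\lesssim\log(e+i)$. Taking the supremum over $x$ yields the claim, with an $m$-dependent constant coming from $C_B^m$.
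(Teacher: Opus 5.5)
Your proposal is correct and is essentially the paper's proof. The chain is the same: \cref{lem:lebegen}, then multidimensional Abel summation, then factorization of the rectangular partial sums into one-dimensional Dirichlet kernels with $L^1$ norm $O(\log(e+i))$. The paper's \cref{lem:multi_abel} writes $h_{\bm i}=(-1)^m\sum_{\bm r\ge \bm i}\Delta^{\bm 1}h_{\bm r}$ and exchanges sums, which is the same mechanism as your ``first attempt'' of rewriting each face term as a tail of mixed differences; that attempt does close the argument, since $\log(e+N_j)\le\log(e+r_j)$ for $r_j\ge N_j$ bounds every boundary term by a tail of the finite right-hand side.

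Use that first option rather than the $L^2$/Fatou fallback, which does not remove the boundary terms, and under which the Abel-transformed series need not converge in $L^2$ (note that $\|S^x_{\bm r}\|_{L^2}$ grows like $\prod_j r_j^{1/2}$). Keeping the $x$-dependence in $S^x_{\bm i}$, as you do, is more careful than the paper's step that drops the factors $\phi_{i_j}(x_j)$.
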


For the proof, we need one summation-by-parts lemma.
For $\bm r,\bm i\in \mathbb N^m$, we write $\bm r\ge \bm i$ when $r_j\ge i_j$ for every $j$.
For an array $c_{\bm i}$, define the discrete antiderivative in the $k$-th coordinate by
    $$(I_kc)_{\bm i}\coloneqq\sum_{\ell=1}^{i_k}c_{i_1,\dots,i_{k-1},\ell,i_{k+1},\dots,i_m}.$$
\begin{lemma}[Multidimensional Abel summation]\label{lem:multi_abel}
    Let $(a_{\bm i})_{\bm i\in\mathbb N^m}$ be a real array and $(b_{\bm i})_{\bm i\in\mathbb N^m}$ an array of $L^1$ functions. 
     Assume that $\Delta^{\bm 1}a\in\ell^1(\mathbb N^m)$ and that $a_{\bm i}\to 0$ whenever $\|\bm i\|_\infty\to\infty$.
    If 
    $$\sum_{\bm r\in\mathbb N^m}|\Delta^{\bm 1}a_{\bm r}|\,\|(I_m\cdots I_1b)_{\bm r}\|_{L^1}<\infty,$$
    then
    $$\sum_{\bm i\in\mathbb N^m}a_{\bm i}b_{\bm i}
    =(-1)^m\sum_{\bm r\in\mathbb N^m}\Delta^{\bm 1}a_{\bm r}(I_m\cdots I_1b)_{\bm r}$$
    with convergence in $L^1$. In particular, if $b_{\bm i}(x)=\prod_{j=1}^m\psi_{j,i_j}(x_j)$ on a product space and $\Psi_{j,i}=\sum_{\ell=1}^i\psi_{j,\ell}$, then
    $$(I_m\cdots I_1b)_{\bm i}(x)=\prod_{j=1}^m\Psi_{j,i_j}(x_j).$$
\end{lemma}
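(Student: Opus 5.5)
The plan is to prove the identity first on finite rectangles $[\bm 1,\bm N]=\{\bm i:\bm 1\le\bm i\le\bm N\}$ and then let $\bm N\to\infty$ using the two hypotheses. Write $B\coloneqq I_m\cdots I_1 b$. Since the operators $I_k$ act on different coordinates, they commute. Moreover, $B$ satisfies the inversion formula $b_{\bm i}=\nabla_1\cdots\nabla_m B_{\bm i}$, where $\nabla_k$ is the backward difference in coordinate $k$ and $B$ is set to $0$ whenever some index equals $0$.

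The finite-rectangle identity is obtained by applying one-dimensional Abel summation coordinate by coordinate. For $m=1$ it reads
$$\sum_{i=1}^N a_ib_i=a_NB_N-\sum_{i=1}^{N-1}\Delta a_i\,B_i.$$
Iterating this over coordinates $1,\dots,m$ (equivalently, inducting on $m$ and applying the one-dimensional formula in the last coordinate to the already-summed arrays) gives
$$\sum_{\bm i\le\bm N}a_{\bm i}b_{\bm i}=\sum_{S\subseteq[m]}(-1)^{|S|}\sum_{\substack{\bm r:\ r_j<N_j\ (j\in S)\\ r_j=N_j\ (j\notin S)}}\Big(\prod_{j\in S}\Delta_j\Big)a_{\bm r}\,B_{\bm r}.$$
The term with $S=[m]$ is the desired main sum truncated to $\bm r<\bm N$. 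Every other term has at least one coordinate frozen at its upper boundary $N_j$.

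Next I control the boundary terms. For $S\ne[m]$, I rewrite the partial difference $\big(\prod_{j\in S}\Delta_j\big)a_{\bm r}$, evaluated at $r_j=N_j$ for $j\notin S$, as a tail sum of full mixed differences:
$$\Big(\prod_{j\in S}\Delta_j\Big)a_{\bm r}=(-1)^{m-|S|}\sum_{\bm q}\Delta^{\bm 1}a_{\bm q},$$
where $q_j=r_j$ for $j\in S$ and $q_j$ ranges over $q_j\ge N_j$ for $j\notin S$. This telescoping uses $a_{\bm i}\to0$ as $\|\bm i\|_\infty\to\infty$ together with absolute summability of $\Delta^{\bm 1}a$.

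This step is the main obstacle. I must bound $\|B_{\bm r}\|_{L^1}$ at the boundary point by something summable against $|\Delta^{\bm 1}a_{\bm q}|$ at points $\bm q\ge\bm r$, whereas the hypothesis only controls $|\Delta^{\bm 1}a_{\bm q}|\,\|B_{\bm q}\|_{L^1}$. My approach is to avoid the tail representation altogether and take limits along a cofinal sequence of rectangles instead. Because the left side is only claimed as a limit of rectangular partial sums, I would define the right side as the $L^1$-limit of the main sum. That limit exists, since the weighted $\ell^1$ hypothesis makes the series absolutely convergent in $L^1$. It then remains to show that the boundary terms vanish along rectangles; this is the convention under which the statement is meant. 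If vanishing cannot be proved in general, I would fall back to what the application actually needs. There, $\|B_{\bm r}\|_{L^1}$ is a product of one-dimensional Dirichlet-kernel norms of order $\prod_j\log(e+r_j)$, which is monotone in $\bm r$, so the tail representation combined with this monotonicity gives
$$\text{boundary}\lesssim\sum_{\bm q\notin[\bm 1,\bm N)}|\Delta^{\bm 1}a_{\bm q}|\,\|B_{\bm q}\|_{L^1}\to0.$$

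Finally, the product formula follows directly. If $b_{\bm i}(x)=\prod_j\psi_{j,i_j}(x_j)$, then $I_1$ sums only over $i_1$ and acts only on the first factor, turning $\psi_{1,i_1}$ into $\Psi_{1,i_1}$. Induction over $k$ then gives $(I_m\cdots I_1b)_{\bm i}(x)=\prod_j\Psi_{j,i_j}(x_j)$.
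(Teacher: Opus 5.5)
The proposal has a genuine gap: it never shows that the $S\neq[m]$ boundary terms vanish. Your finite-rectangle identity and the tail rewriting of those terms are correct, and you have located the real difficulty. But ``this is the convention under which the statement is meant'' is an assertion, not an argument. Passing to a cofinal sequence of rectangles gives only convergence along that sequence, which is weaker than the claimed convergence of the rectangular partial sums.

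In fact the gap cannot be closed, because the lemma is false under its stated hypotheses. Take $m=1$, indices $0=r_0<r_1<r_2<\cdots$ with $r_k-r_{k-1}\ge 2$, and a fixed $g$ with $\|g\|_{L^1}=1$. Set $a_i=2^{-k}$ for $r_{k-1}<i\le r_k$; set $B_{r_k}=0$ and $B_i=2^k g$ for $r_{k-1}<i<r_k$; and let $b_i=B_i-B_{i-1}$ with $B_0=0$. Then $a_i\to0$, and $\Delta^1 a$ is supported on $\{r_k\}$ with $\sum_i|\Delta^1 a_i|=1/2$. Also $\sum_r|\Delta^1 a_r|\,\|B_r\|_{L^1}=0$, so every hypothesis holds and the right-hand side equals $0$. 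Yet your own one-dimensional identity gives $\sum_{i\le N}a_ib_i=a_NB_N$. This equals $g$ for $r_{k-1}<N<r_k$ and $0$ for $N=r_k$, so the partial sums do not converge. Along the cofinal sequence $N=r_k$ they do converge, which shows exactly how the subsequence device changes the statement.

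The paper's proof has the same hole. It substitutes the tail representation and exchanges the sums, citing the weighted summability hypothesis. Truncating to $\bm i\le\bm N$, the legitimate exchange yields $(-1)^m\sum_{\bm r}\Delta^{\bm 1}a_{\bm r}B_{\bm r\wedge\bm N}$, where $\bm r\wedge\bm N$ is the componentwise minimum and $B=I_m\cdots I_1b$. Letting $\bm N\to\infty$ then requires comparing $\|B_{\bm r\wedge\bm N}\|_{L^1}$ with $\|B_{\bm r}\|_{L^1}$, which is precisely your boundary term. An interchange justified by absolute convergence would need $\sum_{\bm r}|\Delta^{\bm 1}a_{\bm r}|\sum_{\bm i\le\bm r}\|b_{\bm i}\|_{L^1}<\infty$, and the hypothesis does not provide this.

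The correct repair is your fallback, promoted to a hypothesis. Assume there is a coordinatewise non-decreasing array $w_{\bm r}\ge\|B_{\bm r}\|_{L^1}$ with $\sum_{\bm r}|\Delta^{\bm 1}a_{\bm r}|\,w_{\bm r}<\infty$. In your tail representation every $\bm q$ satisfies $\bm q\ge\bm r$, so $\|B_{\bm r}\|_{L^1}\le w_{\bm q}$. The $\bm q$-ranges for the different $S\neq[m]$ partition $\{\bm q: q_j\ge N_j\text{ for some } j\}$. Hence the total boundary contribution is at most $\sum_{\bm q:\,q_j\ge N_j\text{ for some }j}|\Delta^{\bm 1}a_{\bm q}|\,w_{\bm q}$, which tends to $0$ as $\min_jN_j\to\infty$, while the main term converges absolutely.

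This corrected lemma, with $w_{\bm r}=C^m\prod_j\log(e+r_j)$, is all that Theorem~\ref{thm:lebkermulti} needs, since finiteness of its right-hand side is exactly the weighted condition. Stating it through a majorant also means you need only the upper bound $\|\Phi_r\|_{L^1}\lesssim\log(e+r)$, not monotonicity of the exact Dirichlet-kernel norms. Your argument for the product formula for $I_m\cdots I_1 b$ is fine.
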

\begin{proof}
    First, telescoping in each coordinate gives the tail representation
    $$a_{\bm i}=(-1)^m\sum_{\bm r\ge \bm i}\Delta^{\bm 1}a_{\bm r}.$$
    Indeed, the finite rectangular tail sum equals the alternating sum of $a$ over the outer corners of the rectangle, and the boundary terms vanish as the upper limits tend to infinity by the assumed decay of $a$. Substituting this representation gives
    \begin{align*}
        \sum_{\bm i\in\mathbb N^m}a_{\bm i}b_{\bm i}
        &=(-1)^m\sum_{\bm i\in\mathbb N^m}\sum_{\bm r\ge \bm i}\Delta^{\bm 1}a_{\bm r}b_{\bm i}\\
        &=(-1)^m\sum_{\bm r\in\mathbb N^m}\Delta^{\bm 1}a_{\bm r}\sum_{\bm i\le \bm r}b_{\bm i}\\
        &=(-1)^m\sum_{\bm r\in\mathbb N^m}\Delta^{\bm 1}a_{\bm r}(I_m\cdots I_1b)_{\bm r}.
    \end{align*}
    The identity $\sum_{\bm i\le \bm r}b_{\bm i}=(I_m\cdots I_1b)_{\bm r}$ holds because applying $I_1,\dots,I_m$ successively sums $b$ over the rectangular box $1\le i_j\le r_j$ in each coordinate. The assumed $L^1$ summability of the last series justifies the exchange of sums and the convergence in $L^1$. Finally, when $b_{\bm i}$ has product form, each $I_k$ acts only on the $k$-th factor, so applying $I_1,\dots,I_m$ gives the product of the one-dimensional partial sums.
\end{proof}

\begin{proof}[Proof of \cref{thm:lebkermulti}]
    By \cref{lem:lebegen}, the Lebesgue constant of the population KRR can be written as follows
    \begin{align*}
        \Lambda(\mathsf{P}_{\tau})&=\sup_{x\in \Xs}\left\|\sum_{\bm i\in \mathbb N^m} \frac{\mu_{\bm i}}{\tau+\mu_{\bm i}}\prod_{j=1}^m \phi_{i_j}(x_j)\phi_{i_j}(\cdot)\right \|_{L^1}\le \left\|\sum_{\bm i\in \mathbb N^m} \frac{\mu_{\bm i}}{\tau+\mu_{\bm i}}\prod_{j=1}^m \phi_{i_j}(\cdot)\right \|_{L^1}.
    \end{align*}
    We take the inner function and set $h_{\bm i}=\mu_{\bm i}/(\mu_{\bm i}+\tau)$. If the right-hand side in the statement is infinite, there is nothing to prove. Otherwise, the summability condition in \cref{lem:multi_abel} is satisfied. Moreover $h_{\bm i}\to 0$ whenever any coordinate $i_j\to\infty$, since the Fourier coefficients of the kernel vanish at infinity. Letting, for any $i\in \mathbb N$,
    $$\Phi_i(x)=\sum_{\ell=1}^i \phi_\ell(x).$$
    \Cref{lem:multi_abel} gives
    $$\sum_{\bm i\in \mathbb N^m} h_{\bm i}\prod_{j=1}^m \phi_{i_j}(\cdot)=(-1)^m\sum_{\bm i\in \mathbb N^m} \Delta^{\bm 1}h_{\bm i}\prod_{j=1}^m \Phi_{i_j}(\cdot).$$
    As in the single-variate case, the term $\Phi_i(\cdot)$ corresponds to the Dirichlet kernel, whose $L^1$ norm is $\Os(\log(e+i))$ \cite{folland2009fourier}.
    Therefore, taking $L^1$ norms gives
    $$\Lambda(\mathsf{P}_{\tau})\lesssim_m\sum_{\bm i\in \mathbb N^m} \left|\Delta^{\bm 1}\left(\frac{\mu_{\bm i}}{\tau+\mu_{\bm i}}\right)\right|\prod_{j=1}^m\log(e+i_j),$$
    which completes the proof.
\end{proof}

\begin{lemma}\label{lem:rational_log_derivative}
    Let $m\ge 1$, $\tau>0$, and $D=z\frac{d}{dz}$. Then, for all $z>0$,
    $$\left|D^m\left(\frac{z}{\tau+z}\right)\right|\lesssim_m \min\{z/\tau,\tau/z\}.$$
\end{lemma}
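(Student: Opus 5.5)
The plan is to reduce the statement to a one-variable scaling by the substitution $z=\tau e^{u}$. With this change of variable, $D=z\frac{d}{dz}$ becomes $\frac{d}{du}$, and $\frac{z}{\tau+z}=\frac{e^u}{1+e^u}=\sigma(u)$ is the logistic function. The claim is then equivalent to
\[
|\sigma^{(m)}(u)|\lesssim_m \min\{e^{u},e^{-u}\}\quad\text{for all } u\in\mathbb R,
\]
where $\tau$ has disappeared entirely. This scale invariance is the reason the bound depends only on $z/\tau$.

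To prove the reduced bound, I will first show by induction on $m$ that $\sigma^{(m)}=P_m(\sigma)$ for a polynomial $P_m$ of degree $m+1$ with integer coefficients depending only on $m$. The base case is $\sigma'=\sigma(1-\sigma)$. The inductive step follows from the chain rule, $\frac{d}{du}P(\sigma)=P'(\sigma)\,\sigma(1-\sigma)$. For $m\ge1$ this identity also shows that $P_m$ is divisible by $\sigma(1-\sigma)$, so
\[
P_m(\sigma)=\sigma(1-\sigma)Q_m(\sigma)
\]
with $Q_m$ a polynomial depending only on $m$.

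Since $\sigma\in(0,1)$, we have $|Q_m(\sigma)|\le C_m$, where $C_m$ is the sum of the absolute values of the coefficients of $Q_m$. Hence
\[
|\sigma^{(m)}(u)|\le C_m\,\sigma(1-\sigma)\le C_m\min\{\sigma,1-\sigma\}.
\]
Translating back, $\sigma=\frac{z}{\tau+z}\le z/\tau$ and $1-\sigma=\frac{\tau}{\tau+z}\le \tau/z$. This gives exactly $|D^m(z/(\tau+z))|\lesssim_m\min\{z/\tau,\tau/z\}$.

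The only step that needs care is the divisibility by $\sigma(1-\sigma)$, since that factor is what produces decay at both ends. It follows immediately from the chain-rule form of the derivative, because every derivative of order at least one carries a factor $\sigma'=\sigma(1-\sigma)$. I therefore do not expect a real obstacle. As an alternative, one could expand $\frac{z}{\tau+z}=\sum_{k\ge1}(-1)^{k+1}(z/\tau)^k$ for $z<\tau$, where $D^m$ multiplies the $k$-th term by $k^m$, and use the symmetric expansion in $\tau/z$ for $z>\tau$. That route, however, requires a separate treatment of the region near $z=\tau$, so the logistic-polynomial argument is cleaner.
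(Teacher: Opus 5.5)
Your proof is correct. It is the paper's argument written in different coordinates.

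The paper works with $w=z/\tau$ directly (it calls this variable $u$). It proves by induction that $\bigl(w\frac{d}{dw}\bigr)^m\frac{w}{1+w}=\frac{wP_m(w)}{(1+w)^{m+1}}$ with $\deg P_m\le m-1$. It then reads off a bound $\lesssim w$ for $w\le 1$ and $\lesssim w^{-1}$ for $w\ge 1$ from the degree count, and combines the two regimes.

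Since $\sigma=w/(1+w)$ and $1-\sigma=1/(1+w)$, your representation $\sigma(1-\sigma)Q_m(\sigma)$ with $\deg Q_m=m-1$ is exactly a function of the form $wP(w)/(1+w)^{m+1}$ with $\deg P\le m-1$. The converse also holds, via $w=\sigma/(1-\sigma)$. So the underlying algebraic fact is identical.

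Your logarithmic/logistic parametrization buys three things:
\begin{itemize}
\item $D$ becomes ordinary differentiation.
\item The induction collapses to the chain rule $\frac{d}{du}P(\sigma)=P'(\sigma)\,\sigma(1-\sigma)$, giving $Q_m=P_{m-1}'$ explicitly.
\item The decay at both ends comes packaged in the single factor $\sigma(1-\sigma)\le\min\{\sigma,1-\sigma\}$. This removes the degree bookkeeping and the case split at $z=\tau$, and it yields an explicit constant $C_m$.
\end{itemize}

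You also correctly flag why $m\ge 1$ is needed: the factor $\sigma(1-\sigma)$ only appears after at least one derivative.
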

\begin{proof}
    Put $u=z/\tau$. Since $D=u\frac{d}{du}$, it is enough to prove the corresponding bound for $(u\frac{d}{du})^m\frac{u}{1+u}$. We claim by induction on $m$ that
    $$
    g_m(u)\coloneqq
    \left(u\frac{d}{du}\right)^m\frac{u}{1+u}=\frac{uP_m(u)}{(1+u)^{m+1}},$$
    where $P_m$ is a polynomial of degree at most $m-1$ whose coefficients depend only on $m$. The case $m=1$ has $P_1=1$. If the claim holds for $m$, applying $u\frac{d}{du}$ gives the same form with
    $$P_{m+1}(u)=\left(P_m(u)+uP_m'(u)\right)(1+u)-(m+1)uP_m(u),$$
    which has degree at most $m$. Thus the claim follows. For $u\le 1$, $g_m$ is upper bounded by a constant times $u$; for $u\ge 1$, it is bounded by a constant times $u^{-1}$. 
    Taking the larger of the constants and using that for $0\le u\le 1$, $1/u\ge 1 \ge u$, while for $u\ge 1$, $u\ge 1\ge 1/u$ gives the desired result.
\end{proof}

\begin{lemma}\label{lem:product_variation}
    Let $(\lambda_i)_{i\ge 1}$ be non-increasing and satisfy $\lambda_i\le C_1i^{-s}$ for some $C_1,s>0$. For
    $$H_\tau(y_1,\dots,y_m)\coloneqq \frac{\prod_{j=1}^m y_j}{\tau+\prod_{j=1}^m y_j},$$
    one has
    $$\sum_{\bm i\in \mathbb N^m}\left|\Delta^{\bm 1}H_\tau(\lambda_{i_1},\dots,\lambda_{i_m})\right|\prod_{j=1}^m\log(e+i_j)\lesssim_m \frac{\log^{2m-1}(e+C_1^m/\tau)}{s^m}.$$
\end{lemma}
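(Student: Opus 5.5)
Pass to logarithmic variables. Write $\lambda_i=e^{-v_i}$ with $v_i$ non-decreasing, set $t_j=\log y_j$, and note that $H_\tau(y)=G(\sum_j t_j-\log\tau)$ with $G(w)=e^w/(1+e^w)$, the logistic function. The mixed difference $\Delta^{\bm 1}H_\tau(\lambda_{i_1},\dots,\lambda_{i_m})$ is a mixed difference of $F(t)=G(\sum_jt_j-\log\tau)$ over the box $\prod_j[-v_{i_j+1},-v_{i_j}]$. If a factor is $\lambda_{i+1}=0$, that box is unbounded, but $F$ extends continuously with value $0$ there, so the identity below still holds. By the fundamental theorem of calculus in each coordinate,
$$\Delta^{\bm 1}H_\tau=\pm\int_{\prod_j[-v_{i_j+1},-v_{i_j}]}\partial_1\cdots\partial_m F\,dt .$$
Since $\partial_1\cdots\partial_mF=G^{(m)}(\sum t_j-\log\tau)$, and $G^{(m)}$ is exactly $D^m(z/(\tau+z))$ at $z=\prod y_j$, \cref{lem:rational_log_derivative} gives
$$|\partial_1\cdots\partial_m F|\lesssim_m \min\{z/\tau,\tau/z\},\qquad z=\prod_j y_j .$$
Hence
$$|\Delta^{\bm 1}H_\tau|\lesssim_m\int_{\text{box}_{\bm i}}\min\{z/\tau,\tau/z\}\,dt .$$
Summing over $\bm i$ with the weights $\prod_j\log(e+i_j)$ turns the left-hand side of the lemma into an integral over $t\in(-\infty,-v_1]^m$ against $w(t)\min\{e^{\sum t_j}/\tau,\tau e^{-\sum t_j}\}$. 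Here $w(t)=\prod_j\log(e+i_j(t_j))$, and $i_j(t_j)$ is the index whose interval contains $t_j$.

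Next, control the weight with the decay hypothesis. If $t_j\in[-v_{i+1},-v_i]$ then $e^{t_j}\le\lambda_i\le C_1i^{-s}$, so $i\le (C_1e^{-t_j})^{1/s}$ and
$$\log(e+i)\lesssim 1+\tfrac1s\log(e+C_1e^{-t_j}).$$
Substitute $u_j=\log(C_1)-t_j\ge \log(C_1/\lambda_1)$. A crude bound $\lambda_1\le\kappa$ gives $u_j\gtrsim -O(1)$; to stay in the stated form, replace $C_1$ by $\max(C_1,\lambda_1)$. The weight is then $\lesssim s^{-m}\prod_j(1+u_j^+)$, where I absorb constants using $s$-dependence only through $1/s$ when $s\le1$, and otherwise $1\le$ const; I would track this carefully. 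The kernel becomes $\min\{e^{-S}C_1^m/\tau,\ \tau e^{S}/C_1^m\}$ with $S=\sum_j u_j$. Set $L=\log(C_1^m/\tau)$. The integrand decays exponentially in $|S-L|$.

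The final step, which I expect to be the main obstacle, is to evaluate
$$\int_{u\ge -c}\prod_j(1+u_j^+)\,e^{-|S-L|}\,du .$$
Integrate in slices of constant $S$. The slice $\{\sum u_j=S\}$ of the (shifted) orthant has $(m-1)$-dimensional volume $\asymp S^{m-1}$, and on it $\prod_j(1+u_j)\le(1+S/m)^m$. So the slice integral is $\lesssim_m(1+S)^{2m-1}$. Integrating against $e^{-|S-L|}$ gives $\lesssim_m(1+L^+)^{2m-1}$, that is, $\log^{2m-1}(e+C_1^m/\tau)$; the exponent matches the claim. The delicate parts are these:
\begin{itemize}
\item the region $S<0$ near the corner, where the constant shift $c$ matters;
\item zero eigenvalues, handled by the continuous extension of $F$ above;
\item making the $s^{-m}$ dependence come out exactly rather than as $(1+1/s)^m$. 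For this I would bound $\log(e+i)\le \log(e+(C_1e^{-t})^{1/s})\le \max(1,1/s)\log(e+C_1e^{-t})$ and note that the statement implicitly uses $\lesssim$ with $s$-dependent slack.
\end{itemize}
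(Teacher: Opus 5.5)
Your proposal is correct and follows essentially the paper's argument: the fundamental theorem of calculus on each box, \cref{lem:rational_log_derivative} for the mixed derivative, the decay hypothesis to bound the weights, logarithmic coordinates $y_j=C_1e^{-r_j}$, and slicing by $\sum_j r_j$ to obtain the $(1+t)^{2m-1}$ factor; the only difference is that you pass to log-coordinates before applying the fundamental theorem of calculus rather than after. Two small remarks: your corner worry is moot, because the hypothesis at $i=1$ already gives $\lambda_1\le C_1$ and hence $u_j\ge 0$; and your caveat on $s$ is justified, since the paper's step $\log(e+i_j)\lesssim s^{-1}\log(e+C_1/y_j)$ only holds uniformly for $s\le 1$ (in general the factor should be $\max(1,1/s)^m$), which is harmless because the lemma is only applied with $s=1$.
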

\begin{proof}
    Let $I_i=[\lambda_{i+1},\lambda_i]$. By applying the fundamental theorem of calculus in each coordinate,
    $$\left|\Delta^{\bm 1}H_\tau(\lambda_{i_1},\dots,\lambda_{i_m})\right|
    \le \int_{\prod_j I_{i_j}}\left|\frac{\partial^m H_\tau}{\partial y_1\cdots \partial y_m}(y)\right|\,dy.$$
    If $y_j\in I_{i_j}$, then $\log(e+i_j)\lesssim s^{-1}\log(e+C_1/y_j)$. Therefore
    \begin{align*}
        &\sum_{\bm i\in \mathbb N^m}\left|\Delta^{\bm 1}H_\tau(\lambda_{i_1},\dots,\lambda_{i_m})\right|\prod_{j=1}^m\log(e+i_j)\\
        &\qquad\lesssim_m \frac{1}{s^m}\int_{(0,C_1]^m}\left|\frac{\partial^m H_\tau}{\partial y_1\cdots \partial y_m}(y)\right|\prod_{j=1}^m\log(e+C_1/y_j)\,dy.
    \end{align*}
    Put $z=\prod_{j=1}^m y_j$ and $D=z\frac{d}{dz}$. Since $y_j\partial_{y_j}=D$ on functions of $z$, we have
    $$\frac{\partial^m H_\tau}{\partial y_1\cdots \partial y_m}(y)=\frac{1}{z}D^m\left(\frac{z}{\tau+z}\right).$$
    By \cref{lem:rational_log_derivative}, the rational function on the right satisfies
    $$\left|D^m\left(\frac{z}{\tau+z}\right)\right|\lesssim_m \min\{z/\tau,\tau/z\}.$$
    We now set $y_j=C_1e^{-r_j}$, $r_j\ge 0$, and $A=C_1^m/\tau$. Then $z/\tau=Ae^{-\sum_jr_j}$ and the preceding integral is bounded, up to a constant depending only on $m$, by
    $$\int_{\mathbb R_+^m}\min\left\{Ae^{-\sum_jr_j},A^{-1}e^{\sum_jr_j}\right\}\prod_{j=1}^m (1+r_j)\,dr.$$
    Using the simplex identity
    $$\int_{\{r\in\mathbb R_+^m:\sum_jr_j=t\}}\prod_{j=1}^m (1+r_j)\,d\sigma(r)\lesssim_m (1+t)^{2m-1},$$
    this is at most
    $$\int_0^\infty \min\{Ae^{-t},A^{-1}e^t\}(1+t)^{2m-1}\,dt\lesssim_m \log^{2m-1}(e+A).$$
    This proves the claim.
\end{proof}

\lebesgueproduct*
\begin{proof}
    As in \cref{cor:loglam}, \cref{lem:trace_eigen} gives polynomial eigendecay $\mu_i^\circ\le \kappa/i$ for the single-variate spectrum.
    By \cref{thm:lebkermulti}, we have
    $$\Lambda(\mathsf{P}_{\tau})\lesssim_m\sum_{\bm i\in \mathbb N^m} \left|\Delta^{\bm 1}\left(\frac{\mu_{\bm i}}{\tau+\mu_{\bm i}}\right)\right|\prod_{j=1}^m\log(e+i_j).$$
    As the kernel factorizes $k_\Pi$, letting $\mu_i^\circ$ be the eigenvalues of $k_0$, one has $\mu_{\bm i}=\prod_{j=1}^m \mu_{i_j}^\circ$. The desired bound then follows by applying \cref{lem:product_variation} with $\lambda_i=\mu_i^\circ$, $C_1=\kappa$, and $s=1$.
    This completes the proof.
\end{proof}

\section{Proofs of Section~\ref{sec:online}}\label{sec:proofs2}

We collect the additional notation used in this section in \cref{tab:notabandit}.

\begin{table}[H]
\centering
\label{tab:notabandit}
\small
\renewcommand{\arraystretch}{1.08}
\setlength{\tabcolsep}{5pt}
\begin{tabularx}{\textwidth}{@{}p{1.3in}X@{}}
\toprule
\textbf{Symbol} & \textbf{Meaning} \\
\midrule
$A$ & Region of the input space $\Xs=[-1,1]^m$. \\
$\As_t$ & Active regions at step $t$\\
$\widetilde \As_n$ & $=\bigcup_{s=1}^n\As_s$\\
$\halt(A)$ & Maximum number of queries to points in region $A$ during the process.\\
$N_t^A$ & Number of queries to points in region $A$ after $t$ rounds.\\
$\bm x_t^A$ & Queries in region $A$ after $t$ rounds\\
$\bm y_t^A$ & Responses in region $A$ after $t$ rounds\\
$\widehat \mu_{t-1}^A$ & KRR estimator trained on $(\bm x_{t-1}^A,\bm y_{t-1}^A)$\\
$\beta_t^A$ & Exploration bonus on region $A$ at step $t$, see \eqref{eq:betachoice}.\\
$\sigma_{t-1}^{A}$ & Posterior variance on region $A$ at step $t$.\\
\bottomrule
\end{tabularx}
\end{table}

\subsection{Preliminary results}

\begin{lemma}\label{lem:miss_err}
    Given $\delta \in (0,1)$, for all $t \le n$, for all $A \in \As_t$, for all $x \in A$, we have
    \[
    |\widehat \mu_{t-1}^A(x) - \fwithin(x)| \le \beta_t^A \sigma_{t-1}^A(x),
    \]
    with probability $1 - \delta$.
\end{lemma}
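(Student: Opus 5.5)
We split the regional KRR error into a well-specified part and a misspecification part, then bound each with a standard tool. Fix a region $A$ and a round $t$. Write $\bm x=\bm x_{t-1}^A$ and $N=N_{t-1}^A$, and let $K$ be the kernel matrix on $\bm x$ with $V=K+\lambda I$. The estimator is $\widehat\mu_{t-1}^A(x)=\Psi_{\bm x}(x)^\top V^{-1}\bm y_{t-1}^A$. Let $\bm f_\varepsilon=(f(x_s)-\fwithin(x_s))_s$, so that $\bm y_{t-1}^A=\bm\fwithin+\bm f_\varepsilon+\bm\eta$, with $\bm\eta$ the conditionally $R$-subgaussian noise. Then
\[
\widehat\mu_{t-1}^A(x)-\fwithin(x)=\underbrace{\Psi_{\bm x}(x)^\top V^{-1}(\bm\fwithin+\bm\eta)-\fwithin(x)}_{\text{well-specified}}+\underbrace{\Psi_{\bm x}(x)^\top V^{-1}\bm f_\varepsilon}_{\text{misspecification}}.
\]

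\textbf{Well-specified part.} We apply the self-normalized concentration of Chowdhury and Gopalan \cite{chowdhury2017kernelized}, in the feature-space form with $V_t^A$. It gives
\[
|\cdot|\le\Big(B+\tfrac{R}{\sqrt\lambda}\sqrt{2\log(1/\delta')+\log\det(I+\lambda^{-1}K)}\Big)\sigma_{t-1}^A(x),
\]
using $\|\fwithin\|_\Hs\le B$. We then bound $\tfrac12\log\det(I+\lambda^{-1}K)\le\gamma_{N}$.

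This inequality is anytime in the local sample size, because it comes from a supermartingale argument along the subsequence of rounds that land in $A$. That subsequence is adapted: the region $A$ is created at a stopping time, and inclusion of a query in $A$ is predictable. So we may take a uniform-in-time bound per region.

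\textbf{Union bound over regions.} The union is over regions that can ever be created. These are dyadic cubes of side $2^{1-\ell}$ at depth $\ell$, and there are $2^{m\ell}$ of them. A region at depth $\ell$ is only created after a parent has received $\rho^{-b}$ samples, so $\ell\lesssim \log_2 t/b$. We allocate $\delta'=\delta/(\text{count})$, or use $\delta/(4t^2\cdot 2^{m\ell})$ with a sum over $t$. This gives $\log(1/\delta')\lesssim bm\log(4t/\delta)$ up to constants, matching the $bm\log(4t/\delta)+1$ inside $\beta_t^A$. This bookkeeping, making the per-region confidence event uniform over a random, adaptively generated family of regions, is the main obstacle. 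I would handle it by indexing over all potential dyadic cubes, with a fixed countable index set, and applying the anytime bound to each cube's own filtration.

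\textbf{Misspecification part.} This part is deterministic. By Cauchy--Schwarz,
\[
|\Psi_{\bm x}(x)^\top V^{-1}\bm f_\varepsilon|\le \|V^{-1/2}\Psi_{\bm x}(x)\|_2\,\|V^{-1/2}\bm f_\varepsilon\|_2 .
\]
We bound $\|V^{-1/2}\bm f_\varepsilon\|_2\le\lambda^{-1/2}\|\bm f_\varepsilon\|_2\le \varepsilon\sqrt{N/\lambda}$. For the other factor, the standard identity
\[
\Psi^\top V^{-1}\Psi = k(x,x)-\lambda\,\Psi^\top V^{-1}\cdots
\]
is awkward. It is cleaner to use feature space: the term equals $\phi(x)^\top (V_t^A)^{-1}\Phi^\top\bm f_\varepsilon$, and
\[
\|\Phi (V_t^A)^{-1}\phi(x)\|_2^2=\phi(x)^\top (V_t^A)^{-1}\Phi^\top\Phi (V_t^A)^{-1}\phi(x)\le \phi(x)^\top (V_t^A)^{-1}\phi(x).
\]
The right-hand side is $\sigma_{t-1}^A(x)^2/\lambda$ in the unnormalized convention. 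Combining, the misspecification part is at most $\varepsilon\sqrt{N}\,\sigma_{t-1}^A(x)/\sqrt\lambda$, which is exactly the last term of $\beta_t^A$. If the posterior-variance normalization differs by a factor $\lambda$, I would absorb it into the choice $\lambda=1$ used later.

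Summing the two parts gives $|\widehat\mu_{t-1}^A(x)-\fwithin(x)|\le\beta_t^A\sigma_{t-1}^A(x)$ on the intersection event, which holds with probability at least $1-\delta$.
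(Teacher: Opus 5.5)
Your proof is correct and follows the paper's route. The paper makes exactly the same split, through the auxiliary dataset with labels $\fwithin(\bm x_{t-1}^A)+\bm y_{t-1}^A-f(\bm x_{t-1}^A)$, and then cites Lemma~2 of \cite{bogunovic2021misspecified} for the misspecification term and Lemma~5 of \cite{janz2020bandit} for the region-wise well-specified term. You instead re-derive both: the first via the feature-space Cauchy--Schwarz pairing $\|\Phi (V_t^A)^{-1}\phi(x)\|_2\,\|\bm f_\varepsilon\|_2\le \varepsilon\sqrt{N/\lambda}\,\sigma_{t-1}^A(x)$, and the second via per-cube self-normalized bounds with a union bound over all dyadic cubes.

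Your constants match $\beta_t^A$ only up to absolute factors. Your $\log\det(I+\lambda^{-1}K)$ equals $2\gamma_N$ under the usual normalization, and your depth count gives $(m/b)\log t$ rather than literally $bm\log(4t/\delta)$. This is a normalization issue affecting constants, not a gap in the argument. Your hedge about absorbing a factor of $\lambda$ is unnecessary, since $\phi(x)^\top (V_t^A)^{-1}\phi(x)=\sigma_{t-1}^A(x)^2/\lambda$ holds exactly.
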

\begin{proof}
    Let $\mu_{t-1}^A$ be the posterior mean obtained from the modified dataset
    $(\bm x_{t-1}^A, \fwithin(\bm x_{t-1}^A)+\bm y_{t-1}^A-f(\bm x_{t-1}^A))$.
    Then, from Lemma~2 of \cite{bogunovic2021misspecified}, 
    $$|\widehat \mu_{t-1}^A(x)-\mu_{t-1}^A(x)|\le \varepsilon\sqrt{\frac{N_{t-1}^A}{\lambda}}\sigma_{t-1}^A(x).$$
    Since the modified labels have conditional mean $\fwithin$ and preserve the same subgaussian noise level, Lemma~5 of \cite{janz2020bandit} ensures
    \[
    |\mu_{t-1}^A(x) - \fwithin(x)| \le \left(\frac{R}{\lambda^{1/2}}\sqrt{(bm)\log(4t/\delta)+1+\gamma_{N_{t-1}^A}}+B\right) \sigma_{t-1}^A(x).
    \]
    The proof is completed by triangle inequality and the definition of $\beta_t^A$.
\end{proof}

\begin{lemma}\label{lem:totalreg}
    The total number of regions does not exceed $|\widetilde \As_n|\le 9^m n^\frac{m}{m+b}$.
\end{lemma}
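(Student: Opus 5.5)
We count regions through the dyadic tree built by the splitting rule in line~\ref{algline:split} of \cref{alg:pigpucb}. A region $A$ of side $\rho_A=2^{1-\ell}$ (depth $\ell$, with the root $[-1,1]^m$ at depth $0$ having $\rho=2$; constants of this type only affect the $9^m$ factor) is split only after it has received at least $\rho_A^{-b}$ queries. Since the regions active at any time partition $\Xs$, every query is charged to exactly one active region, and queries charged to a region before it splits are never charged to another region. Hence the regions that are ever split at depth $\ell$ each absorb at least $2^{(\ell-1)b}$ distinct queries. At most $n$ queries exist in total, so the number of regions split at depth $\ell$ is at most $\min\{2^{m\ell},\, n\,2^{-(\ell-1)b}\}$. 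The first bound holds because there are only $2^{m\ell}$ dyadic cubes of that depth.

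Every region in $\widetilde\As_n$ is either the root or a child of a split region. Thus
\[
|\widetilde\As_n|\le 1+2^m\sum_{\ell\ge 0}\min\{2^{m\ell},\,2^b n\,2^{-\ell b}\}.
\]
We split the sum at the crossover depth $\ell^*$, where $2^{m\ell^*}\approx n^{\frac{m}{m+b}}$, i.e.\ $2^{\ell^*}\approx n^{\frac{1}{m+b}}$. For $\ell\le \ell^*$ the sum is geometric in $2^{m}$, bounded by $2\cdot 2^{m\ell^*}$. For $\ell>\ell^*$ the sum is geometric in $2^{-b}$, bounded by $n\,2^{-\ell^* b}/(1-2^{-b})\approx n^{\frac{m}{m+b}}/(1-2^{-b})$. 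Combining the two pieces gives $|\widetilde\As_n|\lesssim 2^m c\, n^{\frac{m}{m+b}}$.

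The main obstacle is purely bookkeeping: making the constant come out as $9^m$ uniformly in $b$. The delicate factors are the rounding of $\ell^*$ to an integer (costing a factor $2^m$), the $2^m$ children per split, and the factor $1/(1-2^{-b})$. That last factor blows up as $b\to 0$, so I would either use $b=\alpha$ bounded below, or handle the tail more carefully. One way is to bound it by $2^{m\ell^*}$ times a geometric series in $2^{-b}$, or to note that the number of splitting levels is at most $\log_2 n$. Taking $\ell^*$ as the smallest integer with $2^{(m+b)\ell^*}\ge n$ and absorbing $2^m\cdot 2^m\cdot 2$ plus lower-order terms should yield $9^m$.
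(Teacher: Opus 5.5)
There is a genuine gap in your last step, and it is not just bookkeeping. The setup is sound and matches the paper's: count split regions by depth, use $K_\ell\le 2^{m\ell}$, charge at least $\rho_A^{-b}$ queries to each split region, and use $|\widetilde\As_n|\le 1+2^m\sum_\ell K_\ell$.

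\textbf{Where the argument breaks.} You establish that distinct queries are charged to distinct split regions. You then use only the per-depth consequence $K_\ell\le n2^{-(\ell-1)b}$, which gives every depth its own budget of $n$ queries. With per-depth budgets the stated bound cannot be reached. The quantity $\sum_\ell\min\{2^{m\ell},n2^{-(\ell-1)b}\}$ is itself of order $n^{\frac{m}{m+b}}/(1-2^{-b})\asymp n^{\frac{m}{m+b}}/b$ as $b\to0$. So no choice of $\ell^*$ and no rounding gives a constant like $9^m$ that is uniform in $b$.

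None of your proposed remedies closes this:
\begin{itemize}
\item Bounding the tail by $2^{m\ell^*}$ times a geometric series in $2^{-b}$ reproduces the same factor $1/(1-2^{-b})$.
\item The claim that there are at most $\log_2 n$ splitting levels is false for $b<1$. A single branch that receives all queries reaches depth of order $\log_2 n/b$, since a depth-$\ell$ region needs only $2^{(\ell-1)b}$ queries. Even if the claim held, it would cost a $\log n$ factor, not a constant.
\item Restricting to $b$ bounded below proves a weaker lemma with a $b$-dependent constant. That suffices for the regret theorem with $b=\alpha$ fixed, but not for the statement as written.
\end{itemize}

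\textbf{The fix.} Keep the budget shared across depths, which your own charging argument already gives: $\sum_\ell 2^{(\ell-1)b}K_\ell\le n$. Take $\ell^*$ to be the smallest integer with $2^{(m+b)\ell^*}\ge n$. For $\ell>\ell^*$ we have $2^{(\ell-1)b}\ge 2^{\ell^* b}$, so
\[
\sum_{\ell>\ell^*}K_\ell\le 2^{-\ell^* b}\sum_{\ell}2^{(\ell-1)b}K_\ell\le n2^{-\ell^* b}\le n^{\frac{m}{m+b}}.
\]
The head is at most $2\cdot 2^{m\ell^*}\le 2^{m+1}n^{\frac{m}{m+b}}$. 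Both bounds are uniform in $b$.

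The paper reaches the same conclusion by an exchange argument. The cost $2^{b\ell}$ of a split region increases with depth. So the allocation maximizing $\sum_\ell K_\ell$ under $K_\ell\le 2^{m\ell}$ and the joint budget fills shallow levels first. The levels up to $L=\lceil\log_2 n/(m+b)\rceil$ already exhaust the budget, which gives $\sum_\ell K_\ell\le\sum_{\ell\le L}2^{m\ell}\le 4^m n^{\frac{m}{m+b}}$.

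This is exactly where your convention matters: each query must be charged only to the region active when it is made. If a child's count also included queries inherited from its parent, only per-depth budgets would hold, and nested chains of splits at a single point would become cheap.
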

\begin{proof}

    Let us call $\Bs_n$ the regions that have split through the process. By definition,
    $$|\widetilde \As_n|\le (2^m+1)|\Bs_n|,$$
    where $+1$ takes into account the initial region. We can therefore focus on bounding $\Bs_n$. Let us denote $K_\ell$ the number of regions in $\Bs_n$ of side $2^{-\ell}$. By the geometry of the space,
    $$K_\ell \le 2^{m\ell}.$$
    Moreover, by the splitting rule, $2^{b\ell}$ points are necessary to split a region of side $2^{-\ell}$. Therefore, the cardinality of $\Bs_n$ can be written as
    $$|\Bs_n|=\sum_{\ell=1}^\infty K_\ell,\text{  such that}\quad K_\ell\le 2^{m\ell},\quad \sum_{\ell=1}^\infty 2^{b\ell}K_\ell\le n.$$

    The former is an optimization problem that can be solved explicitly. In fact, under the constraint $\sum_{\ell=1}^\infty 2^{b\ell}K_\ell\le n$, where $2^{b\ell}$ is strictly increasing in $\ell$, the allocation that maximizes $\sum_{\ell=1}^\infty K_\ell$ sets $K_\ell= 2^{m\ell}$ until the budget $n$ is reached, for some $L$.
    Let $L=\lceil \log_2(n)/(b+m)\rceil$. Then,
    $$\sum_{\ell=1}^L 2^{(b+m)\ell}\ge 2^{(b+m)L}\ge n.$$
    This proves that in the optimal allocation $K_\ell>0$ only for, at most, $\ell\le L$. Thus, the number of split regions does not exceed
    \begin{align*}
        |\Bs_n|&=\sum_{\ell=1}^L 2^{m\ell}
        \le 2^{m(L+1)}\\
        &=2^{m(\lceil \log_2(n)/(b+m)\rceil+1)}\le 2^{2m+m\lfloor \log_2(n)/(b+m)\rfloor}\\
        &=4^m n^\frac{m}{m+b}.
    \end{align*}
    Therefore,
    $$|\widetilde \As_n|\le (2^m+1)|\Bs_n|\le 9^m n^\frac{m}{m+b}.$$
\end{proof}

\subsection{Regret bound}

\regretbound*
\begin{proof}
    Let $A_t^\star$ be the unique active region in $\As_t$ containing $x^\star$. By \cref{lem:miss_err} and the definition of $(A_t,x_t)$,
    \begin{align*}
        f(x^\star)-f(x_t)
        &\le \fwithin(x^\star)-\fwithin(x_t)+2\varepsilon\\
        &\le \widehat \mu_{t-1}^{A_t^\star}(x^\star)+\beta_t^{A_t^\star}\sigma_{t-1}^{A_t^\star}(x^\star)-\widehat \mu_{t-1}^{A_t}(x_t)+\beta_t^{A_t}\sigma_{t-1}^{A_t}(x_t)+2\varepsilon\\
        &\le 2\beta_t^{A_t}\sigma_{t-1}^{A_t}(x_t)+2\varepsilon.
    \end{align*}
    Therefore,
    \begin{align*}
        R_n^\mathscr{A}&=\sum_{t=1}^n f(x^\star)-f(x_t)\\
        &\le \sum_{t=1}^n \left(2\beta_t^{A_t}\sigma_{t-1}^{A_t}(x_t)+2\varepsilon\right)\\
        &\le \sqrt{n\sum_{t=1}^n (2\beta_t^{A_t}\sigma_{t-1}^{A_t}(x_t))^2}+2n\varepsilon,
    \end{align*}
    where the last step uses Cauchy-Schwarz.
    We can rewrite this sum as a sum over the cover elements. For each region $A\in\widetilde\As_n$, define the monotone upper envelope
    $$\overline \beta_A\coloneqq \frac{R}{\lambda^{1/2}}\sqrt{bm\log(4n/\delta)+1+\gamma_{\halt(A)}^A}+B+\varepsilon\sqrt{\frac{\halt(A)}{\lambda}}.$$
    For every round in which region $A$ is used, $\beta_t^A\le \overline\beta_A$.
    \begin{align*}
        \sum_{t=1}^n (2\beta_t^{A_t}\sigma_{t-1}^{A_t}(x_t))^2
        &\le \sum_{t=1}^n \sum_{A\in \widetilde \As_n}\ind{x_t\in A}(2\overline\beta_A\sigma_{t-1}^{A}(x_t))^2\\
        &= \sum_{A\in \widetilde \As_n}\sum_{t=1}^n\ind{x_t\in A}(2\overline\beta_A\sigma_{t-1}^{A}(x_t))^2\\
        &\le 16\lambda \sum_{A\in \widetilde \As_n}\overline\beta_A^2\gamma_{\halt(A)}^{A},
    \end{align*}
    where the last inequality comes from Lemma~7 in \cite{janz2020bandit} and the definition of $\halt(A)$, which ensures that
    $$\sum_{t=1}^n\ind{x_t\in A}\sigma_{t-1}^{A}(x_t)^2\le 4\lambda \gamma_{\halt(A)}^{A}.$$
    As the kernel satisfies both $\defpolysub$ and $\defbounded$, Lemma~2 from \cite{vakili2023kernelized} applies for $\eta=0$. This ensures that the maximum information gain on the hypercube $A$ is bounded as    $$\gamma_{\halt(A)}^{A}\lesssim \log(\halt(A))^\frac{s-1}{s}\halt(A)^{1/s}\rho_A^{\alpha/s}.$$
    By line \ref{algline:split} of the algorithm, $\halt(A)\le \rho_{A}^{-b}$. Therefore, as we are running the algorithm for $b=\alpha$,
    $$\gamma_{\halt(A)}^{A}\lesssim \log(\halt(A))^\frac{s-1}{s}\rho_{A}^{-\alpha/s}\rho_A^{\alpha/s}\lesssim\log(n)^\frac{s-1}{s}.$$
    The other term in the product can also be reduced to the information gain on $A$. Indeed,
    \begin{align*}
        \overline\beta_A^2
        &\lesssim \left(\frac{R}{\lambda^{1/2}}\sqrt{bm\log(n/\delta)+1+\gamma_{\halt(A)}^A}+B\right)^2+\varepsilon^2\frac{\halt(A)}{\lambda}\\
        &\le \frac{R^2}{\lambda}\log(n/\delta)+\gamma_{\halt(A)}^A+B^2+\varepsilon^2\frac{\halt(A)}{\lambda}\\
        &\lesssim \frac{R^2}{\lambda}\log(n/\delta)+\log(n)^\frac{s-1}{s}+B^2+\varepsilon^2\frac{\halt(A)}{\lambda}.
    \end{align*}
    Combining these estimates gives

    \begin{align*}
        \sum_{t=1}^n (2\beta_t^{A_t}\sigma_{t-1}^{A_t}(x_t))^2
        &\le 16\lambda \sum_{A\in \widetilde \As_n}\overline\beta_A^2\gamma_{\halt(A)}^{A}\\
        &\lesssim \lambda \log(n)^2 \sum_{A\in \widetilde \As_n}\left(\frac{R^2}{\lambda}\log(1/\delta)+B^2+\varepsilon^2\frac{\halt(A)}{\lambda}\right)\\
        &\lesssim \lambda \log(n)^2 \left [\sum_{A\in \widetilde \As_n}\left(\frac{R^2}{\lambda}\log(1/\delta)+B^2\right)+\sum_{A\in \widetilde \As_n}\varepsilon^2\frac{\halt(A)}{\lambda}\right]\\
        &=\lambda \log(n)^2 |\widetilde \As_n|\left(\frac{R^2}{\lambda}\log(1/\delta)+B^2\right)+\varepsilon^2\log(n)^2\sum_{A\in \widetilde \As_n}\halt(A).
    \end{align*}
    Let $L=\lceil \log_2(n)/(b+m)\rceil$. By the proof of \cref{lem:totalreg}, no region deeper than $L$ can be created. Since the regions form a dyadic partition, every query $x_t$ belongs to at most one region of $\widetilde \As_n$ at each depth. Therefore,
    \begin{align*}
        \sum_{A\in \widetilde \As_n}\halt(A)
        &=\sum_{A\in \widetilde \As_n}\sum_{t=1}^n \ind{x_t\in A}\\
        &=\sum_{t=1}^n \sum_{A\in \widetilde \As_n}\ind{x_t\in A}\\
        &\le n(L+1)\lesssim n\log(n).
    \end{align*}
    Hence,
    \begin{align*}
        \sum_{t=1}^n (2\beta_t^{A_t}\sigma_{t-1}^{A_t}(x_t))^2
        &\lesssim \log(n)^2|\widetilde \As_n|\left(R^2\log(1/\delta)+\lambda B^2\right)+n\varepsilon^2\log(n)^3.
    \end{align*}
    
    We now apply \cref{lem:totalreg}, which ensures, for $b=\alpha$, $|\widetilde \As_n|\le 9^m n^\frac{m}{m+\alpha}$.  This results in the following regret bound:
    \begin{align*}
        R_n&\lesssim \sqrt{n\left(\log(n)^2|\widetilde \As_n|\left(R^2\log(1/\delta)+\lambda B^2\right)+n\varepsilon^2\log(n)^3\right)}+2n\varepsilon\\
        &\le 3^m\log(n)\,n^\frac{2m+\alpha}{2m+2\alpha}\sqrt{R^2\log(1/\delta)+\lambda B^2}+n\varepsilon\log(n)^{3/2}+2n\varepsilon.
    \end{align*}
\end{proof}

\subsection{Nested Classes and a Persistent Regret Floor}\label{sec:floor}

In statistical learning, when a simple model fails to adequately capture the true environment, the natural remedy is to increase the model's complexity. Consider a learner trying to optimize a true, unknown continuous reward function $f: \Xs \to \mathbb{R}$. By employing a sequence of increasingly expressive, nested linear function classes $\Fs_1 \subseteq \Fs_2 \subseteq \cdots$ (such as polynomial bases of increasing degree or Fourier features), universal approximation theorems guarantee that the uniform approximation error $\varepsilon_m(f) = \inf_{g \in \Fs_m} \|f - g\|_\infty$ will vanish as the algebraic dimension $m \to \infty$. Intuitively, increasing the complexity of our linear feature space should strictly improve our ability to learn the environment. Can we leverage this vanishing error in the sequential decision-making setting of stochastic bandits? In continuous or large action spaces, algorithms must rely on function approximation to generalize across arms. 
In the infinite-sample limit, it is natural to consider the behavior of the per-round expected regret.
Since the per-round regret may not converge, we consider 
the upper and lower asymptotic per-round regrets, defined respectively by the limit superior and limit inferior:
\[
\overline R_{\mathscr A}(f,\Fs_m) \coloneqq \limsup_{n\to\infty} \frac{R_n^{\mathscr A}(f,\Fs_m)}{n}, \quad \underline R_{\mathscr A}(f,\Fs_m) \coloneqq \liminf_{n\to\infty} \frac{R_n^{\mathscr A}(f,\Fs_m)}{n}.
\]
As the time horizon $n$ grows indefinitely, the statistical noise averages out, and the algorithm's performance is ultimately expected to be bottlenecked by the systematic bias $\varepsilon_m(f)$. Because this approximation error naturally disappears as we move up the nested hierarchy of linear spaces, one might reasonably hope that the asymptotic regret floor will vanish along with it.

To investigate whether this hope holds true, we turn to the workhorse of misspecified linear bandits: Misspecified Phased Elimination (e.g., Lattimore et al., 2020). Let $\mathscr A$ denote this standard algorithm. It operates in discrete epochs, maintaining an active set of plausible optimal actions. In each epoch, it queries a core set of exploratory actions—constructed via a G-optimal design over the active feature vectors—to bound the variance of its least-squares estimator. Crucially, to remain robust against the approximation error $\varepsilon_m(f)$, the algorithm artificially increases the size of its confidence intervals
by an additive factor of $\sqrt{m}\,\varepsilon_m(f)$ (here, quite generously, we allow the algorithm to know the misspecification level). 
By Remark~D.1 of \citet{lattimore2020learning}, we know that this specific algorithm achieves an upper asymptotic per-round regret bounded by the systematic misspecification:
\[
\overline R_{\mathscr A}(f,\Fs_m) \le C \sqrt{m} \,\varepsilon_m(f)
\]
where $C>0$ is a universal constant.%
\footnote{A small gap here is that the bound as stated in \citet{lattimore2020learning} assumes the algorithm also knows the horizon $n$, but this can be easily circumvented by a standard doubling trick.} 

Here, we immediately encounter a tension: the vanishing approximation error $\varepsilon_m(f)$ is amplified by $\sqrt{m}$, representing the statistical penalty for exploring a larger feature space.
The origin of the $\sqrt{m}$ factor is that the algorithm's confidence intervals must be wide enough to contain the true reward function.
Since the algorithm can measure $f$ at only finitely many points, it extrapolates to the rest of the action space through a least-squares estimator built from its linear model. In the worst case, the approximation error of the resulting estimate can be as large as $\sqrt{m}\,\varepsilon_m(f)$, explaining the additive inflation of the confidence width and, in turn, the extra $\sqrt{m}$ factor in the regret bound.

The above upper bound suggests that the asymptotic regret floor may not vanish, even as the approximation error $\varepsilon_m(f)$ goes to zero. Intuitively, 
when
\begin{align}
    \label{eq:paradox_condition}
c_f:=\liminf_{m\to\infty} \sqrt{m}\,\varepsilon_m(f) >0\,,
\end{align}
the algorithm's confidence intervals will never shrink below a positive threshold.
For polynomial features on an interval, the G-optimal designs used by phased elimination are Chebyshev-type designs \citep{kiefer1960equivalence,guest1958spacing}: they place support throughout the active interval rather than concentrating all mass near the empirically best arm.
Consequently, whenever the confidence floor keeps an interval of suboptimal arms active whose average gap is bounded away from zero, the algorithm keeps sampling such arms and incurs a non-vanishing regret floor.

One might first wonder whether the first ingredient of this obstruction, namely \cref{eq:paradox_condition}, can occur at all.
If it can, one may further ask whether it occurs for ordinary nested approximation classes, or only as a pathological possibility.
As it happens, \cref{eq:paradox_condition} holds already in a simple approximation-theoretic example: take $\Xs=[0,1]$, $f(x) = 1-\sqrt{x}$, so that the optimum action is at $x=0$, and let $\Fs_m$ be the space of polynomials of degree at most $m$.

By Jackson's Theorem for polynomial approximation, the uniform approximation error of an $\alpha$-Hölder continuous function $f$ on $[0,1]$ by a degree-$m$ polynomial scales as $\Os(1/m^\alpha)$.
Since $f(x) = 1 - \sqrt{x}$ is exactly $1/2$-Hölder continuous, the theorem yields $\varepsilon_m(f) \le C/m^{1/2}$. Furthermore, classical results by de La Vallée Poussin confirm that the best uniform polynomial approximation of $\sqrt{x}$ is bounded below by $c/\sqrt{m}$. Therefore:
\[
\varepsilon_m(f) = \Theta\left(\frac{1}{\sqrt{m}}\right)
\]
and thus \cref{eq:paradox_condition} is satisfied with $c_f = \liminf_{m\to\infty} \sqrt{m}\,\varepsilon_m(f) > 0$.

Thus, we have the following surprising conclusion: 
there exist natural reward functions $f$ and nested linear function classes $\Fs_m$ such that,
even though the approximation error $\varepsilon_m(f)$ vanishes as $m \to \infty$, for a reasonable bandit algorithm, the lower asymptotic per-round regret $\underline R_{\mathscr A}(f,\Fs_m)$ fails to vanish.

Note also that $f(x)=1-\sqrt{x}$, as a $1/2$-Hölder continuous function, is not particularly pathological.
Simple discretization-based bandit algorithms can achieve sublinear (vanishing per-round) regret on this problem.
Thus, the persistent regret floor is not an inherent property of the problem, but rather a consequence of the algorithm's reliance on linear function approximation and its associated confidence interval inflation.
This raises the question of whether algorithms that rely on linear function approximation are fundamentally limited in their ability to leverage vanishing approximation error:

\paragraph{Open Problem: The Universal Nested-Class Paradox}
Does there exist a sequence of nested linear feature maps $\phi_m : \Xs \to \mathbb{R}^m$ such that
for $m$ large enough, $\phi_m$ is injective, and such that, for any bandit algorithm $\mathscr A$ that is given access to the feature map $\phi_m$ and the misspecification error $\varepsilon_m(f)$ as input,
and which satisfies $R_n^{\mathscr A}(f,\Fs_m) = \widetilde{\Os}(m \sqrt{n}+ \sqrt{m}n \varepsilon_m(f))$ as $n\to\infty$ for all $f\in\Fs_m$,
there exists a bounded reward function $f:\Xs \to [0,1]$ satisfying:
\begin{enumerate}
    \item The uniform approximation error vanishes: $\lim_{m\to\infty} \varepsilon_m(f) = 0$.
    \item The lower asymptotic per-round regret fails to vanish: $\liminf_{m\to\infty}\underline R_{\mathscr A}(f,\Fs_m) > 0$.
\end{enumerate}
Above, $\Fs_m = \{ x \mapsto \langle w, \phi_m(x) \rangle : w \in \mathbb{R}^m \}$ and $\widetilde{\Os}$ hides logarithmic factors in $m$ and $n$.

Note that in nonparametric regression, series estimators — a form of linear function approximation — achieve minimax-optimal rates in both the $L^2$ and $L^\infty$ norms, up to logarithmic factors \citep{tsybakov2009introduction}, suggesting that any such limitation is specific to the bandit setting.


\newpage
\section*{NeurIPS Paper Checklist}

\begin{enumerate}

\item {\bf Claims}
    \item[] Question: Do the main claims made in the abstract and introduction accurately reflect the paper's contributions and scope?
    \item[] Answer: \answerYes{} 
    \item[] Justification: Proofs are provided for every claim made.
    \item[] Guidelines:
    \begin{itemize}
        \item The answer \answerNA{} means that the abstract and introduction do not include the claims made in the paper.
        \item The abstract and/or introduction should clearly state the claims made, including the contributions made in the paper and important assumptions and limitations. A \answerNo{} or \answerNA{} answer to this question will not be perceived well by the reviewers. 
        \item The claims made should match theoretical and experimental results, and reflect how much the results can be expected to generalize to other settings. 
        \item It is fine to include aspirational goals as motivation as long as it is clear that these goals are not attained by the paper. 
    \end{itemize}

\item {\bf Limitations}
    \item[] Question: Does the paper discuss the limitations of the work performed by the authors?
    \item[] Answer: \answerYes{} 
    \item[] Justification: We clearly state the assumptions behind any result. Moreover, comparison with the state of the art is performed, enhancing in which scenarios our theorem do not apply and previous work does.
    \item[] Guidelines:
    \begin{itemize}
        \item The answer \answerNA{} means that the paper has no limitation while the answer \answerNo{} means that the paper has limitations, but those are not discussed in the paper. 
        \item The authors are encouraged to create a separate ``Limitations'' section in their paper.
        \item The paper should point out any strong assumptions and how robust the results are to violations of these assumptions (e.g., independence assumptions, noiseless settings, model well-specification, asymptotic approximations only holding locally). The authors should reflect on how these assumptions might be violated in practice and what the implications would be.
        \item The authors should reflect on the scope of the claims made, e.g., if the approach was only tested on a few datasets or with a few runs. In general, empirical results often depend on implicit assumptions, which should be articulated.
        \item The authors should reflect on the factors that influence the performance of the approach. For example, a facial recognition algorithm may perform poorly when image resolution is low or images are taken in low lighting. Or a speech-to-text system might not be used reliably to provide closed captions for online lectures because it fails to handle technical jargon.
        \item The authors should discuss the computational efficiency of the proposed algorithms and how they scale with dataset size.
        \item If applicable, the authors should discuss possible limitations of their approach to address problems of privacy and fairness.
        \item While the authors might fear that complete honesty about limitations might be used by reviewers as grounds for rejection, a worse outcome might be that reviewers discover limitations that aren't acknowledged in the paper. The authors should use their best judgment and recognize that individual actions in favor of transparency play an important role in developing norms that preserve the integrity of the community. Reviewers will be specifically instructed to not penalize honesty concerning limitations.
    \end{itemize}

\item {\bf Theory assumptions and proofs}
    \item[] Question: For each theoretical result, does the paper provide the full set of assumptions and a complete (and correct) proof?
    \item[] Answer: \answerYes{} 
    \item[] Justification: All proofs are reported in the appendix.
    \item[] Guidelines:
    \begin{itemize}
        \item The answer \answerNA{} means that the paper does not include theoretical results. 
        \item All the theorems, formulas, and proofs in the paper should be numbered and cross-referenced.
        \item All assumptions should be clearly stated or referenced in the statement of any theorems.
        \item The proofs can either appear in the main paper or the supplemental material, but if they appear in the supplemental material, the authors are encouraged to provide a short proof sketch to provide intuition. 
        \item Inversely, any informal proof provided in the core of the paper should be complemented by formal proofs provided in appendix or supplemental material.
        \item Theorems and Lemmas that the proof relies upon should be properly referenced. 
    \end{itemize}

    \item {\bf Experimental result reproducibility}
    \item[] Question: Does the paper fully disclose all the information needed to reproduce the main experimental results of the paper to the extent that it affects the main claims and/or conclusions of the paper (regardless of whether the code and data are provided or not)?
    \item[] Answer: \answerNA{} 
    \item[] Justification: No experiments included.
    \item[] Guidelines:
    \begin{itemize}
        \item The answer \answerNA{} means that the paper does not include experiments.
        \item If the paper includes experiments, a \answerNo{} answer to this question will not be perceived well by the reviewers: Making the paper reproducible is important, regardless of whether the code and data are provided or not.
        \item If the contribution is a dataset and\slash or model, the authors should describe the steps taken to make their results reproducible or verifiable. 
        \item Depending on the contribution, reproducibility can be accomplished in various ways. For example, if the contribution is a novel architecture, describing the architecture fully might suffice, or if the contribution is a specific model and empirical evaluation, it may be necessary to either make it possible for others to replicate the model with the same dataset, or provide access to the model. In general. releasing code and data is often one good way to accomplish this, but reproducibility can also be provided via detailed instructions for how to replicate the results, access to a hosted model (e.g., in the case of a large language model), releasing of a model checkpoint, or other means that are appropriate to the research performed.
        \item While NeurIPS does not require releasing code, the conference does require all submissions to provide some reasonable avenue for reproducibility, which may depend on the nature of the contribution. For example
        \begin{enumerate}
            \item If the contribution is primarily a new algorithm, the paper should make it clear how to reproduce that algorithm.
            \item If the contribution is primarily a new model architecture, the paper should describe the architecture clearly and fully.
            \item If the contribution is a new model (e.g., a large language model), then there should either be a way to access this model for reproducing the results or a way to reproduce the model (e.g., with an open-source dataset or instructions for how to construct the dataset).
            \item We recognize that reproducibility may be tricky in some cases, in which case authors are welcome to describe the particular way they provide for reproducibility. In the case of closed-source models, it may be that access to the model is limited in some way (e.g., to registered users), but it should be possible for other researchers to have some path to reproducing or verifying the results.
        \end{enumerate}
    \end{itemize}

\item {\bf Open access to data and code}
    \item[] Question: Does the paper provide open access to the data and code, with sufficient instructions to faithfully reproduce the main experimental results, as described in supplemental material?
    \item[] Answer: \answerNA{} 
    \item[] Justification: No experiments.
    \item[] Guidelines:
    \begin{itemize}
        \item The answer \answerNA{} means that paper does not include experiments requiring code.
        \item Please see the NeurIPS code and data submission guidelines (\url{https://neurips.cc/public/guides/CodeSubmissionPolicy}) for more details.
        \item While we encourage the release of code and data, we understand that this might not be possible, so \answerNo{} is an acceptable answer. Papers cannot be rejected simply for not including code, unless this is central to the contribution (e.g., for a new open-source benchmark).
        \item The instructions should contain the exact command and environment needed to run to reproduce the results. See the NeurIPS code and data submission guidelines (\url{https://neurips.cc/public/guides/CodeSubmissionPolicy}) for more details.
        \item The authors should provide instructions on data access and preparation, including how to access the raw data, preprocessed data, intermediate data, and generated data, etc.
        \item The authors should provide scripts to reproduce all experimental results for the new proposed method and baselines. If only a subset of experiments are reproducible, they should state which ones are omitted from the script and why.
        \item At submission time, to preserve anonymity, the authors should release anonymized versions (if applicable).
        \item Providing as much information as possible in supplemental material (appended to the paper) is recommended, but including URLs to data and code is permitted.
    \end{itemize}

\item {\bf Experimental setting/details}
    \item[] Question: Does the paper specify all the training and test details (e.g., data splits, hyperparameters, how they were chosen, type of optimizer) necessary to understand the results?
    \item[] Answer: \answerNA{} 
    \item[] Justification: No experiments.
    \item[] Guidelines:
    \begin{itemize}
        \item The answer \answerNA{} means that the paper does not include experiments.
        \item The experimental setting should be presented in the core of the paper to a level of detail that is necessary to appreciate the results and make sense of them.
        \item The full details can be provided either with the code, in appendix, or as supplemental material.
    \end{itemize}

\item {\bf Experiment statistical significance}
    \item[] Question: Does the paper report error bars suitably and correctly defined or other appropriate information about the statistical significance of the experiments?
    \item[] Answer: \answerNA{} 
    \item[] Justification: No experiments.
    \item[] Guidelines:
    \begin{itemize}
        \item The answer \answerNA{} means that the paper does not include experiments.
        \item The authors should answer \answerYes{} if the results are accompanied by error bars, confidence intervals, or statistical significance tests, at least for the experiments that support the main claims of the paper.
        \item The factors of variability that the error bars are capturing should be clearly stated (for example, train/test split, initialization, random drawing of some parameter, or overall run with given experimental conditions).
        \item The method for calculating the error bars should be explained (closed form formula, call to a library function, bootstrap, etc.)
        \item The assumptions made should be given (e.g., Normally distributed errors).
        \item It should be clear whether the error bar is the standard deviation or the standard error of the mean.
        \item It is OK to report 1-sigma error bars, but one should state it. The authors should preferably report a 2-sigma error bar than state that they have a 96\% CI, if the hypothesis of Normality of errors is not verified.
        \item For asymmetric distributions, the authors should be careful not to show in tables or figures symmetric error bars that would yield results that are out of range (e.g., negative error rates).
        \item If error bars are reported in tables or plots, the authors should explain in the text how they were calculated and reference the corresponding figures or tables in the text.
    \end{itemize}

\item {\bf Experiments compute resources}
    \item[] Question: For each experiment, does the paper provide sufficient information on the computer resources (type of compute workers, memory, time of execution) needed to reproduce the experiments?
    \item[] Answer: \answerNA{} 
    \item[] Justification: No experiments.
    \item[] Guidelines:
    \begin{itemize}
        \item The answer \answerNA{} means that the paper does not include experiments.
        \item The paper should indicate the type of compute workers CPU or GPU, internal cluster, or cloud provider, including relevant memory and storage.
        \item The paper should provide the amount of compute required for each of the individual experimental runs as well as estimate the total compute. 
        \item The paper should disclose whether the full research project required more compute than the experiments reported in the paper (e.g., preliminary or failed experiments that didn't make it into the paper). 
    \end{itemize}
    
\item {\bf Code of ethics}
    \item[] Question: Does the research conducted in the paper conform, in every respect, with the NeurIPS Code of Ethics \url{https://neurips.cc/public/EthicsGuidelines}?
    \item[] Answer: \answerYes{} 
    \item[] Justification: Standard theory paper, nothing to declare.
    \item[] Guidelines:
    \begin{itemize}
        \item The answer \answerNA{} means that the authors have not reviewed the NeurIPS Code of Ethics.
        \item If the authors answer \answerNo, they should explain the special circumstances that require a deviation from the Code of Ethics.
        \item The authors should make sure to preserve anonymity (e.g., if there is a special consideration due to laws or regulations in their jurisdiction).
    \end{itemize}

\item {\bf Broader impacts}
    \item[] Question: Does the paper discuss both potential positive societal impacts and negative societal impacts of the work performed?
    \item[] Answer: \answerNA{} 
    \item[] Justification: This paper focuses on a well-known theoretical problem in optimizing with kernelized methods. There are no particular direct applications.
    \item[] Guidelines:
    \begin{itemize}
        \item The answer \answerNA{} means that there is no societal impact of the work performed.
        \item If the authors answer \answerNA{} or \answerNo, they should explain why their work has no societal impact or why the paper does not address societal impact.
        \item Examples of negative societal impacts include potential malicious or unintended uses (e.g., disinformation, generating fake profiles, surveillance), fairness considerations (e.g., deployment of technologies that could make decisions that unfairly impact specific groups), privacy considerations, and security considerations.
        \item The conference expects that many papers will be foundational research and not tied to particular applications, let alone deployments. However, if there is a direct path to any negative applications, the authors should point it out. For example, it is legitimate to point out that an improvement in the quality of generative models could be used to generate Deepfakes for disinformation. On the other hand, it is not needed to point out that a generic algorithm for optimizing neural networks could enable people to train models that generate Deepfakes faster.
        \item The authors should consider possible harms that could arise when the technology is being used as intended and functioning correctly, harms that could arise when the technology is being used as intended but gives incorrect results, and harms following from (intentional or unintentional) misuse of the technology.
        \item If there are negative societal impacts, the authors could also discuss possible mitigation strategies (e.g., gated release of models, providing defenses in addition to attacks, mechanisms for monitoring misuse, mechanisms to monitor how a system learns from feedback over time, improving the efficiency and accessibility of ML).
    \end{itemize}
    
\item {\bf Safeguards}
    \item[] Question: Does the paper describe safeguards that have been put in place for responsible release of data or models that have a high risk for misuse (e.g., pre-trained language models, image generators, or scraped datasets)?
    \item[] Answer: \answerNA{} 
    \item[] Justification: Same as before.
    \item[] Guidelines:
    \begin{itemize}
        \item The answer \answerNA{} means that the paper poses no such risks.
        \item Released models that have a high risk for misuse or dual-use should be released with necessary safeguards to allow for controlled use of the model, for example by requiring that users adhere to usage guidelines or restrictions to access the model or implementing safety filters. 
        \item Datasets that have been scraped from the Internet could pose safety risks. The authors should describe how they avoided releasing unsafe images.
        \item We recognize that providing effective safeguards is challenging, and many papers do not require this, but we encourage authors to take this into account and make a best faith effort.
    \end{itemize}

\item {\bf Licenses for existing assets}
    \item[] Question: Are the creators or original owners of assets (e.g., code, data, models), used in the paper, properly credited and are the license and terms of use explicitly mentioned and properly respected?
    \item[] Answer: \answerNA{} 
    \item[] Justification: No existing asset is used, related papers are quoted.
    \item[] Guidelines:
    \begin{itemize}
        \item The answer \answerNA{} means that the paper does not use existing assets.
        \item The authors should cite the original paper that produced the code package or dataset.
        \item The authors should state which version of the asset is used and, if possible, include a URL.
        \item The name of the license (e.g., CC-BY 4.0) should be included for each asset.
        \item For scraped data from a particular source (e.g., website), the copyright and terms of service of that source should be provided.
        \item If assets are released, the license, copyright information, and terms of use in the package should be provided. For popular datasets, \url{paperswithcode.com/datasets} has curated licenses for some datasets. Their licensing guide can help determine the license of a dataset.
        \item For existing datasets that are re-packaged, both the original license and the license of the derived asset (if it has changed) should be provided.
        \item If this information is not available online, the authors are encouraged to reach out to the asset's creators.
    \end{itemize}

\item {\bf New assets}
    \item[] Question: Are new assets introduced in the paper well documented and is the documentation provided alongside the assets?
    \item[] Answer: \answerNA{} 
    \item[] Justification: As before.
    \item[] Guidelines:
    \begin{itemize}
        \item The answer \answerNA{} means that the paper does not release new assets.
        \item Researchers should communicate the details of the dataset\slash code\slash model as part of their submissions via structured templates. This includes details about training, license, limitations, etc. 
        \item The paper should discuss whether and how consent was obtained from people whose asset is used.
        \item At submission time, remember to anonymize your assets (if applicable). You can either create an anonymized URL or include an anonymized zip file.
    \end{itemize}

\item {\bf Crowdsourcing and research with human subjects}
    \item[] Question: For crowdsourcing experiments and research with human subjects, does the paper include the full text of instructions given to participants and screenshots, if applicable, as well as details about compensation (if any)? 
    \item[] Answer: \answerNA{} 
    \item[] Justification: No crowdsourching.
    \item[] Guidelines:
    \begin{itemize}
        \item The answer \answerNA{} means that the paper does not involve crowdsourcing nor research with human subjects.
        \item Including this information in the supplemental material is fine, but if the main contribution of the paper involves human subjects, then as much detail as possible should be included in the main paper. 
        \item According to the NeurIPS Code of Ethics, workers involved in data collection, curation, or other labor should be paid at least the minimum wage in the country of the data collector. 
    \end{itemize}

\item {\bf Institutional review board (IRB) approvals or equivalent for research with human subjects}
    \item[] Question: Does the paper describe potential risks incurred by study participants, whether such risks were disclosed to the subjects, and whether Institutional Review Board (IRB) approvals (or an equivalent approval/review based on the requirements of your country or institution) were obtained?
    \item[] Answer: \answerNA{} 
    \item[] Justification: No human subjects.
    \item[] Guidelines:
    \begin{itemize}
        \item The answer \answerNA{} means that the paper does not involve crowdsourcing nor research with human subjects.
        \item Depending on the country in which research is conducted, IRB approval (or equivalent) may be required for any human subjects research. If you obtained IRB approval, you should clearly state this in the paper. 
        \item We recognize that the procedures for this may vary significantly between institutions and locations, and we expect authors to adhere to the NeurIPS Code of Ethics and the guidelines for their institution. 
        \item For initial submissions, do not include any information that would break anonymity (if applicable), such as the institution conducting the review.
    \end{itemize}

\item {\bf Declaration of LLM usage}
    \item[] Question: Does the paper describe the usage of LLMs if it is an important, original, or non-standard component of the core methods in this research? Note that if the LLM is used only for writing, editing, or formatting purposes and does \emph{not} impact the core methodology, scientific rigor, or originality of the research, declaration is not required.
    \item[] Answer: \answerNA{} 
    \item[] Justification: \item[] Guidelines:
    \begin{itemize}
        \item The answer \answerNA{} means that the core method development in this research does not involve LLMs as any important, original, or non-standard components.
        \item Please refer to our LLM policy in the NeurIPS handbook for what should or should not be described.
    \end{itemize}

\end{enumerate}

\end{document}